\documentclass[11pt,letterpaper]{article}
\usepackage[T1]{fontenc}
\usepackage[textwidth=6.1in,textheight=9in,centering]{geometry}
\usepackage{times}
\usepackage[round,authoryear]{natbib}
\usepackage{titlesec}
\usepackage{needspace,etoolbox}
\titleformat{\section}{\large\bfseries}{\thesection}{0.7em}{}
\titleformat{\subsection}{\normalsize\bfseries}{\thesubsection}{0.7em}{}
\titleformat{\subsubsection}{\normalsize\itshape}{\thesubsubsection}{0.7em}{}
\usepackage{amsmath,amssymb,amsthm,mathtools,bm,mathrsfs}
\usepackage{aliascnt}
\usepackage{booktabs,array,microtype,multirow}
\usepackage{graphicx}
\usepackage{xcolor}
\usepackage{enumitem}
\usepackage[colorlinks=true,linkcolor=blue!55!black,citecolor=blue!55!black,urlcolor=blue!55!black]{hyperref}
\usepackage[nameinlink,noabbrev]{cleveref}
\usepackage{url}
\usepackage{wrapfig}
\usepackage{float}
\usepackage{placeins}
\usepackage[font=small,labelfont=bf]{caption}
\newcommand{\R}{\mathbb R}
\newcommand{\Sph}{\mathbb S}
\newcommand{\dd}{\,\mathrm d}
\newcommand{\1}{\mathbf 1}
\newcommand{\relu}{\sigma}
\newcommand{\cL}{\mathcal L}
\newcommand{\cX}{\mathcal X}
\newcommand{\cP}{\mathcal P}

\newcommand{\proj}{\mathsf P}
\newcommand{\E}{\mathbb E}

\newcommand{\diver}{\operatorname{div}}

\newcommand{\Xfun}{\mathsf X}
\newcommand{\err}{\operatorname{Err}}

\newtheorem{theorem}{Theorem}[section]
\newaliascnt{proposition}{theorem}
\newtheorem{proposition}[proposition]{Proposition}
\aliascntresetthe{proposition}
\newaliascnt{lemma}{theorem}
\newtheorem{lemma}[lemma]{Lemma}
\aliascntresetthe{lemma}
\newaliascnt{corollary}{theorem}
\newtheorem{corollary}[corollary]{Corollary}
\aliascntresetthe{corollary}
\newaliascnt{assumption}{theorem}

\aliascntresetthe{assumption}
\theoremstyle{definition}
\newaliascnt{definition}{theorem}

\aliascntresetthe{definition}
\theoremstyle{remark}
\newaliascnt{remark}{theorem}
\newtheorem{remark}[remark]{Remark}
\aliascntresetthe{remark}
\BeforeBeginEnvironment{theorem}{\Needspace{7\baselineskip}}
\BeforeBeginEnvironment{lemma}{\Needspace{5\baselineskip}}
\BeforeBeginEnvironment{proposition}{\Needspace{5\baselineskip}}
\BeforeBeginEnvironment{corollary}{\Needspace{5\baselineskip}}
\crefname{theorem}{theorem}{theorems}
\crefname{proposition}{proposition}{propositions}
\crefname{lemma}{lemma}{lemmas}
\crefname{corollary}{corollary}{corollaries}
\crefname{assumption}{assumption}{assumptions}
\crefname{definition}{definition}{definitions}
\crefname{remark}{remark}{remarks}
\Crefname{theorem}{Theorem}{Theorems}
\Crefname{proposition}{Proposition}{Propositions}
\Crefname{lemma}{Lemma}{Lemmas}
\Crefname{corollary}{Corollary}{Corollaries}
\Crefname{assumption}{Assumption}{Assumptions}
\Crefname{definition}{Definition}{Definitions}
\Crefname{remark}{Remark}{Remarks}

\title{Hidden Gauge Controls Feature Specialization\\in ReLU Networks}
\author{Tongxi Wang\\School of Future Technology, Southeast University\\\texttt{tongxi\_wang@seu.edu.cn}}
\date{}
\hypersetup{pdftitle={Hidden Gauge Controls Feature Specialization in ReLU Networks},pdfauthor={Tongxi Wang},pdfsubject={Feature specialization and initialization in ReLU networks}}

\begin{document}
\begingroup
\centering
{\LARGE\bfseries Hidden Gauge Controls Feature Specialization\par
in ReLU Networks\par}
\vspace{1.5em}
{\large Tongxi Wang\par}
\vspace{0.4em}
School of Future Technology, Southeast University\par
\href{mailto:tongxi_wang@seu.edu.cn}{\texttt{tongxi\_wang@seu.edu.cn}}\par
\vspace{1.4em}
\endgroup
\begin{abstract}
The success of deep learning depends on learning useful representations, yet predicting how training organizes these representations across neurons remains difficult. In this work, we show that changing the scale of initial weights can determine which neurons learn a feature without altering any neuron's initial contribution. We construct ReLU networks with identical initial features and predictions that reach the same final predictions with different roles for their neurons. In one, all neurons share the learned feature. In the other, one neuron acquires it while every other neuron's contribution vanishes. The only change is the relative scale of each neuron's input and output weights. Our analysis explains how an initial learning advantage persists through convergence: as one neuron learns the target, it reduces the error driving the others and limits their subsequent adaptation. We prove this outcome in a nonlinear model under gradient flow and small-step gradient descent, and quantify how scale changes the speed and path of feature learning. Experiments verify the predicted dynamics and show that scale also changes feature assignment when two features compete. These results reveal how initialization can control the organization of a learned representation without changing what the network initially represents.
\end{abstract}

\raggedbottom
\section{Introduction}
Deep learning has made the acquisition of useful representations a central part of artificial intelligence \citep{lecun2015}. These representations emerge from the joint adaptation of many neurons, although much of a trained network can often be removed with little loss in predictive accuracy \citep{frankle2019,liu2019pruning}. This raises a natural question about how learning uses a network's capacity: what makes some neurons acquire useful features while others become redundant?

Empirical studies of pruning offer seemingly different answers about the importance of the starting weights. \citet{frankle2019} find sparse subnetworks that train successfully when reset to their original initial weights, whereas random reinitialization degrades their performance. In structured pruning experiments, however, \citet{liu2019pruning} find that training the retained architecture from random initialization can match or outperform fine-tuning the inherited weights. The usefulness of a retained architecture and that of its particular weights need not coincide.

These observations motivate an important distinction between recovering a network's accuracy and explaining how its components became useful. A reinitialized network may learn the task through a different internal representation. To understand which neurons carry that representation, we need to follow how initialization affects their learning. Starting weights determine both the features already represented and how those features adapt as the neurons reduce a shared prediction error. A neuron's eventual importance could therefore reflect an advantage in its initial feature or in its ability to adapt. To isolate the role of adaptation, we hold the architecture, data, and training rule fixed and ask whether neurons with identical initial features and contributions can still acquire different roles.

ReLU networks admit exactly this comparison. Multiplying a neuron's output weight by a positive factor and dividing its input weight by the same factor preserves its feature direction and its contribution to the network. This rescaling freedom is the gauge symmetry in our title. Gradient descent updates the two weights separately, so their relative scales affect how the neuron learns. \citet{kunin2024} show that this imbalance can accelerate feature learning.

Our main result is that changing only the relative weight scales can determine whether a feature is shared or learned by a single neuron. Our construction starts all neurons with identical features and contributions. When their relative scales also agree, they remain identical and share the learned feature equally. Giving one neuron a sufficiently large output weight relative to its input weight, and the others the opposite imbalance, instead makes that neuron learn the feature while every other coefficient tends to zero (\cref{fig:same-predictor}). Interestingly, both networks converge to the same prediction function. Thus, even identical initial and final predictions can conceal a different allocation of the learned feature across neurons.

\begin{figure}[tbp]
\centering
\includegraphics[width=0.95\linewidth]{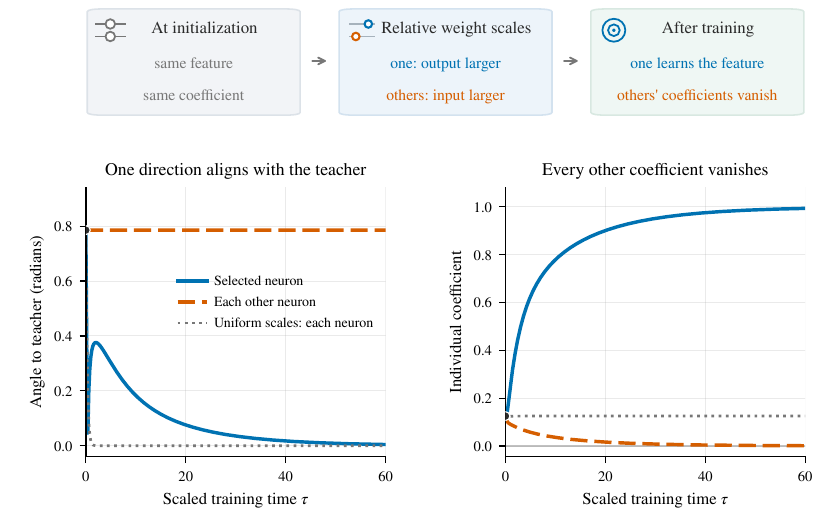}
\caption{Identical initial features, different roles after training. Eight students start with the same direction and coefficient $1/8$. With the same relative scale across all students (gray dotted), all learn the teacher and retain equal coefficients. With a larger output weight relative to the input weight in one student (blue), and the opposite imbalance in the other seven (orange dashed), only the blue direction aligns and every orange coefficient vanishes. Each cohort's curves coincide by symmetry. These are population gradient flows with imbalance magnitude $D=32$, where $D=|a^2-\|w\|^2|$ for output weight $a$ and input vector $w$. The coefficient is $q=a\|w\|$ and scaled time is $\tau=Dt$, where $t$ is training time.}
\label{fig:same-predictor}
\end{figure}

We establish this result in a Gaussian model with one teacher feature and duplicate student initializations, under the angle condition of \cref{thm:main-capture}. An early learning advantage alone does not guarantee this outcome, since slower neurons could continue to align and eventually share the feature. The proof shows why their remaining adaptation is limited. In the large-imbalance limit, the designated direction rotates toward the teacher while the other directions remain fixed. We prove convergence of this limiting system and then follow the trajectories at finite imbalance. As the selected student fits the target, the prediction error decays, limiting the entire subsequent rotation of the others. Their separation from the teacher forces their coefficients to vanish individually. Redundancy in this construction is therefore produced by the learning dynamics even though the neurons begin with the same feature.

Single-neuron analysis gives a quadratic separation in specialization times and inequivalent learning paths. Stability estimates extend finite-time selection to nearby initializations and the full convergence result to small-step population gradient descent. Experiments verify the predicted specialization and coefficient decay. With two competing features, they further show changes in assignment among nonduplicate students when only their scales change. Our results connect function-preserving choices at initialization to the organization of the learned representation.

\FloatBarrier

\raggedbottom
\section{Related work}
\paragraph{Scale and feature learning.} \citet{kunin2024} solve a minimal linear model and analyze how relative layer scales affect nonlinear feature learning. Their signed radial dynamics in Appendix C.2, Eq.~(131), give our coefficient and direction equations when both learning rates equal one.

\paragraph{Teacher--student networks.} Teacher--student analyses include population gradients and critical points \citep{tian2017}, specialization at finite width and dimension \citep{tian2020}, local convergence near over-realized solutions \citep{zhou2021}, and regularized recovery \citep{akiyama2021}. Overparameterization also affects the loss landscape and convergence speed \citep{safran2021,xudu2023}. ReLU identifiability results characterize equivalent parameters up to permutation and positive rescaling \citep{bonapellissier2022,bonapellissier2023}. Our local analysis uses a quantitative Gram-matrix condition to identify individual coefficients.

\paragraph{Symmetry and conservation.} Gradient flow in homogeneous networks preserves differences of squared layer norms \citep{du2018,marcotte2023}, and initialization dependence also appears in adaptive-knot models \citep{williams2019}. Path-lifting gives an intrinsic description of arbitrary-depth ReLU gradient flow for a dense set of initializations \citep{marcotte2025intrinsic}. Related optimization methods choose a better-conditioned parameterization \citep{lebeurrier2026}, encourage balance \citep{terin2026}, or use a geometry invariant to node rescaling \citep{neyshabur2015}. We keep Euclidean training fixed and use the conserved imbalance to compare trajectories from the same predictor.

\paragraph{Kernel and distributional dynamics.} Kernel analyses describe training near a fixed linearization \citep{jacot2018,chizat2019} and the evolution of features beyond that regime \citep{atanasov2022}. Infinite-width parameterizations \citep{yanghu2021}, mean-field dynamics \citep{mei2019}, and wide and deep linear models \citep{bordelon2025} further relate initialization to learning. Our analysis follows individual neurons through convergence at finite width.

\FloatBarrier

\section{Theory}
\label{sec:theory}
The main result concerns which neurons learn the target feature and which become redundant. To explain this outcome, we separate changes in a neuron's feature direction from changes in its coefficient. The single-neuron analysis identifies their relative speeds, and the multi-neuron analysis shows how interaction through a shared prediction error turns this difference into specialization.

\paragraph{Notation.} Throughout, $d\ge2$ is the input dimension, $\|\cdot\|$ is the Euclidean norm (the operator norm for matrices), and $\Sph^{d-1}$ is the unit sphere.  We write $[v]_+=\max\{v,0\}$ and $\angle(s,v)=\arccos(s^\top v)$ for unit vectors.  A dot denotes differentiation in training time $t$. We use $O(\cdot)$ for an upper bound and $\Theta(\cdot)$ for upper and lower bounds of the same order, with fixed parameters unless specified otherwise.

\subsection{Exact coefficient--direction dynamics}
Let $\Theta=((a_i,w_i))_{i=1}^m$ collect the output weights $a_i\in\R$ and input vectors $w_i\in\R^d$ of $m$ neurons.  For inputs $x$ with finite second moment and a square-integrable target $y(x)$, consider the predictor and population square loss
\begin{equation}
\label{eq:main-model}
 f_\Theta(x)=\sum_{i=1}^m a_i[w_i^\top x]_+,
 \qquad
 \cL(\Theta)=\frac12\E\!\left[(f_\Theta(x)-y(x))^2\right].
\end{equation}
Expectations are over the input distribution, which is assumed to give zero mass to the relevant ReLU switching hyperplanes. To describe the geometry of learning, we separate each neuron's coefficient, direction, and scale. For $w_i\ne0$, write
\begin{equation}
\label{eq:main-coordinates}
 r_i=\|w_i\|,
 \qquad s_i=\frac{w_i}{r_i},
 \qquad q_i=a_ir_i,
 \qquad \delta_i=a_i^2-r_i^2.
\end{equation}
The neuron's function is determined by $q_i$ and $s_i$, through $q_i[s_i^\top x]_+$. The remaining coordinate $\delta_i$ distinguishes parameterizations of that function. Let $R_\Theta=f_\Theta-y$ be the prediction residual, $I=I_d$ the identity matrix, and $\1_A$ the indicator of an event $A$. Define the residual averages and rate factors
\begin{equation}
\label{eq:main-mobilities}
\begin{aligned}
 B_\Theta(s)&=\E\!\left[R_\Theta(x)[s^\top x]_+\right],
 &\mathcal T_\Theta(s)&=\E\!\left[R_\Theta(x)\1_{\{s^\top x>0\}}x\right],\\
 \mu(q,\delta)&=\sqrt{\delta^2+4q^2},
 &\chi(q,\delta)&=\frac{2q}{\mu(q,\delta)-\delta}.
\end{aligned}
\end{equation}
Under Euclidean gradient flow $\dot\Theta=-\nabla_\Theta\cL$, and as long as $w_i\ne0$, these variables obey the exact system
\begin{equation}
\label{eq:main-reduction}
 \dot q_i=-\mu_i B_\Theta(s_i),
 \qquad
 \dot s_i=-\chi_i(I-s_is_i^\top)\mathcal T_\Theta(s_i),
 \qquad
 \dot\delta_i=0,
\end{equation}
where $\mu_i=\mu(q_i,\delta_i)$ and $\chi_i=\chi(q_i,\delta_i)$.  For $q_i\ne0$, this is the signed radial--directional system of \citet[Appendix C.2, Eq.~(131)]{kunin2024}: their $(\mu_i,\hat\beta_i)$ correspond to our $(q_i,s_i)$ when both layer learning rates equal one. The same coordinates separate the loss decrease into changes in coefficients and directions:
\begin{equation}
\label{eq:main-dissipation}
 -\dot\cL
 =\sum_i\left[
 \underbrace{\mu_iB_\Theta(s_i)^2}_{\text{coefficient change}}
 +\underbrace{\frac{\mu_i+\delta_i}{2}
 \bigl\|(I-s_is_i^\top)\mathcal T_\Theta(s_i)\bigr\|^2}_{\text{direction change}}
 \right].
\end{equation}
This expression separates fitting through coefficient adjustment from fitting through feature rotation. The conserved imbalance sets their relative rates, as derived in Appendix~\ref{app:marked}.

Notice that the coefficient rate is unchanged when the sign of the imbalance is reversed. At a shared $(q,s)$ with $q\ne0$ and the same residual, the two parameterizations with $\delta=\pm D$ have identical coefficient velocities, but their direction-update factors satisfy
\[
 \frac{\chi(q,+D)}{\chi(q,-D)}
 =\left(\frac{\sqrt{D^2+4q^2}+D}{2q}\right)^2
 =\Theta(D^2/q^2).
\]
Thus the same functional state can evolve at very different angular speeds. To understand the resulting trajectories, we must also follow the changing coefficients and residual.

For the Gaussian model below, let $\gamma_d=\mathcal N(0,I_d)$, with $\|f\|_{L^2(\gamma_d)}^2=\E_{x\sim\gamma_d}[f(x)^2]$, and fix a target direction $v_\star\in\Sph^{d-1}$.  We say that unit $i$ specializes to $v_\star$ if $s_i(t)\to v_\star$ and $\liminf_{t\to\infty}|q_i(t)|>0$.  We use coefficient-wise pruning to mean $q_j(t)\to0$ for every redundant unit $j$.

\subsection{Function-preserving scaling and specialization time}
\label{sec:single}
We begin with a single student, for which the residual is determined by its coefficient and alignment with the teacher. This gives a closed nonlinear system that lets us compare the full trajectories from a common initial function. We use Euclidean gradient flow throughout; \cref{prop:deep-node-mobility,rem:gauge-equivariant-updates,rem:weight-decay-marked} discuss rescaling in deeper networks, scale-equivariant updates, and weight decay.

Let $x\sim\mathcal N(0,I_d)$, $q_\star>0$, $\|s_\star\|=1$, $f_\star(x)=q_\star[s_\star^\top x]_+$, and $c=s^\top s_\star$.  Here $c$ is the alignment of the student direction with the teacher.  With the arc-cosine kernel
\[
 \kappa(c)=\frac{\sqrt{1-c^2}+(\pi-\arccos c)c}{2\pi},
\]
the residual averages depend only on $q$ and $c$, giving the closed system
\begin{equation}
\label{eq:main-single}
 \dot q=-\mu(q,\delta)\left(\frac q2-q_\star\kappa(c)\right),
 \qquad
 \dot c=\chi(q,\delta)q_\star\kappa'(c)(1-c^2),
 \qquad
 \dot\delta=0.
\end{equation}

The coefficient nullcline $q=2q_\star\kappa(c)$ is the set on which $\dot q=0$. We compare trajectories in the coefficient--alignment plane $(q,c)$. For a common initial state $(q_0,c_0)$ with $q_0>0$ and $-1<c_0<1$, let $(q_\delta,c_\delta)$ denote the solution with imbalance $\delta$. Given an alignment tolerance $0<\varepsilon<1-c_0$, define the specialization time
\[
 T_\delta(\varepsilon)=\inf\{t\ge0:c_\delta(t)\ge1-\varepsilon\}.
\]
The next theorem gives matching orders in both imbalance and accuracy. Explicit upper and lower constants are given in \cref{thm:global-specialization,thm:D2-gap}.

\begin{theorem}
\label{thm:main-gap}
For every fixed $\delta\in\mathbb R$, $(q_\delta(t),c_\delta(t))\to(q_\star,1)$. For all sufficiently large $D$, uniformly over $0<\varepsilon\le(1-c_0)/2$,
\begin{equation}
\label{eq:main-gap-plus}
 T_{+D}(\varepsilon)=\Theta\!\left(D^{-1}\log(1/\varepsilon)\right),\qquad
 T_{-D}(\varepsilon)=\Theta\!\left(D\log(1/\varepsilon)\right).
\end{equation}
Moreover, uniformly over the full range $0<\varepsilon<1-c_0$,
\begin{equation}
\label{eq:main-gap}
 T_{-D}(\varepsilon)/T_{+D}(\varepsilon)=\Theta(D^2).
\end{equation}
The threshold on $D$ and the constants in these two-sided bounds depend only on $q_0,c_0,q_\star$.
\end{theorem}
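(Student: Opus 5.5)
The plan works directly with the planar system \eqref{eq:main-single} for a fixed $\delta$. I would first collect the facts about $\kappa$ that everything else uses. On $[-1,1]$, $\kappa'(c)=(\pi-\arccos c)/(2\pi)$, so $\kappa'>0$ on $(-1,1]$, $\kappa'(1)=1/2$, $\kappa(1)=1/2$, and $\kappa>0$ on $(-1,1]$. It follows that $c$ is nondecreasing whenever $q>0$, because $\chi(q,\delta)$ has the sign of $q$. Positivity of $q$ is preserved: at $q=0$ we have $\dot q=\mu\, q_\star\kappa(c)>0$, using $\mu(0,\delta)=|\delta|$; the degenerate case $\delta=0$ is handled by $\dot q\ge -\mu q/2$ with $\mu=2|q|$. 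Hence $c$ increases monotonically to some limit $c_\infty\le1$. Meanwhile $q$ is attracted to the moving nullcline $2q_\star\kappa(c)\in(2q_\star\kappa(c_0),q_\star]$, so $q$ is eventually trapped in a compact interval $[q_{\min},q_{\max}]$ with $q_{\min}>0$. On that interval $\chi$ is bounded below, so $\dot c\ge C(1-c^2)$. This forces $c_\infty=1$, and then $q\to2q_\star\kappa(1)=q_\star$, which proves the convergence claim.

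For the time bounds I would use $\chi(q,\delta)$ together with the trapping interval. For fixed $q\in[q_{\min},q_{\max}]$, the formula $\chi(q,D)=(\mu+D)/(2q)$ gives $\chi(q,+D)=\Theta(D)$, and $\chi(q,-D)=2q/(\mu+D)=\Theta(1/D)$. The trapping interval must be shown to be uniform in large $D$, with bounds depending only on $(q_0,c_0,q_\star)$. For this I note that $q$ always lies between $\min(q_0,2q_\star\kappa(c_0))$ and $\max(q_0,q_\star)$, since the nullcline is increasing and $c$ is monotone: below the nullcline $q$ increases, and above it $q$ decreases. Given this, the $c$-equation becomes $\dot c=\Theta(\chi)\,(1-c)$ near $c=1$, with $\kappa'(c)(1+c)$ bounded between constants on $[c_0,1]$. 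Integrating $dc/[(1-c^2)\kappa'(c)]$ from $c_0$ to $1-\varepsilon$ gives $\Theta(\log(1/\varepsilon))$ uniformly for $\varepsilon\le(1-c_0)/2$, since the integral is then bounded below by a positive constant times $\log$. Dividing by the range of $\chi$ yields $T_{+D}=\Theta(D^{-1}\log(1/\varepsilon))$ and $T_{-D}=\Theta(D\log(1/\varepsilon))$.

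The ratio bound \eqref{eq:main-gap} on the full range $0<\varepsilon<1-c_0$ needs care for $\varepsilon$ close to $1-c_0$, where both times tend to $0$ and the $\log$ form degenerates. Here I would avoid separate asymptotics and compare the two times directly through the common integral. Because $dt=dc/\bigl(\chi\,q_\star\kappa'(c)(1-c^2)\bigr)$ and $\chi(q,\pm D)$ is, up to constants, $D^{\pm1}$ times a function bounded above and below on the trapping interval, we get
\[
T_{\pm D}(\varepsilon)=\Theta(D^{\mp1})\int_{c_0}^{1-\varepsilon}\frac{dc}{\kappa'(c)(1-c^2)}.
\]
The ratio is then $\Theta(D^2)$, with the same integral cancelling for every $\varepsilon$.

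I expect the main obstacle to be keeping $q$ uniformly away from $0$ and $\infty$ along both trajectories with constants independent of $D$. The trapping argument above should settle this, because the nullcline comparison uses only the sign of $\dot q$ and not its rate $\mu$. I would check this carefully in the case $\delta=-D$, where $c$ moves slowly but $q$ relaxes fast, and confirm that the sign argument still applies there.
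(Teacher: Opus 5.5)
Your proposal is correct and follows essentially the paper's route. The nullcline sign argument gives the $\delta$-independent invariant interval $[\min\{q_0,2q_\star\kappa(c_0)\},\max\{q_0,q_\star\}]$, from which you get monotone $c\nearrow1$, $q\to q_\star$, and $\chi(q,\pm D)=\Theta(D^{\pm1})$ uniformly for large $D$. The time bounds then come from the hitting-time integral $T_\delta(\varepsilon)=\int_{c_0}^{1-\varepsilon}\mathrm{d}c\big/\bigl(\chi(q_\delta(c),\delta)\,q_\star\kappa'(c)(1-c^2)\bigr)$, and the full-range ratio bound holds because both times are sandwiched by the same $c$-integral, exactly as in the paper's proof.
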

\begin{corollary}
\label{cor:main-no-clock}
For every $D>0$, $q_0>0$, and $-1<c_0<1$, the trajectories of \eqref{eq:main-single} with $\delta=+D$ and $\delta=-D$ and common initial state $(q_0,c_0)$ cannot agree on any nontrivial interval after a strictly increasing $C^1$ reparameterization of time.  Away from $q/2=q_\star\kappa(c)$ the two vector fields are non-collinear.  On this nullcline they have the same tangent direction, but their phase curves have different curvature.
\end{corollary}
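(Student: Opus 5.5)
The plan is to work directly with the planar vector fields of \eqref{eq:main-single}. Write $g(q,c)=q/2-q_\star\kappa(c)$ and $h(c)=q_\star\kappa'(c)(1-c^2)$, so that for $\delta=\pm D$ the field is $F_\pm=(-\mu g,\ \chi_\pm h)$. Here $\mu=\mu(q,\pm D)$ is the same for both signs, and $\chi_\pm=\chi(q,\pm D)$.

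\textbf{Preliminary invariance facts.} I would first record what holds along both trajectories.
\begin{itemize}
\item Differentiating $\kappa$ gives $\kappa'(c)=(\pi-\arccos c)/(2\pi)$, which is positive on $(-1,1]$. Hence $h>0$ for $c\in(-1,1)$.
\item The interval $(-1,1)$ is invariant for $c$, because of the factor $1-c^2$ and uniqueness of solutions.
\item At $q=0$ we have $\dot q=Dq_\star\kappa(c)>0$, since $\kappa>0$ for $c>-1$. So $q$ stays positive.
\end{itemize}
Thus both trajectories remain in $\{q>0,\,-1<c<1\}$. On this region the displayed ratio formula gives $\chi_+/\chi_-=\bigl((\sqrt{D^2+4q^2}+D)/(2q)\bigr)^2>1$, so $\chi_+\neq\chi_-$ everywhere along the curves.

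\textbf{Non-collinearity.} The determinant of $(F_+,F_-)$ is $\mu g h(\chi_+-\chi_-)$. This is nonzero exactly when $g\ne0$, which gives the second claim.

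\textbf{No reparameterization.} Suppose $(q_-(t),c_-(t))=(q_+(\varphi(t)),c_+(\varphi(t)))$ on an interval, with $\varphi'>0$. Differentiating gives $F_-(p)=\varphi'(t)F_+(p)$ at every point $p$ of the common arc.
\begin{itemize}
\item Off the nullcline, the $q$-components force $\varphi'=1$, while the $c$-components force $\varphi'=\chi_-/\chi_+\neq1$. This is a contradiction.
\item So the common arc must lie entirely on the nullcline $g=0$. Along either flow, however, $\tfrac{d}{dt}g=\dot q/2-q_\star\kappa'(c)\dot c$. On the nullcline $\dot q=0$, so this equals $-q_\star\kappa'(c)\chi h$, which is strictly negative.
\end{itemize}
Hence each trajectory crosses the nullcline transversally and meets it only at isolated times, never on an interval. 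Continuity in $t$ then gives the contradiction.

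\textbf{Curvature.} On the nullcline both fields equal $(0,\chi_\pm h)$, so they share the tangent direction $(0,1)$. Near such a point $\dot c>0$, so I would parameterize each phase curve by $c$, writing $q'(c)=-\mu g/(\chi_\pm h)$. At a nullcline point $q'=0$. Using $\partial_c g=q'/2-q_\star\kappa'$, this gives
\[
q''(c)=\frac{\mu q_\star\kappa'(c)}{\chi_\pm h(c)}.
\]
Since $q'=0$ there, the curvature equals $|q''|$. The two curvatures therefore differ by the factor $\chi_+/\chi_-\neq1$.

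\textbf{Main obstacle.} The only delicate step is excluding coincidence along the nullcline itself, where the fields are collinear. The transversality computation for $g$ handles this, so I expect no real difficulty beyond checking signs.
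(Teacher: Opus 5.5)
Your proof is correct and follows essentially the same route as the paper. Non-collinearity off the nullcline forces any reparameterized common arc onto $q/2=q_\star\kappa(c)$; your transversality computation $\tfrac{d}{dt}g=-q_\star\kappa'(c)\chi h<0$ there is the time-domain form of the paper's observation that the phase curves are flat on the nullcline while $q=2q_\star\kappa(c)$ has slope $2q_\star\kappa'(c)>0$; and your formula $q''(c)=\mu/(\chi_\pm(1-c^2))$ is the paper's curvature computation, giving $Q_-''-Q_+''=\mu D/(q(1-c^2))$ via $\chi_+\chi_-=1$.
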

The two imbalances therefore yield the same learned feature through different sequences of learning, as illustrated in \cref{fig:single-paths}. With negative imbalance, the coefficient first relaxes while the direction remains nearly fixed; positive imbalance allows rapid alignment. This difference persists on the coefficient nullcline: a common phase curve would have to remain on the nullcline with constant coefficient, which is impossible (\cref{cor:not-time-rescaling}).

\begin{figure}[tbp]
\centering
\includegraphics[width=0.95\linewidth]{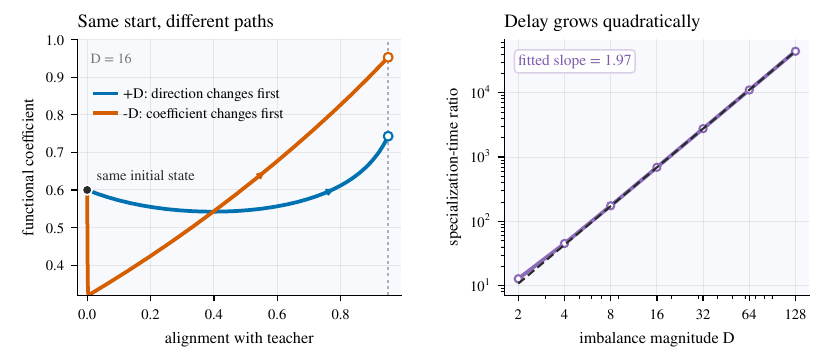}
\caption{Opposite imbalances change the path and speed of single-student learning. Left: coefficient--alignment trajectories from a common initial state, at $D=16$. Right: the ratio of specialization times grows quadratically with $D$. The dashed line has slope two.}
\label{fig:single-paths}
\end{figure}

\subsection{Feature assignment among duplicate neurons}
\label{sec:duplicates}
The single-student result establishes a difference in adaptation rates. We now show how this difference produces the allocation in \cref{fig:same-predictor} when students interact through the same residual. We initialize them with identical functional contributions and change only their parameter scales. Let the teacher be $f_\star(x)=[u^\top x]_+$ with $\|u\|=1$, and initialize $m\ge2$ students with the same coefficient and feature direction:
\[
 q_i(0)=\frac1m,
 \qquad s_i(0)=s_0,
 \qquad 0<\phi=\angle(s_0,u)\le\pi/3.
\]
For uniform constants, fix any compact interval $I_\phi\subset(0,\pi/3]$
containing $\phi$.  Set $\delta_k=+D$ for one designated index $k$ and $\delta_j=-D$ for every $j\ne k$.
For comparison, if all imbalances equal a common $\delta$, symmetry keeps the students identical. Their total coefficient $Q=mq_i$ and alignment $c=s_i^\top u$ obey \eqref{eq:main-single} with imbalance $m\delta$ and teacher coefficient one. Thus \cref{thm:main-gap} gives $q_i\to1/m$ and $s_i\to u$ for every student. The mixed scale assignment changes this shared representation into the following one.

Write the contribution of the redundant units as
\[
 f_{\rm red}(x,t):=\sum_{j\ne k}q_j(t)[s_j(t)^\top x]_+.
\]

\begin{theorem}
\label{thm:main-capture}
Consider Euclidean gradient flow on the population loss \eqref{eq:main-model} with $x\sim\mathcal N(0,I_d)$, $y=f_\star$, and the initialization above.  For every fixed $m\ge2$ and $\phi\in I_\phi$, there are $D_0(m),C_m,c_m>0$, independent of $D$, such that for every $D\ge D_0(m)$,
\[
 q_k(t)\to1,\qquad s_k(t)\to u,\qquad q_j(t)\to0\quad(j\ne k).
\]
There is a time $t_{\rm cap}\le C_mD^{-1}$ after which the loss decays exponentially:
\[
 \cL(t)\le \cL(t_{\rm cap})e^{-c_mD(t-t_{\rm cap})},\qquad t\ge t_{\rm cap},
\]
and every redundant direction satisfies
\[
 \int_{t_{\rm cap}}^\infty\|\dot s_j(t)\|\,\dd t\le C_mD^{-2}
 \qquad(j\ne k).
\]
Thus unit $k$ is the unique student that specializes to the teacher, and the remaining units are coefficient-wise pruned.  Permuting the unique $+D$ imbalance permutes the selected index while leaving the initial predictor unchanged.
\end{theorem}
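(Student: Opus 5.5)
We will prove this in three stages, following the reduced dynamics \eqref{eq:main-reduction}. Rescale time by $\tau=Dt$ and expand the rate factors for large $D$. For the designated unit ($\delta_k=+D$), $\mu_k=D+O(q_k^2/D)$ and $\chi_k=D/q_k+O(q_k/D)$ when $q_k>0$. For redundant units ($\delta_j=-D$), $\mu_j=D+O(q_j^2/D)$ and $\chi_j=q_j/D+O(q_j^3/D^3)$.

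In $\tau$-time the redundant directions are then frozen to order $D^{-2}$. The coefficients $q_1,\dots,q_m$ and the direction $s_k$ obey a limiting system at order one:
\[
 q_i'=-B(s_i),\qquad s_k'=-q_k^{-1}(I-s_ks_k^\top)\mathcal T(s_k),\qquad s_j\equiv s_0 .
\]
Because the initial state is duplicated, the redundant units remain identical. The limiting system therefore reduces to the variables $(q_k,c_k,Q_{\rm red})$, with $Q_{\rm red}=\sum_{j\ne k}q_j$ and $c_k=s_k^\top u$. The residual averages are explicit through the arc-cosine kernel $\kappa$, evaluated at the three mutual cosines $c_k$, $\cos\phi$, and $s_k^\top s_0$. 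The last cosine is determined because $s_k$ moves in the plane spanned by $s_0$ and $u$.

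\textbf{Step 1 (limiting system).} We will show that for $\phi\le\pi/3$ the limiting system converges to $(q_k,c_k,Q_{\rm red})=(1,1,0)$, and that $q_k$ stays bounded below along the way. The natural Lyapunov function is the loss itself. In the limit, the loss is a quadratic form in $(q_k,Q_{\rm red})$ built from the Gram matrix of $[s_k^\top x]_+$, $[s_0^\top x]_+$ and $[u^\top x]_+$. LaSalle's principle forces convergence to a critical point with $s_k$ stationary, which means $s_k=u$ or $s_k$ aligned with the residual-tangent zero set.

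The angle condition $\phi\le\pi/3$ enters through the quantitative Gram-matrix condition. For the fixed direction $s_0$ and $s_k$ near $u$, it must keep the $2\times2$ Gram matrix of $\{[u^\top x]_+,[s_0^\top x]_+\}$ uniformly positive definite. Then the unique minimizer of the loss with $s_k=u$ is $(q_k,Q_{\rm red})=(1,0)$. We will also need to rule out the limiting trajectory reaching $q_k\le0$ or stalling at a spurious angle. For this we plan a monotonicity argument showing $c_k$ is nondecreasing: the projected $\mathcal T$ points toward $u$ whenever the residual is dominated by $-[u^\top x]_+$ early on. This part is uniform over compact $I_\phi$ and yields a $\tau$-time $\tau_{\rm cap}=O_m(1)$ at which the state lies in a small neighborhood of $(1,u,0)$, giving $t_{\rm cap}\le C_mD^{-1}$.

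\textbf{Step 2 (finite $D$ on $[0,\tau_{\rm cap}]$).} A Grönwall comparison between the true system and the limiting system will show deviations of order $O_m(D^{-2})$ on the compact $\tau$-interval. Here we use that $q_k$ stays away from zero, so $\chi_k$ is controlled. This places the true state inside the local neighborhood at $t_{\rm cap}$.

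\textbf{Step 3 (local convergence after capture).} Near $(q_k,s_k)=(1,u)$ with $s_j\approx s_0$, we will prove a Polyak--Łojasiewicz-type inequality $-\dot\cL\ge c_mD\,\cL$. The coefficient part of \eqref{eq:main-dissipation} contributes $\mu_i\approx D$ times $\sum_iB(s_i)^2$. Positive definiteness of the Gram matrix lets this term control the $L^2$ residual of the coefficient error. The direction error of $s_k$ is controlled by its own $(\mu_k+\delta_k)/2\approx D$ term. Together these give exponential loss decay at rate $c_mD$.

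Then $\|\dot s_j\|\le\chi_j\|\mathcal T(s_j)\|\lesssim(|q_j|/D)\sqrt{\cL}$, and integrating gives
\[
 \int_{t_{\rm cap}}^\infty\|\dot s_j\|\,\dd t\lesssim D^{-1}\cdot D^{-1}=O(D^{-2}).
\]
Hence each $s_j$ stays within $O(D^{-2})$ of $s_0$. This preserves the Gram condition throughout, which closes a bootstrap. Residual $\to0$ together with linear independence of $[u^\top x]_+$ and $[s_j^\top x]_+$ then forces $q_j\to0$ individually, $q_k\to1$ and $s_k\to u$. The Gram condition here must hold for the full family of distinct limit directions; since the $s_j$ are identical, it suffices to control the summed coefficient.

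Permutation equivariance is immediate from symmetry of the loss, and the initial predictor is unchanged because it depends only on the $(q_i,s_i)$. \textbf{The main obstacle} is Step 1: global convergence of the reduced limiting system with $q_k$ bounded away from $0$, uniformly in $\phi\in I_\phi$. We would first try a combined argument, using monotonicity of $c_k$ together with the loss decrease to exclude spurious stationary points.
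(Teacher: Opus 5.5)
Your overall architecture matches the paper's: fast time $\tau=Dt$, a limit with frozen redundant directions, a finite-$D$ comparison on a compact $\tau$-interval, then a local Polyak--\L{}ojasiewicz bound with $O(D^{-2})$ redundant rotation. A fixed-$m$ Gronwall argument in Step 2 also suffices for the theorem as stated, once the limiting trajectory has $q_k\ge c/m$.

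The gap is Step 1. You leave it open, and the route you sketch fails. Let $\theta$ be the signed angle of $s_k$ from $u$ in the teacher plane, positive toward $s_0$. Let $P=Q_{\rm red}$, $g(\alpha)=\kappa(\cos\alpha)$ and $\omega(\alpha)=-g'(\alpha)=(\pi-\alpha)\sin\alpha/(2\pi)$. The limiting angular equation is $\theta'=-q_k^{-1}\bigl(P\,\omega(\phi-\theta)+\operatorname{sgn}(\theta)\,\omega(|\theta|)\bigr)$. At $\theta=0$ this gives $\theta'=-P\,\omega(\phi)/q_k<0$ whenever $P>0$: the redundant mass still at $s_0$ pushes $s_k$ past the teacher, after which $c_k=\cos\theta$ decreases.

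This is the typical case, not an edge case. Near $(1,0,0)$ the slowest linear mode has $(q_k-1):\theta=g(\phi):\omega(\phi)$, and $q_k\le1$ along the flow, so trajectories typically approach the teacher from $\theta<0$. Thus $c_k$ is not monotone. Moreover, $(q_k,c_k,Q_{\rm red})$ is not even a closed system, because $s_k^\top s_0=\cos(\phi-\theta)$ depends on the sign of $\theta$. You must track the signed angle on $[\phi-\pi,\phi]$ and rule out stationary points with $\theta<0$. The paper does this with the inequality $F_\phi(x)>0$ of \cref{lem:uniform-arc-bounds}: coefficient stationarity at $\theta=-x$ forces $P\,\omega(\phi+x)<\omega(x)$, so the angular velocity cannot vanish there.

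Second, you need $q_k$ bounded below, but $q_k$ initially decreases: at the duplicate state $q_k'=g(\phi)-\tfrac12<0$. Since the angular equation carries the factor $q_k^{-1}$, nothing in your sketch keeps the limiting flow away from $q_k=0$, and LaSalle cannot be invoked until this is settled. The paper passes to the Cartesian vector $z=q_ks_\theta$, whose equation is regular at $z=0$. It then takes $y=q_k\sin(\phi-\theta)$, the component of $z$ orthogonal to $s_0$ on the side of $u$, and shows $y'\ge-y/2+\phi\sin\phi/(2\pi)$. Combined with $q_k\ge1/m-\tau$ for short times, this yields $q_k\ge c\,(1/m+\min\{\tau,1\})$, which is also the margin your Step 2 needs.

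This estimate, the positivity of the angular gradient for $\theta>0$, and the exclusion of $\theta<0$ all use $0\le P\le1$. That bound comes from the forward-invariant region $\mathcal C_\phi$ cut out by $Z=q_kg(\phi-\theta)-g(\phi)\le0$; invariance across the face $Z=0$ uses \eqref{eq:K1} and \eqref{eq:K3}. This is where $\phi\le\pi/3$ enters: $g(\phi)>1/4\ge1/(2m)$ gives $Z(0)<0$. Your attribution of the angle condition to Gram positivity is mistaken. The Gram matrix of $[u^\top x]_+$ and $[s_0^\top x]_+$ (diagonal $\tfrac12$, off-diagonal $g(\phi)$) is positive definite for every $\phi\in(0,\pi)$.

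Finally, in Step 3 the coefficient and direction errors cannot be controlled separately, because the tangential gradient of $s_k$ contains the coupling $P\,\omega(\psi)$. The bound $-\dot\cL\ge c_mD\cL$ needs the joint Hessian $G_\psi$ in $(q_k-1,\theta,P)$, whose least eigenvalue $\tfrac12-\sqrt{g(\psi)^2+\omega(\psi)^2}$ is positive. It also needs an entry loss below the minimum loss on the transverse boundary, so that the bootstrap cannot exit there.
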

Coefficient-wise pruning has a concrete meaning in the original parameters. For a redundant unit with conserved imbalance $-D$, the relation $q_j=a_j\|w_j\|$ gives
\[
 a_j^2=\frac{\sqrt{D^2+4q_j^2}-D}{2},
 \qquad \|w_j\|^2=\frac{\sqrt{D^2+4q_j^2}+D}{2}.
\]
As $q_j\to0$, its output weight tends to zero while its input norm tends to $\sqrt D$. The unit loses its contribution to the prediction even though its input vector need not vanish. This distinction explains why coefficient decay and feature alignment are measured separately in the experiments.

\subsubsection{Why the initial advantage persists}
Thus the initial scale determines the limiting representation, including the disappearance of every redundant contribution. To understand how this occurs, we study the fast time $\tau=Dt$ (\cref{thm:global-selection}). Let $q=q_k$, let $P=\sum_{j\ne k}q_j$ be the total redundant coefficient, and let $\theta$ and $\psi$ be the signed angles of the selected and common redundant directions relative to $u$. The teacher plane is spanned by $u$ and $s_0$, with orientation chosen so that $s_0$ has angle $\phi$. In the limit, $\psi=\phi$ remains fixed. If $s_\theta$ is the unit vector at angle $\theta$ in this plane, the limiting predictor is $q[s_\theta^\top x]_++P[s_0^\top x]_+$. Its population loss is
\[
 E(q,P,\theta)=\frac12\bigl\|q[s_\theta^\top\cdot]_+
                  +P[s_0^\top\cdot]_+-f_\star\bigr\|_{L^2(\gamma_d)}^2.
\]
Writing primes for $\tau$ derivatives, the limiting dynamics are
\begin{equation}
\label{eq:main-fast}
 q'=-\partial_qE,\qquad P'=-(m-1)\partial_PE,\qquad
 \theta'=-q^{-2}\partial_\theta E,
\end{equation}
from $(q,P,\theta)=(1/m,(m-1)/m,\phi)$. Notice that all redundant students enter through their total coefficient $P$. Their number changes the rate of its adjustment, while the loss landscape remains the same. This reduction separates two issues that are difficult to distinguish in the full network: whether the selected feature reaches the teacher, and whether the redundant units can recover a nonzero contribution later. The limiting flow resolves the first issue. Controlling the remaining motion at finite imbalance resolves the second.

We first show that the coefficient-weighted direction $z=qs_\theta$ develops a component toward the teacher, keeping $z$ away from zero. Kernel inequalities then exclude every stationary state except $(1,0,0)$ in an invariant region containing the initialization. The limiting flow converges to this state, leaving the redundant direction at angle $\phi>0$.

At finite imbalance, the redundant directions also move. Contraction of their total amplitude and the projection bound control the resulting trajectory error, including the initial period when the selected coefficient is small. This places the finite-$D$ trajectory near $(1,0,0)$ for $D\ge D_0(m)=Am^{\nu/2}$, where $A>0$ and $\nu>3/2$ depend only on $I_\phi$ (\cref{prop:mdep}).

Near this state, a positive local Gram matrix controls the selected coefficient, its angle, and the total redundant coefficient through the loss. It also gives exponential loss decay while the trajectory remains in the neighborhood. To justify using this local estimate for all later times, we arrange for the entry loss to lie below the boundary loss for these coordinates. Decreasing loss then prevents a first exit through that boundary, while the bound on total redundant rotation prevents exit through the angular boundary. The trajectory stays in the neighborhood and converges. Symmetry yields $q_j=P/(m-1)\to0$ for each redundant student. These two controls are what turn a finite-time comparison with the limiting flow into convergence at finite imbalance.

\subsection{Robustness and discrete gradient descent}
\label{sec:robustness}
We next examine how the selection mechanism changes when students begin at nearby states or training proceeds in discrete steps. With one positive scale imbalance and $m-1$ negative ones, each with magnitude near $D$, measure departure from duplicate initialization by
\begin{equation}
\label{eq:main-perturbation}
 \Delta_0=\max_i\left(\left|q_i(0)-\frac1m\right|+\|s_i(0)-s_0\|\right)
 +\max_i\left|\frac{|\delta_i(0)|}{D}-1\right|.
\end{equation}
Choose a small selection tolerance $r_\star>0$ relative to the initial angle and the local convergence neighborhood. Let $T_\star$ bound the fast time needed by the limiting flow to enter a smaller neighborhood, and let $D_0(m)=Am^{\nu/2}$ be the sufficient imbalance threshold above. Appendix~\ref{app:robust-discrete} fixes these constants before \cref{prop:mdep}. For the state $Z=((q_\ell,s_\ell))_{\ell=1}^m$, write $f_{-i}(x;Z)=\sum_{j\ne i}q_j[s_j^\top x]_+$ for the output of all other students. Unit $i$ is selected when its state enters
\[
 \mathcal W_{\rm sel}^{(i)}:=\left\{Z:
 q_i\ge1-r_\star,\quad
 \angle(s_i,u)\le r_\star,\quad
 \|f_{-i}(\cdot;Z)\|_{L^2(\gamma_d)}\le r_\star
 \right\}.
\]

\begin{theorem}
\label{thm:main-robust}
There is $a>0$, depending only on $I_\phi$, such that, with $\nu$ as above, for every $m\ge2$ and $\phi\in I_\phi$, if $\Delta_0\le\varepsilon_m:=am^{-\nu}$ and $D\ge D_0(m)$, then the first entry time
\[
 t_{\rm sel}:=\inf\{t\ge0:Z(t)\in\mathcal W_{\rm sel}^{(k)}\}
\]
satisfies $t_{\rm sel}\le(T_\star+1)/D$, while $Z(t)\notin\mathcal W_{\rm sel}^{(i)}$ for every $i\ne k$ and $0\le t\le t_{\rm sel}$.  At $t_{\rm sel}$, every unit with negative imbalance satisfies $\angle(s_j(t_{\rm sel}),u)>r_\star$.
\end{theorem}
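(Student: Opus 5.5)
We would prove \cref{thm:main-robust} by comparing the perturbed finite-$D$ trajectory, in fast time $\tau=Dt$, with the limiting flow \eqref{eq:main-fast} started from the duplicate state $(q,P,\theta)=(1/m,(m-1)/m,\phi)$, over the finite fast-time window $[0,T_\star+1]$.

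\textbf{Step 1 (fast-time equations).} Write \eqref{eq:main-reduction} in fast time. For the selected unit, $\delta_k\approx+D$ gives $\mu_k/D=1+O(q_k^2/D^2)$ and $\chi_k/D=q_k^{-2}(1+O(q_k^2/D^2))$. Each negatively imbalanced unit has $\mu_j/D=1+O(D^{-2})$, while $\chi_j/D=O(D^{-2})$. The factor $|\delta_i|/D\in[1-\Delta_0,1+\Delta_0]$ adds a relative error $O(\Delta_0)$ to these rates. In fast time, therefore, the redundant directions move at speed $O(D^{-2})$ times a bounded residual force, so they stay within $O(\Delta_0+D^{-2}\tau)$ of $s_0$.

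\textbf{Step 2 (reduce to the limiting flow).} Replace each $s_j$ by $s_0$ and $P$ by the sum of the $q_j$. The kernel formulas for Gaussian inputs are Lipschitz in the directions, so this substitution perturbs the $(q_k,P,s_k)$ vector field by $O(\Delta_0+D^{-1}+\text{redundant angular spread})$. The individual $q_j$ are not exactly equal, but their differences obey a linear equation with common forcing. A Gr\"onwall estimate keeps them within $O(\Delta_0)e^{O(m\tau)}$.

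\textbf{Step 3 (control the early phase).} The selected direction's equation has the singular factor $q^{-2}$. By the lower bound on $z=qs_\theta$ already established for the limiting flow, $q$ stays bounded below on $[0,T_\star]$. We would propagate this bound to the perturbed system with a continuity argument: as long as the trajectories are within half the limiting lower bound, the vector fields are Lipschitz with constant $L_m$. Gr\"onwall then gives a deviation of at most
\[
C(\Delta_0+D^{-1})e^{L_m(T_\star+1)}.
\]
This is the main obstacle. The constants must be tracked so that $L_m$, $T_\star$ and the lower bound on $q$ depend polynomially on $m$, which is what lets $\varepsilon_m=am^{-\nu}$ and $D_0(m)=Am^{\nu/2}$ suffice. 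We would reuse the $m$-bookkeeping of \cref{prop:mdep}, and treat $\Delta_0$ as an extra initial-error term in that proposition's comparison.

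\textbf{Step 4 (conclusions).} The limiting flow enters the smaller neighborhood of $(1,0,0)$ by fast time $T_\star$. With error below half the gap, the perturbed state then satisfies $q_k\ge1-r_\star$ and $\angle(s_k,u)\le r_\star$. It also satisfies
\[
\|f_{-k}\|_{L^2(\gamma_d)}\le\sum_{j\ne k}|q_j|\,\|[s_j^\top\cdot]_+\|\lesssim|P|+O(\Delta_0)\le r_\star.
\]
This gives $t_{\rm sel}\le(T_\star+1)/D$.

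For $i\ne k$ and $t\le t_{\rm sel}$, unit $i$ cannot be selected. Indeed $\angle(s_i,u)\ge\phi-O(\Delta_0+D^{-1})>r_\star$, since $r_\star$ is chosen small relative to $\phi\in I_\phi$. The same bound gives the final claim that every negatively imbalanced unit has angle greater than $r_\star$ at $t_{\rm sel}$.
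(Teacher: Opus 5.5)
Your outline has the same architecture as the paper's argument, where \cref{prop:mdep} feeds \cref{thm:robust-selection}. It compares the trajectory with the limiting flow on a fixed fast-time window. It keeps the redundant directions within $O(\Delta_0+D^{-2})$ of $s_0$ because their fast-time mobility is $O(D^{-2})$. It gets entry from the limiting capture margins, and excludes the other units through $\angle(s_j,u)\ge\phi-O(\epsilon)>r_\star$.

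The gap is in the step you flag as the main obstacle, and tracking polynomial constants does not close it. Since $T_\star$ does not depend on $m$, a Gr\"onwall factor $e^{L_m(T_\star+1)}$ is superpolynomial in $m$ whenever $L_m\to\infty$, and a uniform Lipschitz constant does grow. First, the total redundant amplitude obeys $P'=-M\partial_PE_M$, whose Jacobian has entries of size $M$. Second, on the region $q\ge c_I/(2m)$ the selected-direction field has Lipschitz constant of order $m$ in Cartesian coordinates, and of order $m^2$ in angular ones. Your bound $O(\Delta_0)e^{O(m\tau)}$ on the differences $q_j-q_i$ has the same defect. Step~4 needs $\sum_{j\ne k}|q_j|\le|P|+M\max_j|q_j-P/M|$ to be small, and that bound does not give it. As written, your estimates prove the theorem only with $\varepsilon_m$ and $D_0(m)^{-1}$ exponentially small in $m$, not with $am^{-\nu}$ and $Am^{\nu/2}$ for a fixed $\nu$.

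The missing ingredient is the structure used in \cref{lem:aggregate-stability}. Up to an $O(\epsilon)$ defect, with $\epsilon=\Delta_0+D^{-2}$, every redundant amplitude feels the same force $-(P/2+K_0(z)-g(\phi))$. Hence the zero-sum parts $c_j=p_j-P/M$ obey $c_j'=r_j-\bar r$ and stay $O(\epsilon)$ with no amplification at all. For the aggregate, use the weighted error $\mathcal V=e_z+|e_P|/M$. The rate-$M/2$ contraction of $e_P$ absorbs its coupling into $e_z$, and the damping $-z/2$ absorbs the coupling of $e_z$ into $e_P$. This leaves $\mathcal V'\le(C/q_\infty)\mathcal V+2C_0\epsilon$. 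The only remaining growth factor is $1/q_\infty$, which comes from the kernel Hessian bound in $z=qs_\theta$. The projection bound $q_\infty\ge c_I(m^{-1}+\min\{\tau,1\})$ gives $\int_0^{T_{\max}}q_\infty^{-1}\,\dd\tau=O(1+\log m)$, so Gr\"onwall produces $m^\beta$ rather than $e^{cm}$. The initial aggregate error is $M\Delta_0$, not $O(\Delta_0)$. It decays like $M\epsilon e^{-M\tau/2}$ and is harmless because $\tau^\infty_{\rm cap}\ge1/2$.

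Two smaller corrections:
\begin{itemize}
\item The defect is $O(\Delta_0+D^{-2})$, not $O(\Delta_0+D^{-1})$. With $D^{-1}$ you would need $D\gtrsim m^{\nu}$ instead of $D\gtrsim m^{\nu/2}$.
\item $\chi_k/D\approx q_k^{-1}$, not $q_k^{-2}$. The $q^{-2}$ in $\theta'=-q^{-2}\partial_\theta\mathscr E$ appears because $\partial_\theta\mathscr E$ carries a factor $q$.
\end{itemize}
Since \cref{prop:mdep} already works with $\epsilon=\Delta_0+D^{-2}$, you can cite it directly and drop your Steps~2--3. As written, those steps replace it with an argument that does not reach the stated thresholds.
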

The designated neuron therefore reaches the teacher neighborhood first. The proof controls the total redundant amplitude and each neuron's deviation from the mean separately (\cref{thm:robust-selection}). Subsequent convergence and individual coefficient decay follow under the local Gram-matrix bound and entry-loss condition of \cref{thm:robust-locking}.

Gradient descent introduces a different perturbation because each step changes the imbalance. We return to duplicate initialization with $\phi\in I_\phi$, set $\delta_k=+D$, $\delta_j=-D$ for $j\ne k$, and use gradient descent with learning rate $\eta=h/D$, where $h>0$ is the step in fast time.  Write $Y_n=(q_n,P_n,\theta_n,\psi_n)$ for the symmetric state after $n$ iterations with $\cL_n=\cL(\Theta_n)$ and $\delta_{i,n}=a_{i,n}^2-\|w_{i,n}\|^2$.  Let $T_c=T_\star+1$ and $Y_\infty(\tau)=(q_\infty(\tau),P_\infty(\tau),\theta_\infty(\tau),\phi)$, where the first three coordinates solve \eqref{eq:weighted-fast}.  Let $\mathcal V_{\rm in}$ be an inner neighborhood of the selected solution, with a strict loss gap to the boundary of a larger neighborhood.  The construction in \eqref{eq:discrete-neighborhoods} also bounds the original parameter scales away from zero.

\begin{theorem}
\label{thm:main-discrete}
For every fixed $m\ge2$ and $\phi\in I_\phi$, there are $h_0(m),\widetilde D_0(m),C_m,c_m>0$ such that, for all $0<h\le h_0(m)$ and $D\ge\widetilde D_0(m)$, full-batch gradient descent on the population loss converges to the selected representation: $q_n\to1$, $P_n\to0$, and $\theta_n\to0$. In particular, each redundant coefficient $q_{j,n}=P_n/(m-1)$ tends to zero. Up to fast time $T_c$, the trajectory satisfies
\[
 \max_{nh\le T_c}\|Y_n-Y_\infty(nh)\|
 \le C_m(h+D^{-2}).
\]
Let $n_{\rm sel}$ be the first index at which the iterate enters the symmetric neighborhood $\mathcal V_{\rm in}$ in \eqref{eq:discrete-neighborhoods}. Then $n_{\rm sel}\le\lceil T_c/h\rceil$ and, for every $n\ge n_{\rm sel}$,
\[
 \cL_{n+1}\le(1-c_mh)\cL_n,
 \qquad
 \sup_{n\ge0}|\delta_{i,n}-\delta_{i,0}|\le C_m\frac hD
 \quad\text{for every }i,
\]
and, writing $s_n$ for the common redundant direction at iteration $n$, we have
\[
 \sum_{n\ge n_{\rm sel}}\|s_{n+1}-s_n\|
 \le\frac{C_m}{D^2}\cL_{n_{\rm sel}}.
\]
\end{theorem}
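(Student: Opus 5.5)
The plan is to treat gradient descent with $\eta=h/D$ as a one-step discretization, in fast time $\tau=Dt$, of the gradient flow analysed in \cref{thm:main-capture}. I would then reuse its two-phase structure: finite-time tracking of the limiting flow \eqref{eq:main-fast}, followed by local locking near $(1,0,0)$.

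First, symmetry. Duplicate initialization and identical imbalances among the redundant units are preserved exactly by each gradient step, since the update map is permutation-equivariant. So the iterates stay in the symmetric class, and the state $Y_n=(q_n,P_n,\theta_n,\psi_n)$ together with the imbalances is well defined. Next, drift of the conserved quantity. A direct computation gives $\delta_{i,n+1}-\delta_{i,n}=\eta^2\bigl((\partial_{a_i}\cL)^2-\|\nabla_{w_i}\cL\|^2\bigr)$, because the cross term cancels, exactly as in the continuous conservation law. The gradient of the loss in parameter space scales like $\sqrt{\mu_i}$ times residual averages, so each increment is $O(h^2 D^{-1}\cL_n)$. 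Once geometric decay of $\cL_n$ is available, summing gives $\sup_n|\delta_{i,n}-\delta_{i,0}|\le C_m h/D$. Before that, on the finite window $n\le T_c/h$ the sum has at most $T_c/h$ terms, which also gives $O(h/D)$. This justifies using $\delta_i\approx\pm D$ throughout.

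Second, the finite-time estimate. In the coordinates $(q,s)$, one GD step equals $h/D$ times the exact right-hand side \eqref{eq:main-reduction}, plus a second-order Taylor remainder. That remainder is $O(\eta^2\mu^2)=O(h^2)$ in fast time, provided the parameter scales stay bounded away from zero; this is why \eqref{eq:discrete-neighborhoods} enforces such a bound. Rescaled, the GD map is a forward-Euler step of size $h$ for the fast vector field. By \cref{thm:global-selection} and \cref{prop:mdep}, that vector field differs from the limit \eqref{eq:main-fast} by $O(D^{-2})$ once the $\psi$-motion is accounted for, since the redundant angular rate is $\chi(q,-D)/D=O(D^{-2})$. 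The usual Euler global-error argument with a discrete Grönwall inequality on $[0,T_c]$ then yields $\max_{nh\le T_c}\|Y_n-Y_\infty(nh)\|\le C_m(h+D^{-2})$. Because $Y_\infty$ enters the smaller neighborhood by fast time $T_\star$, choosing $h_0$ small and $\widetilde D_0$ large gives $n_{\rm sel}\le\lceil T_c/h\rceil$. The delicate part here is the early period where $q$ is small, because $\theta'=-q^{-2}\partial_\theta E$ is stiff there. I would handle it as in the flow proof, using $z=qs_\theta$ in place of $\theta$, so that the Lipschitz constants stay uniform.

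Third, locking, which I expect to be the main obstacle. Inside $\mathcal V_{\rm in}$, the positive local Gram matrix from the proof of \cref{thm:main-capture} gives a Polyak--Łojasiewicz inequality $\|\nabla_\tau\cL\|^2\ge c\,\cL$ in fast-time metric. Combined with an $L$-smoothness bound on the neighborhood, the descent lemma gives $\cL_{n+1}\le\cL_n-h(1-Lh/2)\|\nabla\|^2\le(1-c_mh)\cL_n$, provided the iterate remains in the larger neighborhood. Invariance uses the same two controls as in continuous time. The first is the loss gap: since $\cL_n$ is decreasing, the iterate cannot cross the loss-defined boundary, and a one-step jump of size $O(h)$ is smaller than the gap. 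The second is the angular boundary. Each redundant step satisfies $\|s_{n+1}-s_n\|\le\eta\chi(q,-D)\|\mathcal T\|\lesssim hD^{-2}\sqrt{\cL_n}$. Summing this against the geometric decay of $\cL_n$ gives the total motion bound $\frac{C_m}{D^2}\cL_{n_{\rm sel}}$, possibly after replacing $\sqrt{\cL}$ by $\cL$ using the Gram-matrix bound $\|\mathcal T\|\lesssim\cL$ near the minimizer where the residual has no component along the redundant feature. A strong induction over $n$ closes all three bounds simultaneously. Once the iterates stay in the neighborhood with $\cL_n\to0$, the Gram matrix controls $(q_n-1,P_n,\theta_n)$ by the loss, so they converge to $(1,0,0)$, and symmetry gives $q_{j,n}=P_n/(m-1)\to0$.
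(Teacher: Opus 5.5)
Your plan has the same structure as the paper's proof. It uses the exact $\eta^2$ identity for the imbalance drift, an Euler/discrete-Gr\"onwall comparison with the fast limit up to $T_c$, and then the descent lemma with a local PL inequality and a loss barrier. The paper avoids your chart remainder by observing that $(a,w)\mapsto\Xi=(b,z,p,v)$ is linear. Each GD step is therefore \emph{exactly} a forward-Euler step of $\Xi'=-\mathsf M_D\nabla\mathcal E$. In your $(q,s)$ chart, the relevant smallness is the relative step $\eta|a_k|\|\mathcal T\|/\|w_k\|\approx h\|\mathcal T\|/q_k=O(mh)$, not a lower bound on original parameter scales, since $\|w_k\|\approx q_k/\sqrt D$. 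This is harmless for fixed $m$. Your per-step drift bound $\eta^2\mu_i\cL_n\lesssim h^2\cL_n/D$, summed over the window and the geometric tail, is a valid substitute for the paper's telescoping of $\cL_{n+1}\le\cL_n-\tfrac\eta2\|\nabla_\Theta\cL_n\|^2$.

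The step that fails is the upgrade from $\sqrt{\cL}$ to $\cL$ in the redundant-direction motion. The claimed bound $\|\mathcal T(s_j)\|\lesssim\cL$ near the minimizer is false. Take $\theta=0$, $P=0$, $q=1+\epsilon$. The residual is then $\epsilon[u^\top x]_+$, so $\|\proj_{s_j}^\perp\mathcal T(s_j)\|=|\epsilon|\,\omega(\psi)$ while $\cL=\epsilon^2/4$. The residual does rotate the redundant feature at first order. With only $hD^{-2}\sqrt{\cL_n}$ per step, your geometric summation gives $C_mD^{-2}\sqrt{\cL_{n_{\rm sel}}}$, not the stated $C_mD^{-2}\cL_{n_{\rm sel}}$.

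The missing factor is in the mobility you bounded by a constant. For a redundant unit, $\chi(q_j,-D)=2q_j/(\mu_j+D)\le|q_j|/D$ involves that unit's own coefficient $q_j=P_n/(m-1)$. Local coercivity from the Gram matrix \eqref{eq:symmetric-gram} gives $P_n\le C\sqrt{\cL_n}$. Hence $\|s_{n+1}-s_n\|\lesssim h\cL_n/((m-1)D^2)$, and summing against $(1-c_mh)^{n-n_{\rm sel}}$ gives the claimed bound. This is exactly the paper's estimate $\|\nabla_v\mathcal E\|\le C|p|\sqrt{\mathcal E}\le C\mathcal E$.

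A different repair also works. Iterate $n_{\rm sel}-1$ lies transversally outside $\mathcal V_{\rm in}$, and one step moves the state by $O_m(h)$, so $\cL_{n_{\rm sel}}$ is bounded below and $\sqrt{\cL_{n_{\rm sel}}}\le C_m\cL_{n_{\rm sel}}$. That is not the argument you gave, however. Your invariance argument is unaffected, because it only needs the total angular motion to be small.
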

The imbalance drift is small enough to preserve the separation of learning rates. To show this, we express the original update as an exact Euler step in linearly scaled parameters and compare its interpolation with the limiting flow. Near the selected representation, a Hessian bound over each complete step gives loss decrease and prevents escape. Summing these decreases controls both direction motion and imbalance drift (\cref{thm:discrete-capture}).

\FloatBarrier

\section{Experiments}
Our analysis shows that parameter scales can select a unique specialized neuron while redundant coefficients decay. Here we test these dynamics and their dependence on imbalance, sample size, and step size, then examine feature assignment with two competing teachers. We compare the large-$D$ limit, the exact finite-$D$ equations, gradient flow in the original parameters on the population loss, and full-batch flow on $N$ Gaussian samples. Appendix~\ref{app:validation} gives the protocols and complete results.

\begin{figure}[tbp]
\centering
\includegraphics[width=0.95\linewidth]{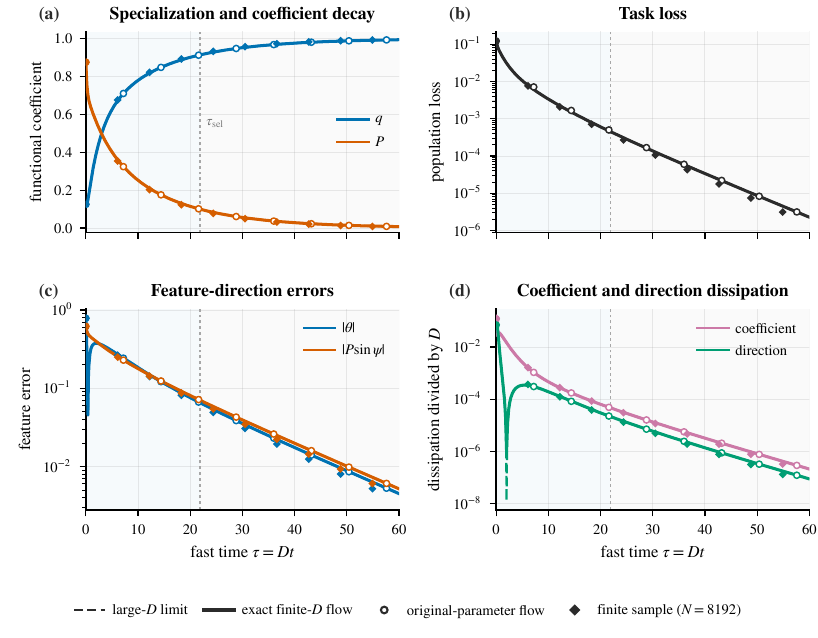}
\caption{A designated neuron learns the teacher as redundant coefficients decay. The four dynamical descriptions agree along a representative trajectory with $m=8$, $D=32$, and $N=8192$. (a) Selected coefficient $q$ and total redundant coefficient $P$. The vertical line marks $\tau_{\rm sel}=Dt_{\rm sel}$, defined by the thresholds in \eqref{eq:capture}. (b) Population loss. (c) Selected angle $|\theta|$ and coefficient-weighted redundant direction error $|P\sin\psi|$. (d) Loss decrease due to coefficient and direction changes, divided by $D$.}
\label{fig:trajectories}
\end{figure}

\subsection{Specialization and coefficient decay}
Figure~\ref{fig:same-predictor} compares two scale assignments at the same initial functional state. With all imbalances positive, the students learn the teacher together and retain equal coefficients. With one positive and seven negative imbalances, the selected student aligns with the teacher and its coefficient approaches one. Each redundant coefficient tends to zero while its direction remains separated from the teacher. Figure~\ref{fig:trajectories} follows this process in all four dynamical descriptions. The coefficient, loss, and direction panels (a--c) show consistent behavior across the four descriptions. The loss decomposition (d) shows the contributions of coefficient adjustment and feature rotation. Across the population grid, the reduced and original-parameter flows agree to solver precision, and the limiting and empirical flows follow the same pattern.

To isolate the initial difference in adaptation, \cref{fig:single-paths} compares single students with the same initial coefficient and alignment. Positive imbalance produces rapid alignment, while negative imbalance gives coefficient adjustment followed by slower rotation. Their specialization times differ by the predicted quadratic factor, and the paths themselves differ. The limiting approximations capture both phases (\cref{fig:diagnostics}(a)): slow-time alignment is compared from time zero, while the coefficient approximation applies after its initial relaxation.

The two direction curves in Figure~\ref{fig:trajectories}(c) reflect different mechanisms. The selected angle approaches zero, but $|P\sin\psi|$ decreases through coefficient decay: the redundant angle remains nearly fixed (Figure~\ref{fig:same-predictor}). Since the redundant students remain identical, each coefficient vanishes individually.

\subsection{Scaling and stability}
\begin{figure}[tbp]
\centering
\includegraphics[width=0.95\linewidth]{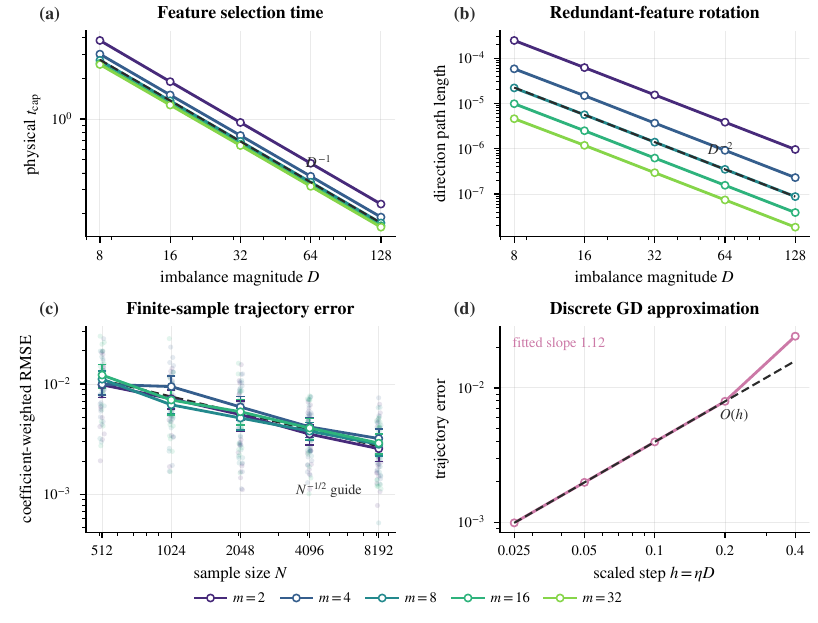}
\caption{Scaling with imbalance, sample size, and step size. (a) Selection time follows $D^{-1}$. (b) Subsequent redundant-direction motion follows $D^{-2}$. (c) Empirical trajectory error follows approximately $N^{-1/2}$, with paired-bootstrap 95\% confidence intervals. (d) Gradient-descent error is approximately linear in $h=\eta D$. Dashed lines mark reference orders.}
\label{fig:scaling}
\end{figure}
We next vary the imbalance to test how the separation of learning rates controls selection. Figures~\ref{fig:scaling}(a,b) show faster selection and smaller subsequent direction motion with the predicted $D^{-1}$ and $D^{-2}$ orders across widths. As the redundant directions become more nearly fixed, the limiting-trajectory error also decreases quadratically (\cref{fig:diagnostics}(b), \cref{tab:population}).

Measuring direction motion after selection tests whether the remaining neurons continue to change their features. The observed decrease supports the total-motion bound in \cref{thm:main-capture}, connecting rapid feature acquisition to a persistent separation of selected and redundant directions.

For the finite-sample experiment, inputs lie in the two-dimensional teacher plane. Let $s_{\rm sel}$ and $s_{\rm red}$ denote the selected and common redundant unit directions. To compare directions in proportion to their contribution to the predictor, we use the coefficient-weighted state
\begin{equation}
\Xfun(\tau)
=
\bigl(q(\tau),\,P(\tau),\,q(\tau)s_{\mathrm{sel}}(\tau),\,P(\tau)s_{\mathrm{red}}(\tau)\bigr)
\in\R^6,
\label{eq:functionalstate}
\end{equation}
Let $\Xfun_N$ and $\Xfun_{\rm pop}$ be this state along the empirical and population flows, respectively.  Their root-mean-square trajectory error is
\begin{equation}
\err_N
=
\left[
\frac1{6T}
\int_0^T
\|\Xfun_N(\tau)-\Xfun_{\mathrm{pop}}(\tau)\|_2^2\,\dd\tau
\right]^{1/2},
\label{eq:sampleerror}
\end{equation}
with $T=60$.  Weighting each direction by its coefficient prevents a nearly vanishing unit from dominating the trajectory metric.

The same selected neuron is recovered in every finite-sample run, with trajectory error decreasing approximately as $N^{-1/2}$ (\cref{fig:scaling}(c), \cref{tab:samples}). Selection persists under the tested perturbations of the initial coefficients, directions, and imbalances. Figure~\ref{fig:diagnostics}(c) confirms selection across the tested angles and widths. Panel (d) shows the weak dependence of rescaled selection time on width. Under gradient descent, the trajectory error decreases linearly with $h=\eta D$ (\cref{fig:scaling}(d)), while subsequent redundant-direction motion retains its $D^{-2}$ dependence (\cref{tab:discrete-transport}).

\Needspace{10\baselineskip}
\subsection{Competition between two teacher features}
We next train eight students with nonduplicate initial directions on two teacher features in ten dimensions. Every student's initial coefficient and direction remain fixed as the imbalance assignment changes. Configurations A and B give positive imbalance to units $(1,4)$ and $(2,3)$, respectively, with one designated student near each teacher; a third configuration uses $(1,2)$, both nearer the first teacher. Success requires the designated students to recover their teachers' directions and coefficients, with all remaining coefficients small (\eqref{eq:two-teacher-success}).

\begin{figure}[tbp]
\centering
\includegraphics[width=0.95\linewidth]{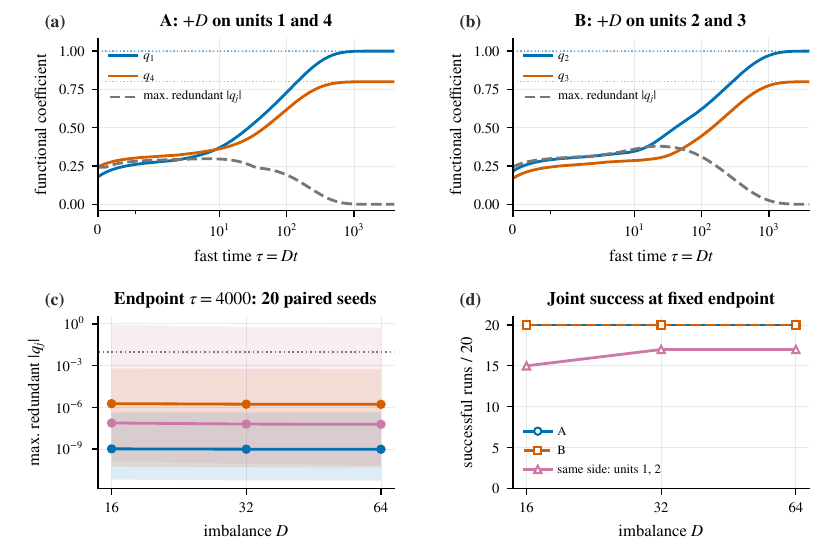}
\caption{Two-teacher competition at a fixed initial predictor.  (a)--(b) At $D=32$ and seed 0, moving the positive imbalances from units $(1,4)$ to $(2,3)$ changes which students learn the two teachers.  Gray curves show the largest magnitude among coefficients of neurons with negative imbalance.  (c) Endpoint redundant-coefficient magnitude: lines are medians and bands span the minimum to maximum over 20 paired seeds.  The dotted line is $10^{-2}$.  (d) Counts satisfying \eqref{eq:two-teacher-success}.  A and B overlap at 20/20.  The same-side condition tests competition between two designated students initially nearer the first teacher.}
\label{fig:two-teacher}
\end{figure}

Moving the positive imbalances from A to B changes which neurons acquire the two features (\cref{fig:two-teacher}(a,b)). The fixed initial features isolate the effect of scale from differences in alignment. Both configurations succeed for every seed at each tested imbalance (\cref{tab:two-teacher}). When both designated students begin nearer the first teacher, other students can retain part of the second feature. These experiments show how parameter scale and initial direction jointly determine feature assignment beyond the duplicate setting.

\FloatBarrier

\Needspace{11\baselineskip}
\section{Conclusion}
Parameter scale can determine how neurons specialize even when their initial features and functional contributions are fixed. In a Gaussian ReLU model, we prove that a designated student learns the teacher while every redundant coefficient vanishes. The result holds at finite imbalance under gradient flow and small-step population gradient descent. Two-teacher experiments further show that scale can change feature assignment among nonduplicate students, linking parameterization to the roles neurons acquire during learning.

\bibliographystyle{plainnat}
\bibliography{references}

\clearpage
\appendix
\raggedbottom
\makeatletter
\@addtoreset{equation}{section}
\@addtoreset{figure}{section}
\@addtoreset{table}{section}
\makeatother
\renewcommand{\theequation}{\thesection.\arabic{equation}}
\renewcommand{\thefigure}{\thesection.\arabic{figure}}
\renewcommand{\thetable}{\thesection.\arabic{table}}

\section*{Appendices}
Appendix~A derives the coefficient and direction dynamics and proves the single-neuron results.  Appendix~B establishes global convergence of the limiting system and the local loss bounds.  Appendix~C uses these estimates to prove selection and convergence at finite imbalance, under initialization perturbations, and for gradient descent.  Appendix~D gives the experimental protocols and additional results.

We use $O(\cdot)$ for upper bounds and $\Theta(\cdot)$ for two-sided bounds, with other parameters fixed.  Subscripts record allowed dependence of the constants.  Generic positive constants $c,C$ may change between estimates.

\section{Exact coefficient and direction dynamics and one-neuron specialization}
\label{app:marked}

\subsection{Population gradient flow in coefficient and direction variables}
The same ReLU feature can be represented by different input and output norms. We derive equations that keep this scale information while tracking the coefficient and direction seen by the predictor. The particle notation below includes the finite network as a sum of point masses.

Let $\pi_x$ be a probability measure on $\R^d$ with finite second moment and assume that it assigns zero mass to every hyperplane used below.  Let $y\in L^2(\pi_x)$ and $\relu(z)=z_+=\max\{z,0\}$.  For a positive measure $\varrho$ with finitely many particles on
\[
 \cP:=\R\times(\R^d\setminus\{0\}),
\]
define
\begin{equation}
\label{eq:model-original}
 f_\varrho(x)=\int a\,\relu(w^\top x)\,\varrho(\dd a,\dd w),
 \qquad
 \cL(\varrho)=\frac12\int\bigl(f_\varrho(x)-y(x)\bigr)^2\,\pi_x(\dd x).
\end{equation}
For the finite network in the main text, $\varrho$ is the sum of unit point masses at the neuron parameters, so the integral is exactly the network sum.  Write $R_\varrho=f_\varrho-y$.  Along a characteristic of the population gradient flow in its measure formulation,
\begin{equation}
\label{eq:original-characteristics}
 \dot a=-\int R_\varrho(x)\relu(w^\top x)\,\pi_x(\dd x),
 \qquad
 \dot w=-a\int R_\varrho(x)\1_{\{w^\top x>0\}}x\,\pi_x(\dd x).
\end{equation}
This is the standard two-layer mean-field characteristic system \citep{mei2019}.  The signed radial--directional transformation below agrees, for nonzero coefficients, with \citet[Appendix C.2, Eq.~(131)]{kunin2024} at equal unit learning rates.  We include its derivation to fix conventions and to retain the regular extension through $q=0$ when $\delta<0$.

For $w\neq0$, set
\begin{equation}
\label{eq:marked-coordinates}
 r=\|w\|,
 \qquad
 s=\frac{w}{r}\in\Sph^{d-1},
 \qquad
 q=ar,
 \qquad
 \delta=a^2-r^2.
\end{equation}
Define
\begin{equation}
\label{eq:mu-chi}
 \mu(q,\delta):=\sqrt{\delta^2+4q^2},
 \qquad
 r^2(q,\delta)=\frac{\mu(q,\delta)-\delta}{2},
 \qquad
 \chi(q,\delta):=\frac{2q}{\mu(q,\delta)-\delta}.
\end{equation}
The coordinate domain is
\begin{equation}
\label{eq:marked-space}
 \cX:=\bigl\{(q,s,\delta):s\in\Sph^{d-1},\ \mu(q,\delta)>\delta\bigr\}.
\end{equation}
This condition is equivalent to $r>0$.  The formula for $\chi$ is regular at $q=0$ whenever $\delta<0$ and equals $a/r$.  The inverse transformation is
\begin{equation}
\label{eq:inverse-chart}
 r=\sqrt{\frac{\mu-\delta}{2}},
 \qquad
 a=\frac q r,
 \qquad
 w=rs.
\end{equation}
Hence the map $(a,w)\mapsto(q,s,\delta)$ is a smooth bijection from $\cP$ onto $\cX$.

Let $\rho$ be the pushforward of $\varrho$ under this coordinate map, namely the distribution obtained by applying the map to each particle.  Positive homogeneity gives
\begin{equation}
\label{eq:model-marked}
 f_\rho(x)=\int q\,\relu(s^\top x)\,\rho(\dd q,\dd s,\dd\delta).
\end{equation}
With $R_\rho=f_\rho-y$, define the residual moments
\begin{equation}
\label{eq:residual-moments}
 B_\rho(s):=\int R_\rho(x)\relu(s^\top x)\,\pi_x(\dd x),
 \qquad
 \mathcal T_\rho(s):=\int R_\rho(x)\1_{\{s^\top x>0\}}x\,\pi_x(\dd x).
\end{equation}
By homogeneity,
\begin{equation}
\label{eq:B-sT}
 B_\rho(s)=s^\top\mathcal T_\rho(s).
\end{equation}
Let $\proj_s^\perp=I-ss^\top$.

\begin{theorem}
\label{thm:marked-reduction}
Let $(\varrho_t)_{t\in[0,T]}$ be a characteristic solution of \eqref{eq:original-characteristics} whose support remains in $\cP$.  Its pushforward $\rho_t$ under the map $(a,w)\mapsto(q,s,\delta)$ satisfies
\begin{equation}
\label{eq:marked-continuity}
 \partial_t\rho
 +\partial_q(V_q\rho)
 +\diver_{\Sph^{d-1}}(V_s\rho)=0,
 \qquad
 V_\delta=0,
\end{equation}
where $\operatorname{div}_{\Sph^{d-1}}$ is spherical divergence and the coordinate velocities are
\begin{equation}
\label{eq:marked-velocities}
 V_q=-\mu(q,\delta)B_\rho(s),
 \qquad
 V_s=-\chi(q,\delta)\proj_s^\perp\mathcal T_\rho(s),
 \qquad
 V_\delta=0.
\end{equation}
Conversely, every characteristic solution of \eqref{eq:marked-continuity} that remains in $\cX$ pulls back through \eqref{eq:inverse-chart} to a solution of \eqref{eq:original-characteristics}.  The two systems are therefore equivalent up to the first exit from the regular coordinate domain.

Moreover, the loss decay separates exactly into coefficient and directional contributions:
\begin{equation}
\label{eq:loss-dissipation}
 \frac{\dd}{\dd t}\cL(\rho_t)
 =-\int
 \left[
 \underbrace{\mu(q,\delta)B_{\rho_t}(s)^2}_{\text{coefficient change}}
 +
 \underbrace{\frac{\mu(q,\delta)+\delta}{2}
 \bigl\|\proj_s^\perp\mathcal T_{\rho_t}(s)\bigr\|^2}_{\text{direction change}}
 \right]\rho_t(\dd q,\dd s,\dd\delta).
\end{equation}
In particular, the entire $\delta$-marginal is conserved.
\end{theorem}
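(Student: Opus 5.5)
The argument works along a single characteristic. Because the measure $\varrho_t$ is a finite sum of point masses transported by \eqref{eq:original-characteristics}, the continuity equation \eqref{eq:marked-continuity} for the pushforward follows once we know the velocity of the image $(q,s,\delta)$ of a single characteristic $(a(t),w(t))$. So we fix one particle and write
\[
 \dot a=-B(s)\,r\cdot r^{-1}\cdot r=-rB(s),\qquad \dot w=-a\,\mathcal T(s).
\]
The first identity uses $\relu(w^\top x)=r\relu(s^\top x)$ and $\1_{\{w^\top x>0\}}=\1_{\{s^\top x>0\}}$. Both follow from positive homogeneity, with $r>0$ on $\cP$. Here $B,\mathcal T$ are evaluated at the current measure, and $f_\varrho=f_\rho$ by \eqref{eq:model-marked}.

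We then differentiate the coordinates.

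The imbalance is conserved:
\[
\dot\delta=2a\dot a-2w^\top\dot w=-2arB(s)+2ar\,s^\top\mathcal T(s)=0
\]
by \eqref{eq:B-sT}.

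For the norm, $\dot r=s^\top\dot w=-aB(s)$.

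For the coefficient,
\[
\dot q=\dot a r+a\dot r=-(r^2+a^2)B(s).
\]
Since $(a^2+r^2)^2=(a^2-r^2)^2+4a^2r^2=\delta^2+4q^2$, we get $a^2+r^2=\mu(q,\delta)$, which gives $V_q$.

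For the direction,
\[
\dot s=r^{-1}\proj_s^\perp\dot w=-(a/r)\proj_s^\perp\mathcal T(s).
\]
Since $a/r=q/r^2=2q/(\mu-\delta)=\chi$ by \eqref{eq:mu-chi}, this gives $V_s$. The identity $a/r=\chi$ also supplies the regular extension through $q=0$ when $\delta<0$.

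The converse direction uses the smooth inverse chart \eqref{eq:inverse-chart}. Given a solution of the coordinate system in $\cX$, we define $(a,w)$ through the chart. The chain-rule computation above is then reversible. From $\dot q,\dot\delta$ we recover $\dot r,\dot a$: indeed $\dot\delta=0$ and $\dot q=-\mu B$ determine $(a\dot a,r\dot r)$, because the linear system has determinant proportional to $a^2+r^2>0$. We recover $\dot w=\dot r s+r\dot s$. This reproduces \eqref{eq:original-characteristics}. Since the two descriptions correspond particle by particle up to the first exit from $\cX$, the continuity equation statement follows by testing against smooth functions and applying the chain rule to each point mass.

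For the dissipation identity we use the original gradient structure,
\[
\frac{\dd}{\dd t}\cL=-\int\bigl(r^2B^2+a^2\|\mathcal T\|^2\bigr)\,\dd\varrho.
\]
This holds because $\partial_a$ of the loss equals $rB$ and $\nabla_w$ of the loss equals $a\mathcal T$. We then split $\|\mathcal T\|^2=B^2+\|\proj_s^\perp\mathcal T\|^2$ using \eqref{eq:B-sT}. This gives coefficient weight $r^2+a^2=\mu$ and direction weight $a^2=(\mu+\delta)/2$, which is exactly \eqref{eq:loss-dissipation}.

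The only delicate point is justifying the differentiation of $\cL$ and of the moments in time. That requires the zero-mass assumption on switching hyperplanes, so that $\nabla_w\E[R\relu(w^\top x)]$ is the indicator formula, together with finite second moments for dominated convergence. Conservation of the $\delta$-marginal is immediate from $V_\delta=0$.
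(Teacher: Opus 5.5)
Your proposal is correct and follows essentially the same route as the paper. Like the paper, you rewrite the characteristics as $\dot a=-rB_\rho(s)$ and $\dot w=-a\mathcal T_\rho(s)$ by homogeneity, differentiate $r,s,q,\delta$ using $B_\rho(s)=s^\top\mathcal T_\rho(s)$, and identify $a^2+r^2=\mu$ and $a/r=\chi$. You then get the dissipation identity by splitting $\|\mathcal T_\rho(s)\|^2=B_\rho(s)^2+\|\proj_s^\perp\mathcal T_\rho(s)\|^2$ with weights $a^2+r^2=\mu$ and $a^2=(\mu+\delta)/2$.

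One small slip in your converse concerns the system $r\dot a+a\dot r=\dot q$, $a\dot a-r\dot r=\dot\delta/2$. It should be solved for $(\dot a,\dot r)$, where its determinant is $-(a^2+r^2)\neq0$. Solving for $(a\dot a,r\dot r)$ instead degenerates at $a=0$, which is exactly the $q=0$, $\delta<0$ case the chart is meant to cover.
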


\begin{proof}
Since $\relu(w^\top x)=r\relu(s^\top x)$, \eqref{eq:original-characteristics} becomes
\begin{equation}
\label{eq:a-w-short}
 \dot a=-rB_\rho(s),
 \qquad
 \dot w=-a\mathcal T_\rho(s).
\end{equation}
Taking radial and tangential components gives
\begin{equation}
\label{eq:r-s-dot}
 \dot r=s^\top\dot w=-aB_\rho(s),
 \qquad
 \dot s=\frac1r\proj_s^\perp\dot w
 =-\frac ar\proj_s^\perp\mathcal T_\rho(s).
\end{equation}
Therefore
\begin{align}
 \dot q
 &=r\dot a+a\dot r
 =-(r^2+a^2)B_\rho(s)
 =-\mu(q,\delta)B_\rho(s),
 \label{eq:qdot-proof}
 \\
 \dot\delta
 &=2a\dot a-2r\dot r
 =-2arB_\rho(s)+2arB_\rho(s)
 =0.
 \label{eq:deltadot-proof}
\end{align}
The identity $a/r=\chi(q,\delta)$ yields the stated $s$-velocity.  Applying the chain rule to test functions proves the pushforward continuity equation.  The inverse calculation gives the converse.

For dissipation, the original Euclidean gradient flow gives
\begin{align}
 \frac{\dd}{\dd t}\cL
 &=-\int\left[r^2B_\rho(s)^2+a^2\|\mathcal T_\rho(s)\|^2\right]\dd\rho
 \notag\\
 &=-\int\left[(r^2+a^2)B_\rho(s)^2
 +a^2\|\proj_s^\perp\mathcal T_\rho(s)\|^2\right]\dd\rho,
 \label{eq:dissipation-expand}
\end{align}
where \eqref{eq:B-sT} was used to decompose $\mathcal T=B s+\proj_s^\perp\mathcal T$.  Using
\[
 r^2+a^2=\mu(q,\delta),
 \qquad
 a^2=\frac{\mu(q,\delta)+\delta}{2},
\]
proves \eqref{eq:loss-dissipation}.
\end{proof}

For fixed $q\ne0$, the relative directional weight
$a^2/(a^2+r^2)=\tfrac12(1+\delta/\sqrt{\delta^2+4q^2})$
increases from $0$ to $1$ as $\delta$ ranges from $-\infty$ to $+\infty$.  The coefficient rate depends on $|\delta|$, while the directional rate also depends on its sign.

\begin{remark}
\label{rem:weight-decay-marked}
Consider the regularized objective
\[
 \cL_{\lambda_{\rm wd}}(a,w)
 =\cL(a,w)+\frac{\lambda_{\rm wd}}{2}\bigl(a^2+\|w\|^2\bigr),
 \qquad \lambda_{\rm wd}\ge0.
\]
The original-parameter flow becomes
\[
 \dot a=-rB-\lambda_{\rm wd}a,
 \qquad
 \dot w=-a\mathcal T-\lambda_{\rm wd}w.
\]
On every regular neuron, the variables $(q,s,\delta)$ satisfy the exact identities
\begin{equation}
\label{eq:weight-decay-marked}
 \dot q=-\mu(q,\delta)B-2\lambda_{\rm wd}q,
 \qquad
 \dot s=-\chi(q,\delta)\proj_s^\perp\mathcal T,
 \qquad
 \dot\delta=-2\lambda_{\rm wd}\delta.
\end{equation}
Hence $\delta(t)=\delta(0)e^{-2\lambda_{\rm wd}t}$.  Symmetric weight decay preserves its sign and reduces its magnitude.  On the $O(D^{-1})$ time scale for $\delta=+D$, fixed $\lambda_{\rm wd}$ changes $\delta$ by a relative $O(\lambda_{\rm wd}/D)$ amount.  Retaining the unregularized $O(D)$ specialization scale for $\delta=-D$ requires $\lambda_{\rm wd}D$ to be small.
\end{remark}

\subsection{Effect of node rescaling beyond two layers}

The following proposition gives the effect of node rescaling on the gradient at a fixed parameter state.

\begin{proposition}
\label{prop:deep-node-mobility}
Consider a finite feedforward ReLU computation graph with independently parameterized edges and no normalization layers.  Fix a hidden node with preactivation
\[
 z=u^\top h+b,
\]
where $h$ is the vector of activations entering the node, with outgoing parameter vector $v$ and all remaining parameters collected in $\xi$.  Write the incoming affine parameter as $\bar u:=(u,b)$.  The bias-free case omits $b$ from the parameter block.  Assume $\bar u\neq0$ and define, for $\rho>0$,
\[
 \bar u^{(\rho)}:=\bar u/\rho,
 \qquad
 v^{(\rho)}:=\rho v,
 \qquad
 \Theta^{(\rho)}:=g_\rho\Theta:=(\bar u^{(\rho)},v^{(\rho)},\xi).
\]
Then the realized network function is invariant:
\begin{equation}
\label{eq:deep-node-function-invariance}
 f_{g_\rho\Theta}=f_\Theta.
\end{equation}
Let $\cL$ be any differentiable loss that depends on the parameters only through the realized network outputs.  At every differentiability point,
\begin{equation}
\label{eq:deep-node-gradient-scaling}
 \nabla_{\bar u^{(\rho)}}\cL(\Theta^{(\rho)})
 =\rho\,\nabla_{\bar u}\cL(\Theta),
 \qquad
 \nabla_{v^{(\rho)}}\cL(\Theta^{(\rho)})
 =\rho^{-1}\nabla_v\cL(\Theta),
 \qquad
 \nabla_\xi\cL(\Theta^{(\rho)})=\nabla_\xi\cL(\Theta).
\end{equation}
Under Euclidean gradient flow, write $\widehat{\bar u}=\bar u/\|\bar u\|$ and, when $v\neq0$, $\widehat v=v/\|v\|$.  The instantaneous normalized-direction velocities at the two rescaled states satisfy
\begin{equation}
\label{eq:deep-node-direction-mobility}
 \left.\frac{\dd}{\dd t}\widehat{\bar u^{(\rho)}}\right|_{\Theta^{(\rho)}}
 =\rho^2\left.\frac{\dd}{\dd t}\widehat{\bar u}\right|_\Theta,
 \qquad
 \left.\frac{\dd}{\dd t}\widehat{v^{(\rho)}}\right|_{\Theta^{(\rho)}}
 =\rho^{-2}\left.\frac{\dd}{\dd t}\widehat v\right|_\Theta.
\end{equation}
Node rescaling changes the relative speeds of the incoming and outgoing directions.  If both velocities are nonzero and $\rho\neq1$, their joint velocity is not proportional to the original velocity.
\end{proposition}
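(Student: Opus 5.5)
The plan is to prove the three claims in turn, using only the positive homogeneity of ReLU at the fixed node and the chain rule.

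\emph{Function invariance.} The node's activation after rescaling is $\relu(\bar u^\top(h,1)/\rho)=\rho^{-1}\relu(z)$, because $\rho>0$. The node affects downstream computations only through products $v_\ell\,\relu(z)$ with its outgoing weights, and each such product is unchanged when $v$ is replaced by $\rho v$. The incoming activations $h$ and every other node depend only on $\xi$ and upstream quantities, so they are unchanged; by induction along a topological order every activation other than this node's is the same. Hence $f_{g_\rho\Theta}=f_\Theta$, and since the graph has no normalization layers, no other node depends on $\bar u$ or $v$ separately. This gives \eqref{eq:deep-node-function-invariance}.

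\emph{Gradient scaling.} By the previous step, $\cL(g_\rho\Theta)=\cL(\Theta)$ for all $\Theta$ and all $\rho>0$. The map $g_\rho$ is linear and block diagonal, with blocks $\rho^{-1}I$ on $\bar u$, $\rho I$ on $v$, and $I$ on $\xi$. Differentiating the identity $\cL\circ g_\rho=\cL$ at a differentiability point, which is preserved by $g_\rho$ since it is a linear bijection, gives $g_\rho^\top\nabla\cL(g_\rho\Theta)=\nabla\cL(\Theta)$. Reading off the blocks yields $\rho^{-1}\nabla_{\bar u^{(\rho)}}\cL(\Theta^{(\rho)})=\nabla_{\bar u}\cL(\Theta)$, $\rho\nabla_{v^{(\rho)}}\cL(\Theta^{(\rho)})=\nabla_v\cL(\Theta)$, and equality of the $\xi$ blocks. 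This is \eqref{eq:deep-node-gradient-scaling}.

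\emph{Direction velocities.} For a vector $p\neq0$ moving with velocity $\dot p$, the normalized direction moves with velocity $\frac{\dd}{\dd t}(p/\|p\|)=\|p\|^{-1}(I-\hat p\hat p^\top)\dot p$. Under gradient flow at $\Theta^{(\rho)}$, the incoming block has $\|\bar u^{(\rho)}\|=\|\bar u\|/\rho$, the same unit direction, and velocity $-\rho\nabla_{\bar u}\cL(\Theta)$. Substituting gives $\rho\cdot\rho=\rho^2$ times the original direction velocity. For the outgoing block, the norm becomes $\rho\|v\|$ and the velocity is scaled by $\rho^{-1}$, which gives the factor $\rho^{-2}$. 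This proves \eqref{eq:deep-node-direction-mobility}.

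\emph{Non-proportionality.} Suppose the joint velocity $(\rho^2 A,\rho^{-2}B)$ equals $\lambda(A,B)$ for some scalar $\lambda$, where $A$ and $B$ are both nonzero. The first block forces $\lambda=\rho^2$ and the second forces $\lambda=\rho^{-2}$, so $\rho^4=1$. This contradicts $\rho\neq1$ with $\rho>0$.

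The only step that needs care is the differentiability claim. The loss may fail to be differentiable on ReLU switching sets, so I would note that $g_\rho$ maps differentiability points to differentiability points and restrict all statements to those points. The rest of the argument is direct computation.
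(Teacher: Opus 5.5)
Your proof is correct and follows essentially the same route as the paper's: positive homogeneity at the node gives function invariance, differentiating the identity $\cL\circ g_\rho=\cL$ blockwise gives the gradient scaling, and the normalized-direction derivative formula with $\|\bar u^{(\rho)}\|=\|\bar u\|/\rho$ and $\|v^{(\rho)}\|=\rho\|v\|$ gives the factors $\rho^{\pm2}$. Your remark that the linear bijection $g_\rho$ maps differentiability points to differentiability points states more explicitly what the paper covers by restricting to differentiability regions.
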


\begin{proof}
Positive homogeneity gives
\[
 \relu\!\left((u/\rho)^\top h+b/\rho\right)
 =\rho^{-1}\relu(u^\top h+b).
\]
Multiplying every outgoing edge by $\rho$ leaves each downstream preactivation unchanged.  The complete network output is therefore unchanged, which proves \eqref{eq:deep-node-function-invariance}.

The identity
\[
 \cL(\bar u/\rho,\rho v,\xi)=\cL(\bar u,v,\xi)
\]
holds for all $(\bar u,v,\xi)$ in every differentiability region.  Differentiating it with respect to the three parameter blocks yields \eqref{eq:deep-node-gradient-scaling}.  Under Euclidean gradient flow,
\[
 \dot{\bar u}^{(\rho)}
 =-\nabla_{\bar u^{(\rho)}}\cL(\Theta^{(\rho)})
 =\rho\dot{\bar u},
 \qquad
 \dot v^{(\rho)}
 =-\nabla_{v^{(\rho)}}\cL(\Theta^{(\rho)})
 =\rho^{-1}\dot v.
\]
For any nonzero vector $z$, writing $\widehat z=z/\|z\|$,
\[
 \frac{\dd}{\dd t}\frac z{\|z\|}
 =\frac1{\|z\|}\left(I-\widehat z\widehat z^\top\right)\dot z.
\]
The normalized directions are unchanged by positive scaling, while $\|\bar u^{(\rho)}\|=\|\bar u\|/\rho$ and $\|v^{(\rho)}\|=\rho\|v\|$.  Equation~\eqref{eq:deep-node-direction-mobility} follows.  The two nonzero blocks are multiplied by distinct factors $\rho^2$ and $\rho^{-2}$, which proves the final claim.
\end{proof}

\begin{corollary}
\label{cor:node-mobility-marked-link}
Consider a two-layer scalar-output ReLU neuron with fixed $q=ar>0$.  Start from the balanced representation $a=r=\sqrt q$ and apply the reciprocal rescalings $\rho$ and $\rho^{-1}$ with $\rho\ge1$.  The resulting imbalance variables are
\[
 \delta_+=q(\rho^2-\rho^{-2}),
 \qquad
 \delta_-=-q(\rho^2-\rho^{-2}),
\]
and their input-direction speeds are $\rho^2$ and $\rho^{-2}$ times the balanced speed.  Their ratio is therefore $\rho^4$.  If $D=|\delta_\pm|$ and $q$ is fixed, then $\rho^2=\Theta(D)$ and $\rho^4=\Theta(D^2)$ as $D\to\infty$.
\end{corollary}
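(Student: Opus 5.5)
The plan is to specialize \cref{prop:deep-node-mobility} to the two-layer neuron and reduce everything to explicit algebra. Take $a=r=\sqrt q$, so the balanced state has $\delta=0$. Rescaling the incoming weight by $1/\rho$ and the outgoing weight by $\rho$ gives $r_+=\sqrt q/\rho$ and $a_+=\rho\sqrt q$. This leaves $q=a_+r_+$ unchanged, as \eqref{eq:deep-node-function-invariance} requires. Then
\[
\delta_+=a_+^2-r_+^2=q(\rho^2-\rho^{-2}).
\]
The reciprocal rescaling $\rho^{-1}$ swaps the roles of $a$ and $r$ and gives $\delta_-=-\delta_+$.

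For the speeds, I apply \eqref{eq:deep-node-direction-mobility} with $\bar u=w$ (no bias). The normalized input direction $s$ has velocity $\rho^2$ times the balanced velocity at $g_\rho\Theta$, and $\rho^{-2}$ times it at $g_{\rho^{-1}}\Theta$. The ratio is $\rho^4$.

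As an independent check, the same factors follow from \eqref{eq:marked-velocities}. There $\dot s=-(a/r)\proj_s^\perp\mathcal T$, and $a/r$ equals $1$, $\rho^2$, $\rho^{-2}$ in the three states. The residual moment $\mathcal T$ is the same in all three because the realized function is identical.

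For the asymptotics, fix $q>0$ and set $D=q(\rho^2-\rho^{-2})$. The map $\rho\mapsto\rho^2-\rho^{-2}$ is increasing on $[1,\infty)$, so $D\to\infty$ exactly when $\rho\to\infty$. Solving the quadratic in $\rho^2$ gives
\[
\rho^2=\frac{D+\sqrt{D^2+4q^2}}{2q}.
\]
This lies between $D/q$ and $(D+\sqrt{D^2+4q^2})/(2q)\le D/q+1$. Hence $\rho^2=\Theta(D)$, and squaring gives $\rho^4=\Theta(D^2)$. This matches the ratio $\chi(q,+D)/\chi(q,-D)$ computed in the main text.

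There is no real obstacle here. The only point needing care is confirming that the speed comparison is made at a common residual, and this is guaranteed by the function invariance in \cref{prop:deep-node-mobility}.
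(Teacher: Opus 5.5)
Your proposal is correct and matches the paper's implicit argument: the corollary is a direct specialization of \cref{prop:deep-node-mobility} with $\bar u=w$ and $v=a$, combined with the elementary computation of $\delta_\pm$ and the inversion $\rho^2=(D+\sqrt{D^2+4q^2})/(2q)=\chi(q,+D)$. Your cross-check through $a/r=\chi$ in \eqref{eq:marked-velocities} and your bounds $D/q\le\rho^2\le D/q+1$ are both sound and agree with the main-text ratio $\chi(q,+D)/\chi(q,-D)$.
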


\begin{remark}
\label{rem:local-global-mobility}
\Cref{prop:deep-node-mobility} applies at any differentiability point for any data distribution, loss, and network depth.  The global results use additional control along the trajectory and the Gaussian teacher--student structure.  They are proved for bias-free two-layer networks.
\end{remark}

\begin{remark}
\label{rem:gauge-equivariant-updates}
Let $\mathcal U$ be a deterministic one-step update map and suppose it is exactly equivariant under node rescaling:
\[
 \mathcal U(g_\rho\Theta)=g_\rho\mathcal U(\Theta)
 \qquad\text{for every admissible }(\rho,\Theta).
\]
If $\widetilde\Theta_0=g_\rho\Theta_0$, induction gives $\widetilde\Theta_n=g_\rho\Theta_n$ for every $n$.  Hence $f_{\widetilde\Theta_n}=f_{\Theta_n}$ throughout training.  The continuous-time analogue is a vector field satisfying $V(g_\rho\Theta)=Dg_\rho(\Theta)V(\Theta)$, where $Dg_\rho$ denotes the derivative of the rescaling map.  An update with this equivariance preserves the common function trajectory of rescaled initializations.  Path-SGD is motivated by a rescaling-invariant path geometry \citep{neyshabur2015}.  
\end{remark}

\subsection{Gaussian ReLU kernel and closed ODE}

Let $d\ge2$ and $x\sim\mathcal N(0,I_d)$.  Fix a teacher
\begin{equation}
\label{eq:teacher-student}
 f_\star(x)=q_\star\relu(s_\star^\top x),
 \qquad
 q_\star>0,
 \qquad
 \|s_\star\|=1,
\end{equation}
and a student
\[
 f(x)=q\relu(s^\top x),
 \qquad q>0,
 \qquad \|s\|=1.
\]
Set $c=s^\top s_\star$.  Define the arc-cosine kernel
\begin{equation}
\label{eq:kappa}
 \kappa(c)
 :=\mathbb E[\relu(s^\top x)\relu(s_\star^\top x)]
 =\frac{\sqrt{1-c^2}+(\pi-\arccos c)c}{2\pi}.
\end{equation}
It satisfies
\begin{equation}
\label{eq:kappa-properties}
 \kappa(-1)=0,
 \qquad
 \kappa(1)=\frac12,
 \qquad
 \kappa'(c)=\frac{\pi-\arccos c}{2\pi}>0
 \quad(-1<c\le1).
\end{equation}
Closed-form population gradients for Gaussian ReLU teacher--student models are classical \citep{tian2017}.  Here we combine them with the conserved scale-imbalance variable $\delta$.

\begin{lemma}
\label{lem:gaussian-moments}
For the one-teacher model,
\begin{equation}
\label{eq:B-teacher}
 B(s)=\frac q2-q_\star\kappa(c),
\end{equation}
and
\begin{equation}
\label{eq:PT-teacher}
 \proj_s^\perp\mathcal T(s)
 =-q_\star\kappa'(c)\proj_s^\perp s_\star.
\end{equation}
The population loss is
\begin{equation}
\label{eq:loss-teacher}
 \cL(q,c)=\frac{q^2}{4}+\frac{q_\star^2}{4}-qq_\star\kappa(c).
\end{equation}
\end{lemma}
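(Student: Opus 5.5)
The plan is to compute the residual $R=q\relu(s^\top x)-q_\star\relu(s_\star^\top x)$ explicitly and take Gaussian expectations term by term, using rotation invariance of $\mathcal N(0,I_d)$ to reduce every expectation to the two-dimensional plane spanned by $s$ and $s_\star$. The key inputs are the second moment $\E[\relu(s^\top x)^2]=\tfrac12$, which holds since $s^\top x\sim\mathcal N(0,1)$ and half its variance lies on the positive side, and the definition of $\kappa$ in \eqref{eq:kappa} as the cross moment.

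\textbf{The moment $B$.} From \eqref{eq:residual-moments}, $B(s)=q\,\E[\relu(s^\top x)^2]-q_\star\E[\relu(s^\top x)\relu(s_\star^\top x)]=q/2-q_\star\kappa(c)$, which gives \eqref{eq:B-teacher} directly.

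\textbf{The loss.} Expanding the square gives
\[
\cL=\tfrac12\bigl(q^2\E[\relu(s^\top x)^2]+q_\star^2\E[\relu(s_\star^\top x)^2]-2qq_\star\kappa(c)\bigr)=\frac{q^2}{4}+\frac{q_\star^2}{4}-qq_\star\kappa(c),
\]
which is \eqref{eq:loss-teacher}.

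\textbf{The tangential moment.} We have $\mathcal T(s)=q\,\E[\relu(s^\top x)\1_{\{s^\top x>0\}}x]-q_\star\E[\relu(s_\star^\top x)\1_{\{s^\top x>0\}}x]$, and the first term equals $q\,\E[\relu(s^\top x)x]=q\,\nabla_s\tfrac12\E[\relu(s^\top x)^2]\big|_{\|s\|=1}$. Rather than bound terms individually, view $G(w)=\E[\relu(w^\top x)\relu(s_\star^\top x)]$ as a function on all of $\R^d$. By homogeneity, $G(w)=\|w\|\,\kappa(w^\top s_\star/\|w\|)$. Differentiating under the expectation is justified because the hyperplane $\{w^\top x=0\}$ has zero Gaussian mass and the integrand is Lipschitz in $w$ with an integrable constant. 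This gives
\[
\nabla G(w)=\E[\1_{\{w^\top x>0\}}\relu(s_\star^\top x)x].
\]
Computing the gradient of the homogeneous form at $\|w\|=1$ yields $\nabla G(s)=\kappa(c)s+\kappa'(c)\proj_s^\perp s_\star$. Similarly, the self term is $\E[\1_{\{s^\top x>0\}}\relu(s^\top x)x]=\nabla_w\tfrac12\|w\|^2\big|_{w=s}=\tfrac12 s$, which has no tangential component. Therefore $\proj_s^\perp\mathcal T(s)=-q_\star\kappa'(c)\proj_s^\perp s_\star$, which is \eqref{eq:PT-teacher}. As a consistency check, the radial part $s^\top\mathcal T=q/2-q_\star\kappa(c)$ matches \eqref{eq:B-sT}.

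The formula for $\kappa'$ in \eqref{eq:kappa-properties} follows from $\frac{d}{dc}\bigl[\sqrt{1-c^2}+(\pi-\arccos c)c\bigr]=\pi-\arccos c$, since the term $-c/\sqrt{1-c^2}$ from the square root cancels the term $c/\sqrt{1-c^2}$ coming from differentiating $-\arccos c$ inside the product.

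The only delicate step is justifying differentiation under the expectation, and handling $c=\pm1$, where $\proj_s^\perp s_\star=0$ so both sides of \eqref{eq:PT-teacher} vanish. I would dispatch it by dominated convergence using the zero-mass hyperplane assumption.
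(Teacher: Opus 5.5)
Your proposal is correct and follows essentially the paper's proof: the second-moment identity and the definition of $\kappa$ give $B$ and the loss, and the tangential moment comes from differentiating the homogeneous cross-moment (the paper takes only the directional derivative along $v\perp s$, you take the full gradient of $\|w\|\kappa(w^\top s_\star/\|w\|)$), with the self term parallel to $s$. One cosmetic slip: since $\tfrac12\E[\relu(w^\top x)^2]=\tfrac14\|w\|^2$, the self term is $\nabla_w\tfrac14\|w\|^2=\tfrac12 w$, not $\nabla_w\tfrac12\|w\|^2$; your value $\tfrac12 s$ is nevertheless right, and only its direction matters for the tangential projection.
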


\begin{proof}
The identities $\mathbb E[\relu(s^\top x)^2]=1/2$ and \eqref{eq:kappa} give \eqref{eq:B-teacher} and \eqref{eq:loss-teacher}.  For a tangent perturbation $v\perp s$,
\[
 v^\top\mathbb E\left[x\1_{\{s^\top x>0\}}\relu(s_\star^\top x)\right]
 =\frac{\dd}{\dd\epsilon}\bigg|_{\epsilon=0}
 \kappa\!\left(\frac{s+\epsilon v}{\|s+\epsilon v\|}\cdot s_\star\right)
 =\kappa'(c)v^\top s_\star.
\]
The self term is parallel to $s$, so tangential projection yields \eqref{eq:PT-teacher}.
\end{proof}

\begin{theorem}
\label{thm:teacher-ode}
Fix a scale imbalance $\delta\in\R$.  On the regular region $q>0$ and $-1<c<1$, population gradient flow is exactly
\begin{equation}
\label{eq:teacher-ode}
 \dot q
 =-\mu(q,\delta)\left(\frac q2-q_\star\kappa(c)\right),
 \qquad
 \dot c
 =\chi(q,\delta)q_\star\kappa'(c)(1-c^2),
 \qquad
 \dot\delta=0.
\end{equation}
The loss dissipates according to
\begin{equation}
\label{eq:teacher-loss-diss}
 \dot\cL
 =-\mu(q,\delta)
 \left(\frac q2-q_\star\kappa(c)\right)^2
 -\frac{\mu(q,\delta)+\delta}{2}
 q_\star^2\kappa'(c)^2(1-c^2).
\end{equation}
\end{theorem}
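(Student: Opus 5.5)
The plan is to specialize \cref{thm:marked-reduction} to a single particle and substitute the Gaussian moments of \cref{lem:gaussian-moments}. For one neuron, \eqref{eq:marked-velocities} gives
\[
 \dot q=-\mu(q,\delta)B(s),\qquad \dot s=-\chi(q,\delta)\proj_s^\perp\mathcal T(s),\qquad \dot\delta=0,
\]
valid as long as $w\ne0$. On the region $q>0$ this holds automatically, since $r^2=(\mu-\delta)/2>0$ whenever $q\ne0$. Substituting $B(s)=q/2-q_\star\kappa(c)$ from \eqref{eq:B-teacher} immediately gives the $q$-equation.

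For the alignment, I will differentiate $c=s^\top s_\star$. This gives $\dot c=s_\star^\top\dot s=-\chi\, s_\star^\top\proj_s^\perp\mathcal T(s)$. By \eqref{eq:PT-teacher}, the right-hand side equals $\chi q_\star\kappa'(c)\,s_\star^\top\proj_s^\perp s_\star$. Then I compute
\[
 s_\star^\top(I-ss^\top)s_\star=1-c^2,
\]
which yields $\dot c=\chi q_\star\kappa'(c)(1-c^2)$. The only point needing care is that the dynamics close in $(q,c)$. This holds because the right-hand sides depend on $s$ only through $c$, by rotational invariance of the Gaussian and the lemma's formulas. So the projected system is exact rather than an approximation.

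For the dissipation, I will apply \eqref{eq:loss-dissipation} with the single atom. The coefficient term is $\mu B^2$. For the direction term, I take the squared norm of \eqref{eq:PT-teacher}:
\[
 \|\proj_s^\perp\mathcal T\|^2=q_\star^2\kappa'(c)^2\|\proj_s^\perp s_\star\|^2=q_\star^2\kappa'(c)^2(1-c^2).
\]
This gives \eqref{eq:teacher-loss-diss}.

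As a consistency check, I can instead differentiate \eqref{eq:loss-teacher}. This gives $\dot\cL=(q/2-q_\star\kappa)\dot q-q q_\star\kappa'\dot c$. Inserting the ODE and using $q\chi=2q^2/(\mu-\delta)=(\mu+\delta)/2$ recovers the same formula, because $4q^2=\mu^2-\delta^2$.

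I expect no real obstacle here. The main subtleties are the regularity assumptions, namely that the Gaussian measure gives zero mass to the switching hyperplanes so the gradient formulas are valid, and the identity $q\chi=(\mu+\delta)/2$.
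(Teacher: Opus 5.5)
Your proposal is correct and is essentially the paper's proof. Like the paper, you insert the Gaussian moments \eqref{eq:B-teacher}--\eqref{eq:PT-teacher} into \cref{thm:marked-reduction} for a single particle and use $\|\proj_s^\perp s_\star\|^2=1-c^2$; your explicit computation of $\dot c=s_\star^\top\dot s$ and your independent check via $q\chi(q,\delta)=(\mu+\delta)/2$ are correct elaborations of that argument.
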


\begin{proof}
Insert \eqref{eq:B-teacher}--\eqref{eq:PT-teacher} into \cref{thm:marked-reduction} and use
\[
 \|\proj_s^\perp s_\star\|^2=1-c^2.
\]
\end{proof}

\subsection{Invariant region and global convergence}
To compare specialization times, we first need to know that each trajectory reaches the teacher. The coefficient bounds below keep the alignment equation regular and allow us to follow the solution for all time.

Define
\begin{equation}
\label{eq:h-def}
 h(c):=2q_\star\kappa(c).
\end{equation}
Then the coefficient equation is
\begin{equation}
\label{eq:q-tracking}
 \dot q=-\frac{\mu(q,\delta)}{2}\bigl(q-h(c)\bigr).
\end{equation}

\begin{theorem}
\label{thm:global-specialization}
Assume
\[
 q(0)=q_0>0,
 \qquad
 -1<c(0)=c_0<1.
\]
Let
\begin{equation}
\label{eq:q-bounds}
 \underline q:=\min\{q_0,h(c_0)\}>0,
 \qquad
 \overline q:=\max\{q_0,q_\star\}.
\end{equation}
Then the solution of \eqref{eq:teacher-ode} exists for all $t\ge0$ and satisfies
\begin{equation}
\label{eq:invariant-rectangle}
 \underline q\le q(t)\le\overline q,
 \qquad
 c_0\le c(t)<1.
\end{equation}
Moreover,
\begin{equation}
\label{eq:global-limit}
 c(t)\nearrow1,
 \qquad
 q(t)\longrightarrow q_\star,
 \qquad
 \cL(q(t),c(t))\longrightarrow0.
\end{equation}
\end{theorem}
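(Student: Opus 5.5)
The plan is to establish the invariant rectangle \eqref{eq:invariant-rectangle} first, then use monotonicity of $c$ and the tracking form \eqref{eq:q-tracking} to obtain convergence.

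\textbf{Regularity and local existence.} On $q>0$ and $-1<c<1$ the right side of \eqref{eq:teacher-ode} is locally Lipschitz. Indeed, $\mu$ is smooth when $q\ne0$, and $\chi(q,\delta)$ is smooth whenever $q>0$, because then $\mu>|\delta|\ge\delta$. The function $\kappa'(c)(1-c^2)$ is smooth on $(-1,1)$ and continuous up to $c=1$. Hence a unique maximal solution exists.

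\textbf{Monotone alignment.} By \eqref{eq:kappa-properties}, $\kappa'(c)>0$ on $(-1,1]$. Also $\chi>0$ when $q>0$. So $\dot c>0$ as long as $q>0$, which gives $c(t)\ge c_0$. To rule out $c$ reaching $1$ in finite time, note that near $c=1$ we have $\dot c\le C(1-c^2)\le 2C(1-c)$. Comparison then gives $1-c(t)\ge(1-c_0)e^{-2Ct}>0$. The constant $C$ comes from a bound on $\chi$ over the compact $q$-range established next.

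\textbf{Coefficient bounds.} Since $h$ is increasing and $c(t)\ge c_0$, we have $h(c(t))\ge h(c_0)$. By \eqref{eq:q-tracking}, $\dot q>0$ whenever $q<h(c(t))$. Therefore $q$ cannot cross downward below $\min\{q_0,h(c_0)\}=\underline q$. Formally, at a first time where $q=\underline q-\epsilon$ the derivative would be positive, which is a contradiction. Likewise $h(c)\le h(1)=q_\star$, so $\dot q<0$ whenever $q>q_\star$, and $q\le\max\{q_0,q_\star\}=\overline q$. With $q$ confined to $[\underline q,\overline q]$ and $c\in[c_0,1)$, the solution stays in a region where the vector field is bounded. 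Combined with the non-attainment of $c=1$, this gives global existence. Note also that $\mu\ge2\underline q>0$ and $\chi\ge\underline\chi>0$ uniformly on this rectangle.

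\textbf{Convergence.} The function $c(t)$ is increasing and bounded, so it has a limit $c_\infty\le1$. Suppose $c_\infty<1$. Then $\dot c\ge\underline\chi\,q_\star\kappa'(c_0)(1-c_\infty^2)>0$ for all $t$, so $c\to\infty$, which is a contradiction. Hence $c\nearrow1$. For $q$, write $e=q-h(c(t))$. Then
\[
 \dot e=-\tfrac{\mu}{2}e-h'(c)\dot c,
\]
with $\mu/2\ge\underline q$ and $h'(c)\dot c\to0$; the latter holds because $\dot c\le C(1-c^2)\to0$. A Gr\"onwall/variation-of-constants estimate then gives $e(t)\to0$, so $q\to h(1)=q_\star$. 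Finally, $\cL\to q_\star^2/4+q_\star^2/4-q_\star^2/2=0$ by \eqref{eq:loss-teacher} and continuity.

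The main obstacle is the boundary behavior at $c=1$: the field degenerates there, and the argument must keep $c<1$ while still forcing $c\to1$. The uniform two-sided bounds on $\chi$ coming from the $q$-rectangle handle both directions. An independent check is that the loss \eqref{eq:teacher-loss-diss} is nonincreasing, which also bounds $q$.
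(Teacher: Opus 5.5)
Your argument is correct and follows essentially the paper's route: inward-pointing barriers at $\underline q$ and $\overline q$, strict monotonicity of $c$ with a compactness contradiction ruling out $c_\infty<1$, and $q$ tracking $h(c)\to q_\star$. The only differences are that you make finite-time non-arrival at $c=1$ explicit via the Gr\"onwall bound $1-c(t)\ge(1-c_0)e^{-2Ct}$, where the paper simply invokes invariance of $c=1$, and that you close $q\to q_\star$ with a variation-of-constants estimate for $e=q-h(c)$ rather than the paper's $\eta$-trapping argument; note one small slip, namely that in the $c_\infty<1$ step $1-c(t)^2$ is bounded below by $\min\{1-c_0^2,\,1-c_\infty^2\}$, not by $1-c_\infty^2$, since $c_0$ may be negative with $|c_0|>|c_\infty|$.
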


\begin{proof}
Since $h$ is increasing and $0<h(c)\le q_\star$ on $c\in[c_0,1]$, the vector field at $q=\underline q$ points inward:
\[
 q=\underline q\Longrightarrow q-h(c)\le\underline q-h(c_0)\le0\Longrightarrow\dot q\ge0.
\]
At $q=\overline q$,
\[
 q=\overline q\Longrightarrow q-h(c)\ge\overline q-q_\star\ge0\Longrightarrow\dot q\le0.
\]
Thus $q\in[\underline q,\overline q]$.  On this compact interval $\chi(q,\delta)>0$.  Since $\kappa'(c)>0$ on $(-1,1)$, $c$ is strictly increasing until it reaches $1$, and $c=1$ is invariant.

Let $c_\infty=\lim_t c(t)$.  If $c_\infty<1$, then on the compact rectangle $[\underline q,\overline q]\times[c_0,c_\infty]$ the coefficient
\[
 \chi(q,\delta)q_\star\kappa'(c)(1-c^2)
\]
has a strictly positive lower bound, contradicting convergence of $c(t)$.  Hence $c_\infty=1$.

For any $\eta>0$, eventually $|h(c(t))-q_\star|<\eta$.  Outside $[q_\star-2\eta,q_\star+2\eta]$, equation \eqref{eq:q-tracking} then gives a drift of magnitude at least $\eta\min_{[\underline q,\overline q]}\mu/2$ toward this interval.  The trajectory enters it in finite time and cannot subsequently leave.  Letting $\eta\downarrow0$ proves $q(t)\to q_\star$, and \eqref{eq:loss-teacher} gives $\cL\to0$.
\end{proof}

\subsection{Function-preserving rescaling and quadratic timescale separation}

Fix $D>0$ and a common coefficient and direction $(q,s)$.  The two scale imbalances $\delta=\pm D$ are realized by
\begin{equation}
\label{eq:two-gauges}
 \begin{aligned}
 a_\pm^2&=\frac{\sqrt{D^2+4q^2}\pm D}{2},
 &
 r_\pm^2&=\frac{\sqrt{D^2+4q^2}\mp D}{2},
 &
 a_\pm r_\pm&=q.
 \end{aligned}
\end{equation}
They have identical predictors and losses.  At the same state $(q,c)$, their coefficient rate factor is identical:
\begin{equation}
\label{eq:reaction-same}
 \mu(q,+D)=\mu(q,-D)=\sqrt{D^2+4q^2}.
\end{equation}
Their directional rate factors are
\begin{equation}
\label{eq:chi-pm}
 \chi_+(q,D)=\frac{\sqrt{D^2+4q^2}+D}{2q},
 \qquad
 \chi_-(q,D)=\frac{\sqrt{D^2+4q^2}-D}{2q},
 \qquad
 \chi_+\chi_-=1.
\end{equation}

For $0<\varepsilon<1-c_0$, define the specialization hitting time
\begin{equation}
\label{eq:hitting-time}
 T_\delta(\varepsilon)
 :=\inf\{t\ge0:c_\delta(t)\ge1-\varepsilon\}.
\end{equation}
Let
\begin{equation}
\label{eq:Ieps}
 I_{c_0}(\varepsilon)
 :=\int_{c_0}^{1-\varepsilon}\frac{\dd c}{1-c^2}
 =\frac12\log\!\left(
 \frac{(2-\varepsilon)(1-c_0)}{\varepsilon(1+c_0)}
 \right),
\end{equation}
and $k_0:=\kappa'(c_0)>0$.

\begin{theorem}
\label{thm:D2-gap}
Under the assumptions of \cref{thm:global-specialization}, let $D\ge\overline q$.  For the two trajectories with the same initial $(q_0,c_0)$ and scale imbalances $\delta=\pm D$,
\begin{align}
 \frac{\underline q}{Dq_\star}I_{c_0}(\varepsilon)
 &\le T_{+D}(\varepsilon)
 \le\frac{\overline q}{Dq_\star k_0}I_{c_0}(\varepsilon),
 \label{eq:Tplus-bounds}
 \\
 \frac{2D}{\overline q q_\star}I_{c_0}(\varepsilon)
 &\le T_{-D}(\varepsilon)
 \le\frac{2D}{\underline q q_\star k_0}I_{c_0}(\varepsilon).
 \label{eq:Tminus-bounds}
\end{align}
It follows that
\begin{equation}
\label{eq:ratio-bounds}
 \frac{2k_0}{\overline q^{2}}D^2
 \le
 \frac{T_{-D}(\varepsilon)}{T_{+D}(\varepsilon)}
 \le
 \frac{2}{\underline q^{2}k_0}D^2.
\end{equation}
In particular, uniformly for $\varepsilon\downarrow0$,
\begin{equation}
 T_{+D}(\varepsilon)=\Theta\!\left(D^{-1}\log\frac1\varepsilon\right),
 \qquad
 T_{-D}(\varepsilon)=\Theta\!\left(D\log\frac1\varepsilon\right).
\end{equation}
\end{theorem}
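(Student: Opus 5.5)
Since $c$ increases strictly along each trajectory (\cref{thm:global-specialization}), the time coordinate can be traded for $c$ itself. This gives an exact integral for the hitting time:
\[
 T_\delta(\varepsilon)=\int_{c_0}^{1-\varepsilon}\frac{\dd c}{\chi(q_\delta(c),\delta)\,q_\star\kappa'(c)(1-c^2)},
\]
where $q_\delta(c)$ is the coefficient at the moment alignment $c$ is reached. The invariant rectangle \eqref{eq:invariant-rectangle} gives $q\in[\underline q,\overline q]$ and $c\in[c_0,1)$. Because $\kappa'$ is increasing with $\kappa'(1)=1/2$, we have $k_0\le\kappa'(c)\le 1/2$ along the trajectory. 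Once $\chi$ is bounded above and below uniformly on the rectangle, the integrand is comparable to $(1-c^2)^{-1}$. The integral then reduces to $I_{c_0}(\varepsilon)$ multiplied by the extreme values of $1/(\chi q_\star\kappa')$.

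The key step is bounding $\chi_\pm(q,D)$ from \eqref{eq:chi-pm} for $D\ge\overline q\ge q$. For the plus branch, $\sqrt{D^2+4q^2}+D$ lies between $2D$ and $2D+2q^2/D$, and $2D+2q^2/D\le 4D$ when $D\ge q$. This gives
\[
 \frac Dq\le\chi_+\le\frac{2D}q .
\]
Since $\chi_-=1/\chi_+$, it follows that
\[
 \frac q{2D}\le\chi_-\le\frac qD .
\]

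Substituting these into the integral gives both time bounds. For $T_{+D}$, the integrand $1/(\chi_+q_\star\kappa')$ lies between $q/(2Dq_\star\cdot\tfrac12)=q/(Dq_\star)$ and $q/(Dq_\star k_0)$. Replacing $q$ by $\underline q$ or $\overline q$ yields \eqref{eq:Tplus-bounds}. For $T_{-D}$, the integrand $1/(\chi_-q_\star\kappa')$ lies between $D/(q q_\star\cdot\tfrac12)\ge 2D/(\overline q q_\star)$ and $2D/(q q_\star k_0)\le 2D/(\underline q q_\star k_0)$, which gives \eqref{eq:Tminus-bounds}.

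The ratio bounds \eqref{eq:ratio-bounds} follow by dividing the lower bound of one time by the upper bound of the other. The common factor $I_{c_0}(\varepsilon)$ cancels, which is why these bounds hold for every $\varepsilon\in(0,1-c_0)$.

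For the asymptotic statement, $I_{c_0}(\varepsilon)=\tfrac12\log(1/\varepsilon)+O(1)$, with the $O(1)$ term depending only on $c_0$. When $\varepsilon\le(1-c_0)/2$, $I_{c_0}(\varepsilon)$ is bounded below by a positive constant depending on $c_0$. Together these give $I_{c_0}(\varepsilon)=\Theta(\log(1/\varepsilon))$, and the two $\Theta$ statements follow.

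The only delicate point is justifying the change of variables. It needs $\dot c>0$ strictly, which holds because $\chi>0$ for $q\ge\underline q>0$ (\cref{thm:global-specialization}), $\kappa'(c)>0$ on $(-1,1)$, and $1-c^2>0$ for $c<1$. It also needs $T_\delta(\varepsilon)$ to be finite, which holds because $c\to1$. Everything else is elementary bounding of $\chi_\pm$.
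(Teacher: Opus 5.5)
Your proposal is correct and follows the paper's proof essentially step for step: the same hitting-time integral in $c$, the same bounds $D/\overline q\le\chi_+\le 2D/\underline q$ and $\underline q/(2D)\le\chi_-\le\overline q/D$ (you obtain the latter from $\chi_+\chi_-=1$, the paper from the explicit formula), and $k_0\le\kappa'\le 1/2$, with $I_{c_0}(\varepsilon)$ cancelling in the ratio. For the uniform orders, the paper uses continuity of $I_{c_0}(\varepsilon)/\log(1/\varepsilon)$ extended to $\varepsilon=0$; this is equivalent to your expansion-plus-lower-bound argument, provided you also note that $\log(1/\varepsilon)\ge\log\bigl(2/(1-c_0)\bigr)>0$ on that range.
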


\begin{proof}
Because $c$ is strictly increasing, we may write $q_\delta(c)$ for the coefficient evaluated when the alignment equals $c$.  Thus
\begin{equation}
\label{eq:hitting-integral}
 T_\delta(\varepsilon)
 =\int_{c_0}^{1-\varepsilon}
 \frac{\dd c}{\chi(q_\delta(c),\delta)
 q_\star\kappa'(c)(1-c^2)}.
\end{equation}
By \cref{thm:global-specialization}, $\underline q\le q_\delta(c)\le\overline q$.  For $D\ge\overline q$,
\begin{equation}
\label{eq:chi-plus-bounds}
 \frac{D}{\overline q}\le\chi_+(q,D)\le\frac{2D}{\underline q},
\end{equation}
where the upper bound uses
\[
 \sqrt{D^2+4q^2}\le D+\frac{2q^2}{D}.
\]
For $\delta=-D$,
\begin{equation}
\label{eq:chi-minus-bounds}
 \frac{\underline q}{2D}\le\chi_-(q,D)\le\frac{\overline q}{D},
\end{equation}
using $\chi_-=2q/(\sqrt{D^2+4q^2}+D)$.  Also,
\[
 k_0\le\kappa'(c)\le\frac12
 \qquad(c_0\le c\le1).
\]
Insert these bounds into \eqref{eq:hitting-integral} to obtain \eqref{eq:Tplus-bounds}--\eqref{eq:Tminus-bounds}. Taking ratios gives \eqref{eq:ratio-bounds}. On $0<\varepsilon\le(1-c_0)/2$, the ratio $I_{c_0}(\varepsilon)/\log(1/\varepsilon)$ is positive and continuous, and extends to $\varepsilon=0$ with value $1/2$. It therefore has positive lower and finite upper bounds, giving the uniform orders in \cref{thm:main-gap}.
\end{proof}

\begin{corollary}
\label{cor:not-time-rescaling}
Fix $D>0$, $q_0>0$, and $-1<c_0<1$.  Let $(q_\pm,c_\pm)$ solve \eqref{eq:teacher-ode} with $\delta=\pm D$ and common initial state $(q_0,c_0)$.  For any $t_1>0$ and strictly increasing $C^1$ map $\tau:[0,t_1]\to[0,\infty)$ with $\tau(0)=0$, the identity
\[
 (q_+(t),c_+(t))=(q_-(\tau(t)),c_-(\tau(t)))
 \qquad(0\le t\le t_1)
\]
cannot hold throughout $[0,t_1]$.
At a common state off the nullcline $q/2=q_\star\kappa(c)$ the two vector fields are non-collinear.  On the nullcline their phase curves have different second derivatives.
\end{corollary}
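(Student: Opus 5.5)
We argue by contradiction, comparing the two vector fields of \eqref{eq:teacher-ode} at common states. Suppose $(q_+(t),c_+(t))=(q_-(\tau(t)),c_-(\tau(t)))$ on $[0,t_1]$ with $\tau$ strictly increasing and $C^1$. Differentiating gives $V_+(q_+,c_+)=\tau'(t)\,V_-(q_+,c_+)$ along the curve, where $V_\pm$ denote the right-hand sides of \eqref{eq:teacher-ode} with $\delta=\pm D$. Write $V_\pm=(-\mu\,g,\ \chi_\pm\,k)$ with $g=q/2-q_\star\kappa(c)$, $k=q_\star\kappa'(c)(1-c^2)>0$ on $-1<c<1$, and $\mu=\mu(q,\pm D)$ identical for both signs by \eqref{eq:reaction-same}. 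The determinant is $-\mu g k(\chi_--\chi_+)$, which is nonzero whenever $g\neq0$, since $\chi_+>\chi_-$ for $D>0$ by \eqref{eq:chi-pm} and $\mu>0$. This proves non-collinearity off the nullcline. It also shows that, at any $t$ with $g\neq0$, the relation $V_+=\tau' V_-$ forces $\tau'=1$ from the $q$-component and $\tau'=\chi_+/\chi_->1$ from the $c$-component, which is impossible. Hence the common curve satisfies $g(q_+(t),c_+(t))=0$ for all $t\in[0,t_1]$.

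Next we exclude the possibility that the curve stays on the nullcline. Since $g\equiv0$ along the curve, the $q$-equation gives $\dot q_+\equiv0$, so $q_+(t)\equiv q_0$. However, $\dot c_+=\chi_+ k>0$, so $c_+$ is strictly increasing on $[0,t_1]$. Then $q_0=2q_\star\kappa(c_+(t))$ cannot hold for all $t$, because $\kappa$ is strictly increasing by \eqref{eq:kappa-properties}. This contradiction proves the first claim. The argument requires only that the trajectories stay in the regular region $q>0$, $-1<c<1$, and this is ensured on $[0,t_1]$ by \cref{thm:global-specialization}.

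For the curvature claim at a nullcline point with $g=0$, both fields are vertical, $(0,\chi_\pm k)$, so they share a tangent direction. We parametrize each phase curve by $c$, writing $q=Q_\pm(c)$. Then $dQ_\pm/dc=-\mu g/(\chi_\pm k)$, which vanishes at the point. Differentiating once more, and using $g=0$ there, gives
\[
 \frac{d^2Q_\pm}{dc^2}=-\frac{\mu}{\chi_\pm k}\,\frac{dg}{dc}
 =\frac{\mu\,q_\star\kappa'(c)}{\chi_\pm k},
\]
because $dg/dc=\tfrac12\,dQ_\pm/dc-q_\star\kappa'(c)=-q_\star\kappa'(c)$. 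These two values differ because $\chi_+\neq\chi_-$ and $\kappa'>0$. Since the first derivatives vanish, the curvatures are proportional to these second derivatives and therefore also differ.

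The main obstacle is to make sure the pointwise dichotomy, off versus on the nullcline, really covers every $t$, so that no mixed situation such as an accumulation of nullcline touches survives. The argument above handles this directly. At each single $t$ with $g\neq0$, the two components of $V_+=\tau'V_-$ already contradict each other, so $g\equiv0$ must hold on the entire interval, and that case is then excluded by monotonicity of $c$.
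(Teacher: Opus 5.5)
Your proof is correct and is essentially the paper's argument. You match the two fields through $V_+=\tau'V_-$ in time rather than through the slopes $Q_\pm'(c)$ of a common phase curve, but both routes use collinearity to force the curve onto the nullcline, where $\dot q=0$ contradicts the nullcline's positive slope $2q_\star\kappa'(c)$, and your formula $Q_\pm''=\mu q_\star\kappa'(c)/(\chi_\pm k)$ reproduces the paper's $Q_-''-Q_+''=\mu D/\bigl(q_0(1-c_0^2)\bigr)>0$.
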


\begin{proof}
Put
\[
 R(q,c)=\frac q2-q_\star\kappa(c),
 \qquad
 A(c)=q_\star\kappa'(c)(1-c^2)>0.
\]
Because $c_\pm$ are strictly increasing while $-1<c_\pm<1$, each trajectory has a local phase representation $q=Q_\pm(c)$.  Equation \eqref{eq:teacher-ode} gives
\begin{equation}
\label{eq:phase-curve-slope}
 Q_\pm'(c)
 =-\frac{\mu(Q_\pm(c),D)}{\chi(Q_\pm(c),\pm D)A(c)}
 R(Q_\pm(c),c).
\end{equation}
A time reparameterization would give a common phase curve $Q$ on a
nontrivial $c$-interval.  Subtracting its two equations in
\eqref{eq:phase-curve-slope}, and using $\chi(q,+D)\ne\chi(q,-D)$,
forces $R(Q(c),c)=0$ throughout that interval.  The phase equation then
gives $Q'=0$, whereas $Q(c)=2q_\star\kappa(c)$ gives
$Q'=2q_\star\kappa'(c)>0$, a contradiction.

The local geometry follows from the same equation.  Off the nullcline the
slopes differ.  On it, $Q_\pm'(c_0)=0$ and differentiation gives
\[
 Q_-''(c_0)-Q_+''(c_0)
 =\frac{\mu(q_0,D)D}{q_0(1-c_0^2)}>0.
\]
\end{proof}

\subsection{Fast and slow large-imbalance limits}
The time bounds suggest different limiting descriptions for the two signs of imbalance. Positive imbalance permits alignment on the fast time, whereas negative imbalance produces an initial coefficient relaxation followed by slow alignment. We keep these two stages separate when approximating the coefficient.

The bounds above hold at finite $D$.  The following limits describe the trajectories on the two time scales.

\begin{proposition}
\label{prop:fast-plus}
Let
\[
 Q_D^+(\tau)=q_{+D}(\tau/D),
 \qquad
 C_D^+(\tau)=c_{+D}(\tau/D).
\]
For every $0<\tau_\mathrm{max}<\infty$, there exist constants
$D_0<\infty$ and $C_{\tau_\mathrm{max}}<\infty$, depending only on
$(q_0,c_0,q_\star,\tau_\mathrm{max})$, such that for every $D\ge D_0$,
\begin{equation}
\label{eq:fast-plus-conv}
 \sup_{0\le\tau\le\tau_\mathrm{max}}
 \bigl\|(Q_D^+(\tau),C_D^+(\tau))-(Q_+(\tau),C_+(\tau))\bigr\|
 \le\frac{C_{\tau_\mathrm{max}}}{D^2},
\end{equation}
where
\begin{equation}
\label{eq:fast-plus-limit}
 \begin{aligned}
  \frac{\dd Q_+}{\dd\tau}
  &=-\left(\frac{Q_+}{2}-q_\star\kappa(C_+)\right),
  &
  \frac{\dd C_+}{\dd\tau}
  &=\frac{q_\star}{Q_+}\kappa'(C_+)(1-C_+^2),\\
  Q_+(0)&=q_0,
  &
  C_+(0)&=c_0.
 \end{aligned}
\end{equation}
\end{proposition}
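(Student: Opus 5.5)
The plan is to rewrite the $\delta=+D$ system \eqref{eq:teacher-ode} in the fast time $\tau=Dt$. The rescaled vector field is then an $O(D^{-2})$ perturbation of \eqref{eq:fast-plus-limit}, uniformly on a compact region that contains both trajectories. A Grönwall argument finishes the proof.

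\textbf{Step 1: rescaled equations.} In $\tau$,
\[
 \frac{\dd Q_D^+}{\dd\tau}=-\frac{\mu(Q,D)}{D}\Bigl(\frac Q2-q_\star\kappa(C)\Bigr),\qquad
 \frac{\dd C_D^+}{\dd\tau}=\frac{\chi(Q,D)}{D}\,q_\star\kappa'(C)(1-C^2).
\]
On $[\underline q,\overline q]$ we have the expansion $\mu(q,D)/D=\sqrt{1+4q^2/D^2}=1+O(q^2/D^2)$, and also
\[
 \frac{\chi_+(q,D)}{D}=\frac{\sqrt{D^2+4q^2}+D}{2qD}=\frac1q+O(q/D^2).
\]
Both error terms are bounded by $C\overline q^{\,2}/(\underline q D^2)$, using only $D\ge\overline q$. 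Together with $\kappa,\kappa'\le 1$, this bounds the difference between the rescaled field $F_D$ and the limiting field $F$ by $C_0 D^{-2}$ on the rectangle $K=[\underline q,\overline q]\times[c_0,1]$.

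\textbf{Step 2: invariance for both systems.} \Cref{thm:global-specialization} already puts $(Q_D^+,C_D^+)$ in $K$ for all $\tau$. For the limit system I will repeat the same inward-pointing argument. At $q=\underline q$, $\dot Q\ge0$, and at $q=\overline q$, $\dot Q\le0$. The equation for $C$ is increasing and has $C=1$ invariant. So $(Q_+,C_+)$ also stays in $K$ for all $\tau\ge0$.

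\textbf{Step 3: Lipschitz bound and Grönwall.} I need $F$ to be Lipschitz on $K$ with a constant $L$ depending only on $(q_0,c_0,q_\star)$. This is the main obstacle, because $\kappa'(c)=(\pi-\arccos c)/(2\pi)$ has an unbounded derivative at $c=1$. I will handle it by never differentiating $\kappa'$ alone; instead I use the product $g(c)=\kappa'(c)(1-c^2)$. For $c\in[c_0,1]$, $\frac{\dd}{\dd c}\arccos c=-1/\sqrt{1-c^2}$, so
\[
 g'(c)=\frac{\sqrt{1-c^2}}{2\pi}-2c\,\kappa'(c),
\]
which is bounded. The remaining factors are easy: $1/Q$ is Lipschitz on $[\underline q,\overline q]$, and $\kappa$ is $C^1$ with $\kappa'\le 1/2$.

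If instead I used the field $F_D$, its Lipschitz constant is also uniform in $D\ge\overline q$, since $\mu/D$ and $\chi_+/D$ have $q$-derivatives bounded on the rectangle. Either choice suffices.

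With this in hand, let $e(\tau)$ be the distance between the two solutions. Then $e'\le L e+C_0D^{-2}$ and $e(0)=0$, so Grönwall gives
\[
 e(\tau)\le \frac{C_0}{L}\bigl(e^{L\tau_{\max}}-1\bigr)D^{-2}
\]
for $\tau\le\tau_{\max}$. This is \eqref{eq:fast-plus-conv}, with $D_0=\overline q$ and $C_{\tau_{\max}}$ depending only on $(q_0,c_0,q_\star,\tau_{\max})$.
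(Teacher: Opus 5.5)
Your proposal is correct and follows the paper's route: pass to fast time, use the uniform $O(D^{-2})$ expansions of $\mu(q,+D)/D$ and $\chi(q,+D)/D$ on the invariant rectangle $[\underline q,\overline q]\times[c_0,1]$, and close with a Lipschitz bound and Gr\"onwall. You also supply details the paper only asserts when it calls the Lipschitz constants uniform on the rectangle: you check that the limit system stays in that rectangle, and you handle the unbounded $\kappa''$ at $c=1$ by using the bounded derivative of the product $\kappa'(c)(1-c^2)$.
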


\begin{proof}
On the invariant rectangle $[\underline q,\overline q]\times[c_0,1]$,
\[
 \frac{\mu(q,+D)}{D}=1+O(D^{-2}),
 \qquad
 \frac{\chi(q,+D)}{D}=\frac1q+O(D^{-2}).
\]
The expansions and the Lipschitz constants of the vector fields are uniform on this rectangle.  Continuous dependence and Gronwall's inequality give \eqref{eq:fast-plus-conv}.
\end{proof}

\begin{proposition}
\label{prop:slow-minus}
Let $h(c)=2q_\star\kappa(c)$ and $e_D(t)=q_{-D}(t)-h(c_{-D}(t))$.  There is a constant $C_{\mathrm{tr}}<\infty$ that depends only on $(q_0,c_0,q_\star)$.  For every $D>0$,
\begin{equation}
\label{eq:tracking-estimate}
 |e_D(t)|
 \le |e_D(0)|e^{-Dt/2}+\frac{C_{\mathrm{tr}}}{D^2}
 \qquad(t\ge0).
\end{equation}
On the fast scale $\tau=Dt$, $(q_{-D}(\tau/D),c_{-D}(\tau/D))$ converges on compact intervals to
\begin{equation}
\label{eq:fast-minus-limit}
 \begin{aligned}
  \frac{\dd Q_f}{\dd\tau}
  &=-\left(\frac{Q_f}{2}-q_\star\kappa(c_0)\right),
  &
  \frac{\dd C_f}{\dd\tau}&=0,\\
  Q_f(0)&=q_0,
  &
  C_f(0)&=c_0.
 \end{aligned}
\end{equation}
The limiting solution is therefore
\begin{equation}
 Q_f(\tau)=h(c_0)+\bigl(q_0-h(c_0)\bigr)e^{-\tau/2},
 \qquad C_f(\tau)=c_0.
\end{equation}
On the slow scale $\sigma=t/D$, define $C_D^-(\sigma):=c_{-D}(D\sigma)$.  This trajectory converges uniformly on compact intervals to the solution of
\begin{equation}
\label{eq:slow-minus-limit}
 \frac{\dd C_-}{\dd\sigma}
 =2q_\star^2\kappa(C_-)\kappa'(C_-)(1-C_-^2),
 \qquad C_-(0)=c_0.
\end{equation}
For every $0<\sigma_0<\sigma_\mathrm{max}<\infty$, there exist constants
$D_0<\infty$ and $C_{\sigma_0,\sigma_\mathrm{max}}<\infty$.  They depend
only on $(q_0,c_0,q_\star,\sigma_0,\sigma_\mathrm{max})$, and for every
$D\ge D_0$,
\begin{equation}
 \sup_{\sigma_0\le\sigma\le\sigma_\mathrm{max}}
 \bigl|q_{-D}(D\sigma)-2q_\star\kappa(C_-(\sigma))\bigr|
 \le \frac{C_{\sigma_0,\sigma_\mathrm{max}}}{D^2}.
\end{equation}
\end{proposition}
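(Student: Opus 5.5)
We need to prove \cref{prop:slow-minus}. The plan is to treat the coefficient as a fast variable relaxing toward the nullcline $h(c)$ while $c$ drifts slowly, and to run three standard arguments: a linear tracking estimate, a regular fast-scale limit, and a singular-perturbation (slow-manifold) reduction. Throughout we work on the invariant rectangle $[\underline q,\overline q]\times[c_0,1)$ of \cref{thm:global-specialization}. On it, $\mu(q,-D)\ge D$ and
\[
 \chi_-(q,D)=\frac{2q}{\sqrt{D^2+4q^2}+D}\le\overline q/D,
\]
and both have uniform expansions $\mu=D+O(1/D)$ and $\chi_-=q/D+O(D^{-3})$.

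\textbf{Tracking estimate.} Differentiate $e_D=q-h(c)$ using \eqref{eq:q-tracking}:
\[
 \dot e=-\frac{\mu}{2}e-h'(c)\dot c .
\]
Here
\[
 |h'(c)\dot c|=2q_\star\kappa'(c)\,\chi_-\,q_\star\kappa'(c)(1-c^2)\le C/D,
\]
with $C$ depending only on $(q_0,c_0,q_\star)$, since $\kappa'\le1/2$ and $q\le\overline q$. Because $\mu/2\ge D/2$, the comparison (Gronwall) argument for $\tfrac{d}{dt}|e|\le -\tfrac D2|e|+C/D$ gives
\[
 |e(t)|\le|e(0)|e^{-Dt/2}+\frac{2C}{D^2},
\]
which is \eqref{eq:tracking-estimate} with $C_{\rm tr}=2C$. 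This holds for every $D>0$.

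\textbf{Fast scale.} In $\tau=Dt$, the rescaled field is
\[
 \frac{dq}{d\tau}=-\frac{\mu}{D}\Bigl(\frac q2-q_\star\kappa(c)\Bigr),\qquad \frac{dc}{d\tau}=\frac{\chi_-}{D}\,q_\star\kappa'(c)(1-c^2).
\]
On the rectangle, $\mu/D=1+O(D^{-2})$ and $\chi_-/D=O(D^{-2})$, uniformly. The field is therefore an $O(D^{-2})$ perturbation of \eqref{eq:fast-minus-limit}, with uniform Lipschitz constants. Gronwall on $[0,\tau_{\max}]$ then gives convergence, in fact at rate $O(D^{-2})$. The explicit linear ODE yields the stated formula for $Q_f$.

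\textbf{Slow scale.} In $\sigma=t/D$,
\[
 \frac{dC_D^-}{d\sigma}=D\chi_-(q,D)\,q_\star\kappa'(c)(1-c^2).
\]
Write $D\chi_-=q+O(D^{-2})$ and substitute
\[
 q=h(c)+e_D=2q_\star\kappa(c)+e_D .
\]
This gives
\[
 \frac{dC_D^-}{d\sigma}=F(C_D^-)+\rho_D(\sigma),\qquad F(c)=2q_\star^2\kappa\kappa'(1-c^2),
\]
where $|\rho_D(\sigma)|\le C\bigl(|e_D(0)|e^{-D^2\sigma/2}+D^{-2}\bigr)$ by the tracking estimate expressed in $\sigma$. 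The function $F$ is Lipschitz on $[c_0,1]$: its only possible singularity is $\kappa'$ at $c=-1$, which is excluded because $c\ge c_0>-1$. Gronwall then gives
\[
 \sup_{[0,\sigma_{\max}]}|C_D^--C_-|\le C_{\sigma_{\max}}\int_0^{\sigma_{\max}}|\rho_D|\le C'\bigl(D^{-2}+D^{-2}\bigr),
\]
because $\int_0^\infty e^{-D^2\sigma/2}\,d\sigma=2/D^2$. This proves uniform convergence on compacts, including near $\sigma=0$.

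Finally, for $\sigma\ge\sigma_0$,
\[
 |q-2q_\star\kappa(C_-)|\le|e_D|+2q_\star\,\tfrac12\,|C_D^--C_-|.
\]
The tracking bound makes $|e_D|\le|e_D(0)|e^{-D^2\sigma_0/2}+C_{\rm tr}D^{-2}$. The exponential term is $O(D^{-2})$ once $D\ge D_0(\sigma_0)$, since $e^{-D^2\sigma_0/2}\le D^{-2}$ for large $D$. Combined with the previous $O(D^{-2})$ bound on $|C_D^--C_-|$, this gives the claim. The restriction $\sigma\ge\sigma_0$ is exactly what is needed to absorb the initial layer.

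\textbf{Main obstacle.} The delicate step is the slow-scale Gronwall argument. The perturbation $\rho_D$ is not uniformly small near $\sigma=0$, because of the initial-layer error $e_D(0)$, which is $O(1)$. The point is that this error is integrable with total mass $O(D^{-2})$. This must be used rather than a sup bound, and it is what gives the $D^{-2}$ rate rather than just convergence.
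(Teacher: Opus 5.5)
Your proposal is correct and follows essentially the same route as the paper. The shared steps are the linear equation for $e_D$ with $\mu\ge D$ and variation of constants, the expansions $\mu/D=1+O(D^{-2})$ and $\chi_-/D=O(D^{-2})$ on the fast scale, and, on the slow scale, the identity $D\chi_-=q+O(D^{-2})$ together with the fact that the initial-layer transient has slow-time integral $O(D^{-2})$ before applying Gronwall. One small correction: $F$ is Lipschitz on $[c_0,1]$ not because $c=-1$ is excluded, but because the factor $1-c^2$ cancels the endpoint blow-up of $\kappa''(c)=1/(2\pi\sqrt{1-c^2})$, which also occurs at $c=1$ inside your interval, so that $\frac{d}{dc}\bigl[\kappa'(c)(1-c^2)\bigr]=\sqrt{1-c^2}/(2\pi)-2c\,\kappa'(c)$ is bounded.
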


\begin{proof}
Set $A(c)=q_\star\kappa'(c)(1-c^2)$.  Since $q\in[\underline q,\overline q]$,
\[
 |h'(c)|\le q_\star,
 \qquad
 0\le A(c)\le\frac{q_\star}{2},
 \qquad
 0\le\chi(q,-D)\le\frac{\overline q}{D}.
\]
Differentiating $e_D=q-h(c)$ gives
\[
 \dot e_D
 =-\frac{\mu(q,-D)}2e_D-h'(c)\chi(q,-D)A(c).
\]
Since $\mu\ge D$, variation of constants yields \eqref{eq:tracking-estimate}.  The fast-scale limit follows from
\[
 \frac{\mu(q,-D)}{D}=1+O(D^{-2}),
 \qquad
 \frac{\chi(q,-D)}{D}=O(D^{-2}).
\]

For the slow scale, the exact identity
\begin{equation}
\label{eq:Dchi-q}
 \left|D\chi(q,-D)-q\right|
 =\frac{4|q|^3}{\bigl(\sqrt{D^2+4q^2}+D\bigr)^2}
 \le\frac{\overline q^{3}}{D^2}
\end{equation}
shows
\[
 \frac{\dd}{\dd\sigma}C_D^-(\sigma)
 =q_{-D}(D\sigma)A(C_D^-(\sigma))+O(D^{-2}).
\]
The transient term in \eqref{eq:tracking-estimate} has slow-time integral $O(D^{-2})$, and the remaining tracking error is also $O(D^{-2})$.  Gronwall's inequality gives uniform convergence of $C_D^-$ to \eqref{eq:slow-minus-limit}.  The coefficient statement follows from \eqref{eq:tracking-estimate}.
\end{proof}

\section{Global fast dynamics and local geometry}
\label{app:global-selection}

Let $d\ge2$, $X\sim\mathcal N(0,I_d)$, and let the teacher be
\[
 f_\star(x)=\relu(u^\top x),\qquad \|u\|=1.
\]
The student network is
\[
 f(x)=\sum_{i=1}^{m}q_i(t)\relu(s_i(t)^\top x),
 \qquad \|s_i(t)\|=1.
\]
We use population square loss
\[
 \cL=\frac12\E\bigl[(f(X)-f_\star(X))^2\bigr].
\]
All students begin with the same coefficient and feature direction,
\begin{equation}
\label{eq:init-all}
 q_i(0)=\frac1m,
 \qquad
 s_i(0)=s_0,
 \qquad
 0<\phi=\angle(s_0,u)\le\frac\pi3.
\end{equation}
Choose one index $k$ and assign
\[
 \delta_k=+D,
 \qquad
 \delta_j=-D\quad(j\ne k),
\]
where $\delta_i=a_i^2-\|w_i\|^2$ is the conserved scale imbalance and $q_i=a_i\|w_i\|$.  The underlying positive factors are uniquely reconstructed from
\[
 a_i^2=\frac{\sqrt{D^2+4q_i^2}+\delta_i}{2},
 \qquad
 \|w_i\|^2=\frac{\sqrt{D^2+4q_i^2}-\delta_i}{2}.
\]
Thus every choice of the selected index gives the same initial function and loss.

Set
\[
 M:=m-1.
\]
By permutation symmetry, the $M$ students with $\delta=-D$ remain identical.  Write
\[
 q=q_k,
 \qquad
 p_{\rm red}=q_j\ (j\ne k),
 \qquad
 P:=M p_{\rm red},
\]
Let $e=(s_0-\cos\phi\,u)/\sin\phi$ and $s_\alpha=\cos\alpha\,u+\sin\alpha\,e$.  Define $\theta$ and $\psi$ as the signed angles of the direction of unit $k$
and the common redundant direction relative to $u$ in the invariant plane
$\operatorname{span}\{u,s_0\}$.  The population function is exactly
\[
 f(x)=q\relu(s_\theta^\top x)+P\relu(s_\psi^\top x).
\]

Define the angular ReLU kernel
\[
 g(\alpha)
 =\frac{\sin\alpha+(\pi-\alpha)\cos\alpha}{2\pi},
 \qquad
 \omega(\alpha):=-g'(\alpha)
 =\frac{(\pi-\alpha)\sin\alpha}{2\pi}.
\]
This is the angular form of the kernel in Appendix~A:
$g(\alpha)=\kappa(\cos\alpha)$ for $\alpha\in[0,\pi]$.
In angular formulas, $g$ is extended evenly and periodically with period $2\pi$.  The reduced loss is
\begin{equation}
\label{eq:reduced-loss}
 \mathscr E(q,P,\theta,\psi)
 =\frac14(q^2+P^2+1)
 +qP g(|\psi-\theta|)
 -qg(|\theta|)-Pg(|\psi|).
\end{equation}
The self and pair interactions of the redundant units sum to $P^2/4$ because their directions coincide.

Let
\[
 \mu_+(q)=\sqrt{D^2+4q^2},
 \qquad
 \mu_-(p_{\rm red})=\sqrt{D^2+4p_{\rm red}^2},
\]
\[
 \chi_+(q)=\frac{\mu_+(q)+D}{2q},
 \qquad
 \chi_-(p_{\rm red})=\frac{2p_{\rm red}}{\mu_-(p_{\rm red})+D}.
\]

\begin{proposition}
\label{prop:exact-reduction}
On the duplicate-redundant manifold, ordinary population gradient flow is exactly
\begin{align}
 \dot q&=-\mu_+(q)\,\partial_q\mathscr E,
 &
 \dot P&=-M\mu_-(P/M)\,\partial_P\mathscr E,
 \label{eq:exact-amplitude}\\
 \dot\theta&=-\frac{\chi_+(q)}{q}\,\partial_\theta\mathscr E,
 &
 \dot\psi&=-\frac{\chi_-(P/M)}{P}\,\partial_\psi\mathscr E,
 \label{eq:exact-angle}
\end{align}
where the last expression has its continuous extension at $P=0$.  Moreover,
\begin{align}
 \frac{\dd}{\dd t}\mathscr E
 ={}&-\mu_+(q)(\partial_q\mathscr E)^2
      -M\mu_-(P/M)(\partial_P\mathscr E)^2\nonumber\\
 &-\frac{\chi_+(q)}{q}(\partial_\theta\mathscr E)^2
      -\frac{\chi_-(P/M)}{P}(\partial_\psi\mathscr E)^2.
 \label{eq:exact-dissipation}
\end{align}
\end{proposition}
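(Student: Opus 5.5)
The plan is to derive \cref{prop:exact-reduction} directly from \cref{thm:marked-reduction} applied to the finite network, using permutation symmetry to restrict to the duplicate-redundant manifold.

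First, note that the redundant units share the same imbalance $-D$ and the same initial state. The vector field of \eqref{eq:main-reduction} is equivariant under permutations of these units, so uniqueness of solutions keeps $q_j$ and $s_j$ equal for all $j\ne k$. Next, all gradients lie in $\operatorname{span}\{u,s_0\}$. The reason is that $\mathcal T_\Theta(s)$ is a Gaussian expectation of $x$ weighted by a function of projections onto that plane, and the plane component of $s$ is the only one present. Hence the plane is invariant, and the directions are parameterized by the signed angles $\theta$ and $\psi$. On this manifold the predictor is $q\relu(s_\theta^\top x)+P\relu(s_\psi^\top x)$. Expanding the square loss with $\E[\relu(s_\alpha^\top x)\relu(s_\beta^\top x)]=g(|\alpha-\beta|)$ and $g(0)=1/2$ gives \eqref{eq:reduced-loss}.

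Second, I identify the residual moments with partial derivatives of $\mathscr E$. For unit $k$, $B_\Theta(s_\theta)=\E[R\,\relu(s_\theta^\top x)]=\partial_q\mathscr E$. For each redundant unit, $B_\Theta(s_\psi)=\partial_P\mathscr E$, because $P$ multiplies the common feature. Then
\[
\dot P=M\dot p_{\rm red}=-M\mu_-(p_{\rm red})\partial_P\mathscr E.
\]
For the angles, $\dot s_\theta=\dot\theta\,s_\theta'$, where $s_\theta'$ is the unit tangent. Also $(s_\theta')^\top\mathcal T_\Theta(s_\theta)=\partial_\alpha\E[R\relu(s_\alpha^\top x)]|_{\alpha=\theta}$, which equals $q^{-1}\partial_\theta\mathscr E$, since $\theta$ enters the loss only through the feature of unit $k$ scaled by $q$. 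This yields $\dot\theta=-\chi_+(q)q^{-1}\partial_\theta\mathscr E$. Similarly, $\partial_\psi\mathscr E=P\,\partial_\alpha\E[R\relu(s_\alpha^\top x)]|_{\psi}$. The $M$ identical redundant units each rotate with factor $\chi_-(p_{\rm red})$, so $\dot\psi=-\chi_-(P/M)P^{-1}\partial_\psi\mathscr E$.

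Third, I treat the extension at $P=0$. Since $\partial_\psi\mathscr E=P\cdot(\text{smooth})$, the ratio $P^{-1}\partial_\psi\mathscr E$ extends continuously. Moreover, $\chi_-$ is regular at $p_{\rm red}=0$ for $\delta<0$, as noted after \eqref{eq:marked-space}.

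Finally, I obtain \eqref{eq:exact-dissipation} by the chain rule,
\[
\tfrac{\dd}{\dd t}\mathscr E=\partial_q\mathscr E\,\dot q+\partial_P\mathscr E\,\dot P+\partial_\theta\mathscr E\,\dot\theta+\partial_\psi\mathscr E\,\dot\psi,
\]
and substitute the four equations. As a cross-check, it should agree with \eqref{eq:main-dissipation}: summing over the $M$ units and using $(\mu+\delta)/2=a^2=q\chi$ gives the angular weights $\chi_+/q$ and $M\chi_-(p)\,p\,(P^{-1}\partial_\psi\mathscr E)^2\cdot P/P$, which simplifies to $\chi_-P^{-1}(\partial_\psi\mathscr E)^2$.

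The main obstacle is bookkeeping rather than analysis. The points to handle carefully are the factor $M$ in the $P$-equation versus its absence in the $\psi$-equation, the sign convention and periodic even extension of $g$ at $\psi=\theta$ (where $|\cdot|$ is nonsmooth), and the invariance of the plane. At $\psi=\theta$, differentiability follows because $\alpha\mapsto g(|\alpha|)$ is $C^1$ with $g'(0)=0$.
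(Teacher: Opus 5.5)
Your proposal is correct and follows essentially the same route as the paper's proof. Both apply \cref{thm:marked-reduction} unit by unit, identify $B(s_\theta)=\partial_q\mathscr E$ and $B(s_\psi)=\partial_P\mathscr E$ (with the factor $M$ entering through $\dot P=M\dot p_{\rm red}$), convert the individual angular gradient into $P^{-1}\partial_\psi\mathscr E$ by dividing by the individual coefficient, and obtain the dissipation identity by pairing the four velocities with the four partial derivatives of $\mathscr E$; your added checks (invariance of the duplicate manifold and teacher plane, $C^1$ regularity of $g(|\cdot|)$ at coincident angles, and the regular extension at $P=0$) simply make explicit what the paper leaves implicit.
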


\begin{proof}
For one redundant unit, the coefficient gradient is
\[
 \partial_{q_j}\cL
 =\frac P2+qg(|\psi-\theta|)-g(|\psi|)
 =\partial_P\mathscr E.
\]
Therefore $\dot P=M\dot p_{\rm red}$ gives the second equation in \eqref{eq:exact-amplitude}.  The common angular derivative satisfies
\[
 \partial_\psi\mathscr E
 =\sum_{j\ne k}\partial_{\psi_j}\cL,
\]
and each summand is $1/M$ of the total.  Dividing the individual angular
gradient by the individual functional coefficient $p_{\rm red}=P/M$ gives the final
equation in \eqref{eq:exact-angle}.  The equations for unit $k$ follow from
the exact coordinate reduction in \cref{thm:marked-reduction}.  Pairing the four
state velocities with the four derivatives of \eqref{eq:reduced-loss} yields
\eqref{eq:exact-dissipation}.
\end{proof}

For nonzero vectors, let $\angle(z,v)$ be the angle between their normalized directions.  Define the Gaussian ReLU covariance
\[
 K(z,v)=\E_{x\sim\gamma_d}[[z^\top x]_+[v^\top x]_+]
       =\|z\|\|v\|g(\angle(z,v)).
\]
When $\|v\|=1$ and $\gamma=\angle(z,v)$,
\begin{equation}
\label{eq:cartesian-kernel-gradient}
 \nabla_zK(z,v)=\frac{(\pi-\gamma)v+\sin\gamma\,z/\|z\|}{2\pi}.
\end{equation}
The kernel is jointly $C^2$ when both vectors are nonzero.  At the angular
endpoints, $g(t)=\tfrac12\cos t+t^3/(6\pi)+O(t^5)$ and
$g(\pi-t)=t^3/(6\pi)+O(t^5)$.  Near parallel or antiparallel vectors, the nonsmooth angular contribution is of the form $\|h_\perp\|^3$, where $h_\perp$ is a transverse displacement.  This function is $C^2$ at zero.
Differentiation and homogeneity give the uniform bounds
\begin{equation}
\label{eq:kernel-hessian}
 \|\nabla_{zz}^2K(z,v)\|\le C\frac{\|v\|}{\|z\|},\qquad
 \|\nabla_{zv}^2K(z,v)\|\le C.
\end{equation}

\subsection{Scaled Cartesian dynamics}
The angular equation contains a factor that becomes singular when the selected coefficient vanishes. Cartesian coordinates retain the same feature information without dividing by that coefficient. The following linear scaling also identifies which variables move on the fast time.

Let $(a_+,w_+)$ denote the selected unit and $(a_-,w_-)$ any of the
$M$ identical redundant units.  Introduce the linear coordinates
\begin{equation}
\label{eq:scaled-cartesian}
 b=\frac{a_+}{\sqrt D},\qquad z=\sqrt D\,w_+,\qquad
 p=M\sqrt D\,a_-,\qquad v=\frac{w_-}{\sqrt D},\qquad
 \Xi=(b,z,p,v).
\end{equation}
Here $p$ is the scaled total redundant amplitude. The functional coefficients and directions are recovered by
\begin{equation}
\label{eq:scaled-visible}
 q=b\|z\|,\quad P=p\|v\|,\quad s_+=z/\|z\|,\quad s_-=v/\|v\|.
\end{equation}
These maps are smooth wherever $z,v\ne0$, including at $p=P=0$.
In these coordinates the predictor and loss contain no $D$:
\begin{equation}
\label{eq:scaled-energy}
 f_\Xi(x)=b[z^\top x]_++p[v^\top x]_+,\qquad
 \mathcal E(\Xi)=\tfrac12\|f_\Xi-f_\star\|_{L^2(\gamma_d)}^2.
\end{equation}
The raw gradient of each redundant unit is $1/M$ of the gradient obtained
by differentiating their common parameters.  The chain rule therefore gives,
in fast time $\tau=Dt$ (primes denote $\tau$ derivatives), the exact equations
\begin{equation}
\label{eq:scaled-flow}
 \Xi'=-\mathsf M_D\nabla\mathcal E(\Xi),\qquad
 \mathsf M_D=\operatorname{diag}(D^{-2},I_d,M,(MD^2)^{-1}I_d).
\end{equation}
The initial point is
\[
 \Xi_D(0)=(1,s_0/m,M/m,s_0)+O(D^{-2}),
\]
with an error uniform in $m\ge2$.  At $D=\infty$, $b=1$ and $v=s_0$
are frozen, while $z'=-\nabla_z\mathcal E$ and $p'=-M\partial_p\mathcal E$.
Writing $z=qs_\theta$ recovers \eqref{eq:weighted-fast}.
\subsection{The limiting system}

The limit of \eqref{eq:scaled-flow} has $b=1$ and $v=s_0$.
With $z=qs_\theta$ and $p=P$, define
\begin{equation}
\label{eq:fast-energy}
 E_M(q,P,\theta)
 :=\frac14(q^2+P^2+1)
 +qP g(\phi-\theta)
 -qg(|\theta|)-Pg(\phi),
\end{equation}
for $\phi-\pi\le\theta\le\phi$.  The fast system is
\begin{equation}
\label{eq:weighted-fast}
 q'=-\partial_qE_M,
 \qquad
 P'=-M\partial_PE_M,
 \qquad
 \theta'=-q^{-2}\partial_\theta E_M,
\end{equation}
with initial condition
\begin{equation}
\label{eq:fast-init}
 q(0)=q_0:=\frac1{M+1},
 \qquad
 P(0)=P_0:=\frac M{M+1},
 \qquad
 \theta(0)=\phi.
\end{equation}
The energy landscape is independent of $M$.  The subscript records the metric
weight in the flow:
\begin{equation}
\label{eq:weighted-dissipation}
 \frac{\dd}{\dd\tau}E_M
 =-(\partial_qE_M)^2
  -M(\partial_PE_M)^2
  -q^{-2}(\partial_\theta E_M)^2.
\end{equation}

\subsection{Arc-cosine kernel inequalities}

\begin{lemma}
\label{lem:uniform-arc-bounds}
For $0<\phi<\pi$,
\begin{align}
 g(x)g(\phi-x)-\omega(x)\omega(\phi-x)&\le\tfrac12g(\phi)
       &&(0\le x\le\phi),\label{eq:K1}\\
 g(x)g(\phi+x)+\omega(x)\omega(\phi+x)&\le\tfrac12g(\phi)
       &&(0\le x\le\pi-\phi).\label{eq:K3}
\end{align}
The function
\begin{equation}
\label{eq:Fphi-uniform}
 F_\phi(x)=\bigl(\tfrac14-g(\phi+x)^2\bigr)\omega(x)
 -\bigl(\tfrac12g(\phi)-g(\phi+x)g(x)\bigr)\omega(\phi+x)
\end{equation}
satisfies $F_\phi(0)=0$ and $F_\phi(x)>0$ for $0<x\le\pi-\phi$.
\end{lemma}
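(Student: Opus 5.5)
The plan is to prove all three statements by one-variable calculus. The key algebraic facts are
\[
 g'=-\omega,\qquad g(\alpha)-\omega'(\alpha)=\frac{\sin\alpha}{\pi}.
\]
The second follows from $\omega'(\alpha)=\bigl((\pi-\alpha)\cos\alpha-\sin\alpha\bigr)/(2\pi)$. Together they make the derivatives of the bilinear expressions in \eqref{eq:K1} and \eqref{eq:K3} factor completely. Throughout, $\omega\ge0$ on $[0,\pi]$, $g(0)=\tfrac12$, and $\omega(0)=0$.

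\emph{Step 1: inequality \eqref{eq:K1}.} Set $H(x)=g(x)g(\phi-x)-\omega(x)\omega(\phi-x)$. Differentiating and grouping terms with the two identities gives
\[
 H'(x)=\frac{\omega(\phi-x)\sin x-\omega(x)\sin(\phi-x)}{\pi}
 =\frac{\sin x\,\sin(\phi-x)\,(2x-\phi)}{2\pi^2}.
\]
So $H$ decreases on $[0,\phi/2]$ and increases on $[\phi/2,\phi]$. Its maximum over $[0,\phi]$ is therefore attained at an endpoint, where $H(0)=H(\phi)=\tfrac12g(\phi)$. This proves \eqref{eq:K1}.

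\emph{Step 2: inequality \eqref{eq:K3}.} Set $G(x)=g(x)g(\phi+x)+\omega(x)\omega(\phi+x)$. The same computation gives
\[
 G'(x)=-\frac{\omega(\phi+x)\sin x+\omega(x)\sin(\phi+x)}{\pi}\le0
\]
whenever $\phi+x\le\pi$. Hence $G(x)\le G(0)=\tfrac12g(\phi)$.

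\emph{Step 3: the function $F_\phi$.} First, $F_\phi(0)=\tfrac14\omega(0)-\bigl(\tfrac12g(\phi)-\tfrac12g(\phi)\bigr)\omega(\phi)=0$. For positivity, use Step~2 to write
\[
 \tfrac12g(\phi)-g(x)g(\phi+x)=\omega(x)\omega(\phi+x)+\Gamma(x),
 \qquad
 \Gamma(x):=\int_0^x(-G'(y))\,\dd y\ge0.
\]
Substituting this into $F_\phi$ gives
\[
 F_\phi(x)=\omega(x)\Bigl[\tfrac14-g(\phi+x)^2-\omega(\phi+x)^2\Bigr]-\omega(\phi+x)\Gamma(x).
\]
It then suffices to prove
\[
 \omega(x)\Bigl[\tfrac14-g(\beta)^2-\omega(\beta)^2\Bigr]>\omega(\beta)\,\Gamma(x),
 \qquad \beta=\phi+x.
\]

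I expect this comparison to be the main obstacle. Both sides vanish to first order in $x$ as $x\downarrow0$. The right side also degenerates near $\beta=\pi$, where $\omega(\beta)\to0$.

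My first attempt will be to study the bracket $\Phi(\beta)=\tfrac14-g(\beta)^2-\omega(\beta)^2$ directly. Its derivative is
\[
 \Phi'(\beta)=2\omega(\beta)\bigl(g(\beta)-\omega'(\beta)\bigr)=\frac{2\omega(\beta)\sin\beta}{\pi}\ge0.
\]
Since $\Phi(0)=0$, this gives $\Phi(\beta)=\frac{2}{\pi}\int_0^\beta\omega\sin$. I will then bound $\Gamma(x)$ by the integrand monotonicity of $\omega(y)/\sin y=(\pi-y)/(2\pi)$. The aim is to compare $\omega(\beta)\Gamma(x)$ with $\omega(x)\cdot\frac2\pi\int_0^\beta\omega\sin$ termwise. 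The intended mechanism is that $\int_0^\beta$ strictly dominates $\int_0^x$ because $\beta>x$. The cross term $\omega(\beta)\sin y\le\omega(y)\sin\beta$-type comparisons should follow from the factorization $\omega(y)=\tfrac{\pi-y}{2\pi}\sin y$.

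If a clean termwise comparison fails, I will fall back on a more hands-on route. I would first verify $F_\phi'(0)>0$ by Taylor expansion, with the leading coefficient $\omega'(0)\bigl(\tfrac14-g(\phi)^2\bigr)>0$ after the $\Gamma$ term is seen to be $O(x^2)$. Near $\beta=\pi$, the second term vanishes while $\omega(x)\Phi(\pi)>0$. The remaining compact region in $(\phi,x)$ would be handled by a rigorous Lipschitz or interval-arithmetic check.
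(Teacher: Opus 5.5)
Your Steps 1 and 2 are correct and are the paper's argument; the paper differentiates $\tfrac12 g(\phi)$ minus each left-hand side and gets the same factored derivatives. Your Step 3 rewriting $F_\phi(x)=\omega(x)\Phi(\phi+x)-\omega(\phi+x)\Gamma(x)$ is exactly the paper's decomposition $F_\phi=H(\phi+x)\omega(x)-R_\phi(x)\omega(\phi+x)$, with $\Phi=H$ and $\Gamma=R_\phi$.

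The gap is that the inequality $\omega(x)\Phi(\beta)>\omega(\beta)\Gamma(x)$ is never proved, and this inequality is the whole positivity claim. The mechanism you describe compares integrands over $[0,x]$ on the grounds that $\beta>x$. That comparison fails pointwise near $y=0$. The function $\omega(\beta)\Gamma'(y)$ contains $\omega(\beta)\omega(\phi+y)\sin y/\pi$, which is of exact order $y$. The matching piece $\tfrac2\pi\omega(x)\omega(y)\sin y$ is only of order $y^2$. The inequality $\omega(\beta)\sin y\le\omega(y)\sin\beta$ that you point to does not repair this.

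The fallback is also not a proof. At $\phi=0$ one has $g(0)=\tfrac12$, so $F_0\equiv0$. Both the positivity margin and $F_\phi'(0)$ therefore tend to zero as $\phi\downarrow0$, and no finite Lipschitz or interval-arithmetic check covers the non-compact range $0<\phi<\pi$. Note also that $F_\phi'(0)=\tfrac12\Phi(\phi)$, not $\tfrac12(\tfrac14-g(\phi)^2)$, because the $\omega(\phi)^2$ term survives.

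The missing ingredient is log-concavity of $\omega$ on $(0,\pi)$: $(\log\omega)''(t)=-(\pi-t)^{-2}-\sin^{-2}t<0$, so $\omega(x)/\omega(\phi+x)$ is increasing. The paper uses this by differentiating a ratio:
\[
 \frac{\dd}{\dd x}\frac{F_\phi(x)}{\omega(\phi+x)}
 =\frac{\phi\sin x\sin(\phi+x)}{2\pi^2}
 +\Phi(\phi+x)\frac{\dd}{\dd x}\frac{\omega(x)}{\omega(\phi+x)}>0 .
\]
Together with $F_\phi(0)=0$, this gives positivity for $x<\pi-\phi$. The endpoint follows from $F_\phi(\pi-\phi)=\omega(\pi-\phi)/4>0$.

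Your integral route can also be closed, but only after shifting the variable. Write
\[
 \Phi(\beta)=\tfrac2\pi\int_0^\phi\omega(t)\sin t\,\dd t
 +\tfrac2\pi\int_0^x\omega(\phi+y)\sin(\phi+y)\,\dd y ,
\]
and compare at the same $y\in[0,x]$ against the two terms of $\omega(\beta)\Gamma'(y)$.
\begin{itemize}
\item The term $\omega(\beta)\omega(y)\sin(\phi+y)$ is dominated by $\omega(x)\omega(\phi+y)\sin(\phi+y)$. This is exactly the log-concavity inequality $\omega(x)\omega(\phi+y)\ge\omega(\phi+x)\omega(y)$.
\item The term $\omega(\beta)\omega(\phi+y)\sin y$ is dominated by the same quantity. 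This follows from $\pi-x>\pi-\beta$ together with $\sin x\sin(\phi+y)-\sin(\phi+x)\sin y=\sin\phi\,\sin(x-y)\ge0$.
\end{itemize}
Strictness then comes from $\omega(x)\int_0^\phi\omega\sin>0$.
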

\begin{proof}
The difference between the right and left sides of \eqref{eq:K1} is
symmetric about $\phi/2$, vanishes at zero, and has derivative
$(\phi-2x)\sin x\sin(\phi-x)/(2\pi^2)\ge0$ on $[0,\phi/2]$.
For \eqref{eq:K3}, define
\[
 R_\phi(x)=\tfrac12g(\phi)-g(x)g(\phi+x)-\omega(x)\omega(\phi+x).
\]
Then $R_\phi(0)=0$ and
$R_\phi'(x)=(2\pi-\phi-2x)\sin x\sin(\phi+x)/(2\pi^2)\ge0$.
Next put
\begin{equation}
\label{eq:kernel-H}
 H(t)=\tfrac14-g(t)^2-\omega(t)^2,\qquad
 H(0)=0,\quad H'(t)=\frac{(\pi-t)\sin^2t}{\pi^2}>0\quad(0<t<\pi).
\end{equation}
Since $F_\phi(x)=H(\phi+x)\omega(x)-R_\phi(x)\omega(\phi+x)$,
\begin{equation}
\label{eq:Fphi-monotonicity}
 \frac{\dd}{\dd x}\frac{F_\phi(x)}{\omega(\phi+x)}
 =\frac{\phi\sin x\sin(\phi+x)}{2\pi^2}
 +H(\phi+x)\frac{\dd}{\dd x}\frac{\omega(x)}{\omega(\phi+x)}>0.
\end{equation}
Here $(\log\omega)''(t)=-(\pi-t)^{-2}-\sin^{-2}t<0$, so the ratio
$\omega(x)/\omega(\phi+x)$ is strictly increasing for $0<x<\pi-\phi$.
The value at zero and
$F_\phi(\pi-\phi)=\omega(\pi-\phi)/4>0$ complete the proof.
\end{proof}

\subsection{An invariant region for the limiting system}
Before locating the limiting state, we show that the trajectory stays in a bounded region and that the selected vector cannot reach zero. A component of that vector toward the teacher supplies the needed lower bound.

Put $\alpha=\phi-\theta$ and $Z=qg(\alpha)-g(\phi)$.
The duplicate initial point belongs to
\begin{equation}
\label{eq:sector}
 \mathcal C_\phi=\{0<q\le1,\ 0\le P\le1,\
     \phi-\pi\le\theta\le\phi,\ Z\le0\},
\end{equation}
since $Z(0)=1/(2m)-g(\phi)<0$ for $0<\phi\le\pi/3$.
In this range, $g(\phi)\ge g(\pi/3)=1/6+\sqrt3/(4\pi)>1/4$.
The selected vector acquires a positive component toward the teacher
immediately, which prevents it from reaching zero.

\begin{lemma}
\label{lem:projection}
The region $\mathcal C_\phi$ is forward invariant.  For the duplicate
initialization, let
\[
 e_\perp=\frac{u-\cos\phi\,s_0}{\sin\phi},\qquad
 y=e_\perp^\top z=q\sin(\phi-\theta).
\]
Then, for all $\tau\ge0$,
\begin{equation}
\label{eq:projection-lower}
 q(\tau)\ge y(\tau)\ge
 \frac{\phi\sin\phi}{\pi}(1-e^{-\tau/2}).
\end{equation}
On any fixed compact interval $I_\phi\subset(0,\pi/3]$, a constant
$c_I>0$ independent of $m$ satisfies
\begin{equation}
\label{eq:growing-q-margin}
 q(\tau)\ge c_I\bigl(m^{-1}+\min\{\tau,1\}\bigr).
\end{equation}
\end{lemma}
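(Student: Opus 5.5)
The plan is to work with the Cartesian form of the limiting flow rather than the angular one. Set $z=qs_\theta$. Then $z'=-\nabla_zE$ with
\[
 E=\tfrac12K(z,z)+PK(z,s_0)-K(z,u)+\text{($z$-free terms)},
\]
and $\tfrac12\nabla_zK(z,z)=z/2$. Formula \eqref{eq:cartesian-kernel-gradient} then gives
\[
 z'=-\frac z2-\frac{P}{2\pi}\Bigl((\pi-\alpha)s_0+\sin\alpha\,\tfrac{z}{\|z\|}\Bigr)
 +\frac1{2\pi}\Bigl((\pi-|\theta|)u+\sin|\theta|\,\tfrac{z}{\|z\|}\Bigr),
\]
where $\alpha=\phi-\theta$ is the angle between $z$ and $s_0$, and $|\theta|$ is the angle between $z$ and $u$. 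We pair this with $e_\perp$, using $e_\perp^\top s_0=0$, $e_\perp^\top u=\sin\phi$ and $e_\perp^\top z/\|z\|=\sin\alpha$. This gives the scalar equation
\[
 y'=-\frac y2+\frac1{2\pi}\Bigl[(\pi-|\theta|)\sin\phi+\sin|\theta|\sin\alpha-P\sin^2\alpha\Bigr].
\]

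The first step is forward invariance of $\mathcal C_\phi$. I would check each face.

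\begin{itemize}
\item \emph{Face $q=1$.} Here $q'=-(\tfrac12+Pg(\alpha)-g(|\theta|))\le0$, because $g\le\tfrac12$ and $P g\ge0$.
\item \emph{Faces $P=0$ and $P=1$.} We have $\partial_PE_M=P/2+Z$, so $P'=-M(P/2+Z)$. On $P=0$ this equals $-MZ\ge0$ since $Z\le0$. On $P=1$ it is $\le0$ because $Z\ge-g(\phi)\ge-\tfrac12$.
\item \emph{Face $Z=0$.} I would compute $Z'=q'g(\alpha)+q\omega(\alpha)\theta'$ there, substituting $q=g(\phi)/g(\alpha)$. The sign should then reduce to \eqref{eq:K1} for $\theta\in[0,\phi]$, and to \eqref{eq:K3} together with $F_\phi>0$ from \cref{lem:uniform-arc-bounds} for $\theta<0$, where $\alpha=\phi+x$ with $x=|\theta|$. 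I expect exactly this grouping of terms to be what produced $F_\phi$ in \eqref{eq:Fphi-uniform}.
\item \emph{Angular faces $\theta=\phi$ and $\theta=\phi-\pi$.} These are handled by the sign of $y'$ established next: at these faces $y=0$, and $y'>0$ pushes the state inward.
\item \emph{Face $q=0$.} Positivity of $q$ follows from $q\ge y$ once $y>0$ for $\tau>0$, together with $q(0)=1/m>0$.
\end{itemize}

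The second step, which I expect to be the main obstacle, is the bracket bound
\[
 (\pi-|\theta|)\sin\phi+\sin|\theta|\sin\alpha-P\sin^2\alpha\ \ge\ \phi\sin\phi
\]
on $\mathcal C_\phi$. Granting it, we get $y'\ge -y/2+\phi\sin\phi/(2\pi)$ with $y(0)=0$, and comparison yields \eqref{eq:projection-lower}. Also $q=\|z\|\ge y$ trivially.

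For $0\le\theta\le\phi$, the bracket bound is easy. We have $\pi-\theta-\phi\ge\pi-2\phi\ge\pi/3\ge1$, and $\sin\phi\ge\sin\alpha\ge\sin^2\alpha$ because $0\le\alpha\le\phi\le\pi/3$. Together with $P\le1$, these give $(\pi-|\theta|)\sin\phi-P\sin^2\alpha\ge\phi\sin\phi$.

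For $\theta<0$ we have $\alpha=\phi+|\theta|$ up to $\pi$. I would write $x=|\theta|$ and use $\sin x\sin(\phi+x)\ge0$. The bound then reduces to
\[
 (\pi-\phi-x)\sin\phi+\sin x\sin(\phi+x)\ge\sin^2(\phi+x).
\]
I would verify this by a one-variable concavity argument on $[0,\pi-\phi]$. Equality holds at $x=\pi-\phi$. If this inequality turns out to fail near the endpoint, I would fall back on the constraint $Z\le0$, which forces $P$ to be small relative to $g(\phi)-qg(\alpha)$ in a way that controls the $P\sin^2\alpha$ term.

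Finally, for \eqref{eq:growing-q-margin} I would combine two lower bounds. For $\tau\ge1$ (or $\tau\ge\tau_0$), the bound $q\ge y\ge \tfrac{\phi\sin\phi}{\pi}(1-e^{-\min\{\tau,1\}/2})$ gives a margin of order $\min\{\tau,1\}$, uniformly over $\phi\in I_\phi$. For small $\tau$ I need a bound of order $1/m$. Since $|q'|\le C$ on $\mathcal C_\phi$ uniformly in $M$ (only $P'$ carries the factor $M$), we get $q(\tau)\ge 1/m - C\tau$. This gives $q\ge\tfrac1{2m}$ for $\tau\le 1/(2Cm)$. On the remaining range $\tau\ge1/(2Cm)$, the $y$-bound gives $q\gtrsim\tau\gtrsim m^{-1}$. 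Taking the minimum of the resulting constants over the compact interval $I_\phi$ yields $c_I$.
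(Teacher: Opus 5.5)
Your skeleton is the paper's: the Cartesian $z$-equation, projection onto $e_\perp$, a lower bound $\phi\sin\phi$ on the bracket at $P=1$, comparison from $y(0)=0$, face checks, and $q'\ge-1$ for the $1/m$ margin. Treating both angular faces as $\{y=0\}$, where $y'\ge\phi\sin\phi/(2\pi)>0$, is a valid substitute for the paper's evaluation of $\theta'$ there. Your bound via $\pi-2\phi\ge1$ settles the case $0\le\theta\le\phi$.

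The gap is the case $\theta<0$, which is the crux of the lemma. Your reduced inequality is correct, but the concavity argument you propose cannot prove it. Put $t=\pi-\phi-x$. The difference of the two sides is then
\[
 G(t)=t\sin\phi+\sin t\,\bigl(\sin(\phi+t)-\sin t\bigr),
 \qquad
 G''(t)=-4\sin(\phi/2)\sin(2t+\phi/2).
\]
This second derivative changes sign at $t=\pi/2-\phi/4$. Moreover $G'(\pi/2-\phi/4)=\sin\phi-2\sin(\phi/2)<0<\sin2\phi=G'(\pi-\phi)$. So $G$ has an interior local minimum on its convex piece, and endpoint values do not control it. Your fallback through $Z\le0$ does not help either. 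Near $\theta=\phi-\pi$ one has $g(\alpha)\approx g(\pi)=0$, so $Z\approx-g(\phi)$ carries no information about $P$. Also keep the cross term $\sin x\sin(\phi+x)$ rather than using only its sign: for $\phi=0.1$ and $t=0.5$, one has $t\sin\phi<\sin^2t$.

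The missing step is short, and it is what the paper does. Since $t\ge\sin t$,
\[
 G(t)\ge\sin t\,\bigl(\sin\phi+\sin(\phi+t)-\sin t\bigr)
 =4\sin t\,\sin(\phi/2)\cos(t/2)\cos\bigl((\phi+t)/2\bigr)\ge0
 \qquad(0\le t\le\pi-\phi).
\]
With this, your argument closes.

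One minor correction concerns the face $Z=0$. Substituting $qg(\alpha)=g(\phi)$ gives
\[
 Z'=-\tfrac12g(\phi)-P\bigl(g(\alpha)^2+\omega(\alpha)^2\bigr)+g(|\theta|)g(\alpha)-\operatorname{sgn}(\theta)\,\omega(|\theta|)\omega(\alpha).
\]
Hence \eqref{eq:K1} and \eqref{eq:K3} alone give $Z'\le0$. $F_\phi$ plays no role here; the paper uses it only to exclude stationary points in \cref{thm:fast-global}.
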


\begin{proof}
Until the first exit from \eqref{eq:sector}, the Cartesian fast equation is
\[
 z'=-z/2+\nabla_zK(z,u)-P\nabla_zK(z,s_0).
\]
Using \eqref{eq:cartesian-kernel-gradient} and projecting onto $e_\perp$ gives
\begin{equation}
\label{eq:projection-identity}
 y'=-\frac y2+\frac{(\pi-|\theta|)\sin\phi+
     (\sin|\theta|-P\sin\alpha)\sin\alpha}{2\pi}.
\end{equation}
The numerator is smallest at $P=1$.
For $0\le\theta\le\phi$, its excess over $(\pi-\phi)\sin\phi$ is
\[
 \alpha\sin\phi+\sin\alpha(\sin\theta-\sin\alpha)
 \ge\sin\alpha(\sin\phi+\sin\theta-\sin\alpha)\ge0.
\]
For $\phi-\pi\le\theta\le0$, put $t=\pi-\phi+\theta$.
Its excess over $\phi\sin\phi$ is
\[
 t\sin\phi+\sin t(\sin(\phi+t)-\sin t)
 \ge\sin t(\sin\phi+\sin(\phi+t)-\sin t)\ge0.
\]
The final signs follow, respectively, from
\[
 \sin\phi+\sin\theta-\sin(\phi-\theta)
 =4\sin(\theta/2)\cos(\phi/2)\cos((\phi-\theta)/2),
\]
and
\[
 \sin\phi+\sin(\phi+t)-\sin t
 =4\sin(\phi/2)\cos(t/2)\cos((\phi+t)/2).
\]
Thus $y'\ge-y/2+\phi\sin\phi/(2\pi)$.
Since $y(0)=0$, this proves \eqref{eq:projection-lower} on the stopped
interval and excludes $q=0$ at any positive first-exit time.
For a general point in $\mathcal C_\phi$, the same argument uses $y(0)\ge0$.

All other faces are inward-pointing.  At $q=1$ and $P=1$,
\[
 \partial_qE_M=\tfrac12+Pg(\alpha)-g(|\theta|)\ge0,\qquad
 \partial_PE_M=\tfrac12+qg(\alpha)-g(\phi)>0.
\]
At $P=0$, $P'=-MZ\ge0$.  The angular velocities at $\theta=\phi$ and
$\theta=\phi-\pi$ are $-\omega(\phi)/q<0$ and
$\omega(\pi-\phi)/q>0$.
At $Z=0$, direct differentiation gives
\[
 Z'=-\tfrac12g(\phi)-P\{g(\alpha)^2+\omega(\alpha)^2\}
       +g(|\theta|)g(\alpha)
       -\operatorname{sgn}(\theta)\omega(|\theta|)\omega(\alpha)\le0,
\]
by \eqref{eq:K1} for $\theta\ge0$ and \eqref{eq:K3} for $\theta<0$.
These signs close the first-exit argument.

Finally, $q'\ge-1$ on the region, so $q(\tau)\ge m^{-1}-\tau$.
For $\tau\le1/(2m)$ this gives $q\ge1/(2m)$.  For later times use
\eqref{eq:projection-lower} and
$1-e^{-\tau/2}\ge(1-e^{-1/2})\min\{\tau,1\}$.
Reducing $c_I$ proves \eqref{eq:growing-q-margin}.
\end{proof}

\subsection{Local loss and gradient bounds}
Near the selected representation, small loss must control the coefficients and selected direction. The Hessian gives this control through the three feature variations associated with these coordinates. We first state the elementary estimate used to turn Hessian positivity into loss and gradient bounds.

\begin{lemma}
\label{lem:transverse-hessian}
Let $L(\xi;\zeta)$ be a nonnegative $C^2$ scalar loss near $\xi=0$,
with $\zeta$ in a compact parameter set.  Suppose
$L(0;\zeta)=0$, $\nabla_\xi L(0;\zeta)=0$, and
$\nabla_\xi^2L(0;\zeta)\succeq\lambda I$ uniformly, for $\lambda>0$.
On a uniform neighborhood of zero,
\begin{equation}
\label{eq:transverse-hessian-criterion}
 \frac\lambda4\|\xi\|^2\le L(\xi;\zeta)\le C\|\xi\|^2,
 \qquad \|\nabla_\xi L(\xi;\zeta)\|^2\ge\lambda L(\xi;\zeta).
\end{equation}
\end{lemma}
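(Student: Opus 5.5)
The argument is a uniform second-order Taylor expansion around $\xi=0$, combined with a Polyak--Łojasiewicz-type bound derived from strong convexity near the minimizer.

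\emph{Step 1: uniform bounds on the Hessian.} By compactness of the parameter set and continuity of $\nabla_\xi^2L$ in $(\xi,\zeta)$, I will fix a radius $r_0>0$ such that, for all $\|\xi\|\le r_0$ and all $\zeta$,
\[
 \tfrac\lambda2 I\preceq\nabla_\xi^2L(\xi;\zeta)\preceq 2C I .
\]
Here $C$ bounds $\|\nabla_\xi^2L\|$ on the compact set $\{\|\xi\|\le r_0\}\times\{\zeta\}$. The lower bound comes from uniform continuity of the Hessian on a compact neighborhood of $\{0\}\times\{\zeta\}$, together with the hypothesis $\nabla_\xi^2L(0;\zeta)\succeq\lambda I$. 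This is the step that needs care: uniformity in $\zeta$ requires $L$ to be $C^2$ jointly on a neighborhood of $\{0\}\times(\text{compact set})$. I will assume this is how ``uniformly'' is meant.

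\emph{Step 2: two-sided quadratic bound.} I apply Taylor's formula with integral remainder along the segment from $0$ to $\xi$, which lies in the ball. Using $L(0)=0$ and $\nabla L(0)=0$,
\[
 L(\xi)=\int_0^1(1-t)\,\xi^\top\nabla^2L(t\xi)\,\xi\,\dd t .
\]
Since $\int_0^1(1-t)\,\dd t=\tfrac12$, this gives
\[
 \tfrac\lambda4\|\xi\|^2\le L(\xi)\le C\|\xi\|^2 .
\]
Nonnegativity of $L$ is not needed for this step.

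\emph{Step 3: gradient lower bound.} On the convex ball, $L(\cdot;\zeta)$ is $\tfrac\lambda2$-strongly convex and has minimizer $0$ with $L(0)=0$. I will use the standard strong-convexity inequality
\[
 L(0)\ge L(\xi)+\nabla L(\xi)^\top(0-\xi)+\tfrac\lambda4\|\xi\|^2 .
\]
Minimizing the right-hand side over the point $0$ replaced by an arbitrary $y$ yields
\[
 0\ge L(\xi)-\tfrac1\lambda\|\nabla L(\xi)\|^2 ,
\]
that is, $\|\nabla L\|^2\ge\lambda L$.

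There is a subtlety: that minimization ranges over all of $\R^n$, whereas strong convexity is only known on the ball. I will avoid this by using the inequality only at $y=0$, which gives
\[
 L(\xi)\le\nabla L(\xi)^\top\xi-\tfrac\lambda4\|\xi\|^2\le\|\nabla L\|\,\|\xi\|-\tfrac\lambda4\|\xi\|^2 .
\]
Maximizing the right side over $\|\xi\|$ gives $L\le\|\nabla L\|^2/\lambda$, which is the claim.

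The only genuine obstacle is the uniform choice of $r_0$ in Step 1. The remaining steps are one-line computations.
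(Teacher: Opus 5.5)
Your proposal is correct and matches the paper's proof step for step. Both arguments use a uniform $\tfrac\lambda2$ Hessian lower bound on a small ball, Taylor's theorem for the two-sided quadratic bound, and the strong-convexity inequality compared with $\xi=0$, which gives $L\le\langle\nabla L,\xi\rangle-\tfrac\lambda4\|\xi\|^2\le\|\nabla L\|^2/\lambda$. Your remark that uniformity in $\zeta$ requires joint continuity of the Hessian is exactly what the paper's brief ``Hessian continuity'' step implicitly assumes.
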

\begin{proof}
Hessian continuity gives $\nabla_\xi^2L\succeq\lambda I/2$ on a
smaller ball.  Taylor's theorem gives the quadratic bounds.
Strong convexity, compared with $\xi=0$, gives
$L(\xi)\le\langle\nabla L(\xi),\xi\rangle-\lambda\|\xi\|^2/4
\le\|\nabla L(\xi)\|^2/\lambda$.
\end{proof}

Near the selected solution, fix the common redundant angle $\psi$ and
write $\xi=(q-1,\theta,P)$ and $\rho=\|\xi\|$.
The transverse Hessian at zero is the explicit Gram matrix
\begin{equation}
\label{eq:symmetric-gram}
 G_\psi=\begin{pmatrix}
 1/2&0&g(\psi)\\0&1/2&\omega(\psi)\\
 g(\psi)&\omega(\psi)&1/2
 \end{pmatrix},\qquad
 \lambda_{\min}(G_\psi)=\tfrac12-\sqrt{g(\psi)^2+\omega(\psi)^2}>0.
\end{equation}
Here $0<\psi<\pi$, as holds in the neighborhoods considered below.  Its columns correspond to the selected coefficient, the selected tangent
direction, and the redundant coefficient.  Positivity follows from the
same $H(\psi)>0$ used in the global kernel argument.

\begin{lemma}
\label{lem:tube}
There are $r_0,\sigma_0,c,C>0$, uniformly in $m$ and
$\phi$ on any compact $I_\phi\subset(0,\pi/3]$, such that on
$\mathcal U_0=\{\rho<r_0,\ |\psi-\phi|<\sigma_0\}$,
\begin{align}
 c\rho^2\le\mathscr E&\le C\rho^2,\label{eq:local-loss}\\
 (\partial_q\mathscr E)^2+(\partial_P\mathscr E)^2
       +q^{-2}(\partial_\theta\mathscr E)^2&\ge c\mathscr E,
       \label{eq:local-gradient}\\
 |\partial_\psi\mathscr E|&\le C|P|\sqrt{\mathscr E}.
       \label{eq:psi-derivative}
\end{align}
\end{lemma}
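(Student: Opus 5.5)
The plan is to apply \cref{lem:transverse-hessian} to the reduced loss $\mathscr E$ in the variables $\xi=(q-1,\theta,P)$, with the redundant angle $\psi$ treated as the parameter $\zeta$ ranging over the compact set $\{|\psi-\phi|\le\sigma_0,\ \phi\in I_\phi\}$. First I will check the hypotheses at $\xi=0$. Since $P=0$ and $(q,\theta)=(1,0)$ reproduce the teacher exactly, we have $\mathscr E=0$ there, and also $\nabla_\xi\mathscr E=0$. Differentiating \eqref{eq:reduced-loss} twice at $\xi=0$ gives the Gram matrix $G_\psi$ in \eqref{eq:symmetric-gram}. The diagonal entries are $1/2$ from the $q^2/4$, $P^2/4$ terms and from $-qg(|\theta|)$ with $g''(0)=-1/2$. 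The off-diagonal $(q,P)$ entry is $g(\psi)$, the $(\theta,P)$ entry is $\partial_\theta g(|\psi-\theta|)=\omega(\psi)$, and the $(q,\theta)$ entry vanishes because $g'(0)=0$. Its smallest eigenvalue is $\tfrac12-\sqrt{g(\psi)^2+\omega(\psi)^2}$. This is positive because $H(\psi)>0$ in \eqref{eq:kernel-H} gives $g^2+\omega^2<1/4$ for $0<\psi<\pi$. Choosing $\sigma_0<\min_{I_\phi}\phi$ and $\sigma_0<\pi-\pi/3$ keeps $\psi$ in a compact subset of $(0,\pi)$, so $\lambda>0$ is uniform.

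One point needs care: $\mathscr E$ contains $g(|\theta|)$, which is not smooth in the usual sense at $\theta=0$. By the expansion $g(t)=\tfrac12\cos t+t^3/(6\pi)+O(t^5)$ stated earlier, $g(|\theta|)$ is $C^2$ with a continuous second derivative. Likewise $g(|\psi-\theta|)$ is smooth near $\theta=0$ because $\psi$ is bounded away from $0$ and $\pi$. So $\mathscr E$ is $C^2$ in $\xi$, jointly continuous in $\psi$, on a uniform neighborhood, and the lemma applies. The parameter set depends only on $I_\phi$, and $\mathscr E$ does not involve $m$ at all, so the constants are independent of $m$.

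The lemma then yields \eqref{eq:local-loss} and $\|\nabla_\xi\mathscr E\|^2\ge\lambda\mathscr E$. For \eqref{eq:local-gradient}, I will note that $q\le 1+r_0\le 2$ on $\mathcal U_0$, so $q^{-2}\ge 1/4$. The left side of \eqref{eq:local-gradient} is therefore at least $\tfrac14\|\nabla_\xi\mathscr E\|^2\ge(\lambda/4)\mathscr E$.

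For \eqref{eq:psi-derivative}, I will compute directly:
\[
\partial_\psi\mathscr E=P\bigl(-q\,\omega(\psi-\theta)+\omega(\psi)\bigr).
\]
The bracket vanishes at $\xi=0$ and is Lipschitz in $(q,\theta)$ uniformly on $\mathcal U_0$. Hence $|\partial_\psi\mathscr E|\le C|P|\rho$, and the lower bound in \eqref{eq:local-loss} turns $\rho$ into $C\sqrt{\mathscr E}$. The only step needing care is the $C^2$ regularity at $\theta=0$ together with the uniformity of the neighborhood radius from \cref{lem:transverse-hessian}. Compactness of the $\psi$-range, and the sign convention restricting $\psi$ near $\phi$, handle the latter, which I expect to be the main (mild) obstacle.
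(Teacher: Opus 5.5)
Your proposal is correct and follows the paper's proof. You apply \cref{lem:transverse-hessian} in $\xi=(q-1,\theta,P)$ with $\psi$ as a compact parameter and Hessian $G_\psi$, whose positivity comes from $H(\psi)>0$. You then use the two-sided bounds on $q$ to convert the gradient bound into \eqref{eq:local-gradient}.

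The only difference is in \eqref{eq:psi-derivative}. The paper applies Cauchy--Schwarz to $\partial_\psi\mathscr E=P\langle f-f_\star,\mathbf 1_{\{s_\psi^\top x>0\}}(\partial_\psi s_\psi)^\top x\rangle$, which gives $|\partial_\psi\mathscr E|\le|P|\sqrt{\mathscr E}$ directly and without the coercive lower bound. Your explicit formula with a Lipschitz estimate instead passes through \eqref{eq:local-loss}. Both routes are valid.
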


\begin{proof}
The loss is $C^2$ in the regular coordinates, vanishes at $\xi=0$,
and has Hessian $G_\psi$ there.  Its least eigenvalue is uniformly
positive for $\psi$ in a small compact neighborhood of $I_\phi$.
Apply \cref{lem:transverse-hessian}.  The upper and lower bounds on $q$
make the resulting gradient bound equivalent to
\eqref{eq:local-gradient}.
Finally,
\[
 \partial_\psi\mathscr E
 =P\langle f-f_\star,\mathbf1_{\{s_\psi^\top x>0\}}
                         (\partial_\psi s_\psi)^\top x\rangle,
\]
where the brackets denote the $L^2(\gamma_d)$ inner product.  Cauchy--Schwarz proves \eqref{eq:psi-derivative}.
\end{proof}

\subsection{Uniform fast convergence}
We now combine the invariant region with the local estimates. The kernel inequalities exclude other stationary points, and compactness gives a positive gradient-to-loss ratio away from the selected state. The local bound extends this control to the state itself.

\begin{theorem}
\label{thm:fast-global}
For every $m\ge2$ and $0<\phi\le\pi/3$, the duplicate-initialized
fast flow exists globally and converges to $(1,0,0)$.
On each compact $I_\phi\subset(0,\pi/3]$, there are $c_I,C_I>0$,
independent of $m$, such that for $\tau\ge1$,
\begin{equation}
\label{eq:uniform-fast-decay}
 E_M(\tau)\le E_M(1)e^{-c_I(\tau-1)},\qquad
 \rho(\tau)\le C_I e^{-c_I(\tau-1)/2}.
\end{equation}
For $0<r<r_0$, the flow therefore reaches $\rho\le r$ by fast time
\begin{equation}
\label{eq:uniform-capture-time}
 T(r)=1+\frac2{c_I}\log\frac{C_I}{r},
\end{equation}
after enlarging $C_I$ to at least one.
\end{theorem}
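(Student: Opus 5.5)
The plan has three stages. First, global existence and confinement to the compact invariant region $\mathcal C_\phi$, using \cref{lem:projection}. Second, an argument that the only stationary state reachable in that region is $(1,0,0)$, via a LaSalle-type argument with the dissipation identity \eqref{eq:weighted-dissipation}. Third, conversion of this into uniform exponential rates, by patching a compactness bound away from $(1,0,0)$ with the local Łojasiewicz-type estimate of \cref{lem:tube}.

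\emph{Existence and confinement.} By \cref{lem:projection} the trajectory stays in $\mathcal C_\phi$. For $\tau\ge1$ it also satisfies $q\ge c_I$ with $c_I$ independent of $m$. So on $[1,\infty)$ the vector field \eqref{eq:weighted-fast} is smooth and bounded on a compact set, and the solution exists globally. Since $E_M$ is nonincreasing and bounded below, \eqref{eq:weighted-dissipation} gives
\[
\int_1^\infty\Bigl[(\partial_qE_M)^2+M(\partial_PE_M)^2+q^{-2}(\partial_\theta E_M)^2\Bigr]\dd\tau<\infty .
\]
Hence every $\omega$-limit point is a stationary point of $E_M$ in the compact set $\mathcal K=\mathcal C_\phi\cap\{q\ge c_I\}$.

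\emph{Stationary points.} We must show $(1,0,0)$ is the only critical point in $\mathcal K$; this is where \cref{lem:uniform-arc-bounds} enters. Consider a critical point with $P>0$, so $\partial_PE_M=0$ and $Z=-P/2$. Together with $\partial_qE_M=0$ and $\partial_\theta E_M=0$, this gives $q/2=g(|\theta|)-Pg(\alpha)$ and the angular relation $\operatorname{sgn}(\theta)\,\omega(|\theta|)=P\omega(\alpha)$. Eliminating $q$ and $P$ expresses the stationarity condition as $F_\phi(x)=0$ for a suitable argument $x$ (with $x=\theta$ or $-\theta$ according to the sign branch), after using $H>0$. Then $F_\phi(x)>0$ for $x>0$ forces $x=0$, i.e.\ $\theta=0$. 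But $\theta=0$ gives $P\omega(\phi)=0$, so $P=0$, a contradiction. Consider next $P=0$. Then $\partial_PE_M\ge0$ requires $Z\ge0$, and combined with $Z\le0$ this forces $Z=0$. Stationarity in $(q,\theta)$ for the single-student loss gives $\theta=0$ and $q=1$ (via $\omega(|\theta|)=0$ and $q=2g(0)$). The verification $Z=g(\phi)-g(\phi)=0$ is consistent, so $(1,0,0)$ is the unique critical point. Hence $\omega$-limits equal $\{(1,0,0)\}$, and the flow converges.

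\emph{Uniform rates.} Define $\Phi=\bigl[(\partial_qE_M)^2+M(\partial_PE_M)^2+q^{-2}(\partial_\theta E_M)^2\bigr]/E_M$. Since $M\ge1$,
\[
\Phi\ \ge\ \Phi_1:=\frac{(\partial_qE_M)^2+(\partial_PE_M)^2+q^{-2}(\partial_\theta E_M)^2}{E_M},
\]
and $\Phi_1$ does not depend on $M$. On the compact set $\mathcal K\setminus\{\rho<r\}$, minimized jointly over $\phi\in I_\phi$, $\Phi_1$ is continuous and positive because there are no other critical points and $E_M>0$ there. This gives a uniform lower bound, independent of $m$. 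Inside $\{\rho<r_0\}$, \cref{lem:tube} (with $\psi=\phi$ fixed) gives $\Phi_1\ge c$. Combining the two, $E_M'\le-c_IE_M$ for $\tau\ge1$. This yields the first bound in \eqref{eq:uniform-fast-decay}.

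The $\rho$ bound needs \eqref{eq:local-loss}, which holds only near the selected state. A uniform entry time into $\{\rho<r_0\}$ follows from the loss decay plus a compactness lower bound for $E_M$ on $\mathcal K\setminus\{\rho<r_0\}$. Enlarging $C_I$ covers the earlier period, and solving $C_Ie^{-c_I(\tau-1)/2}\le r$ gives $T(r)$.

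The main obstacle is the stationary-point reduction to $F_\phi$. The sign branches for $\theta$, and the possibility of interior critical points with $P>0$, must be handled carefully. I would first attempt the explicit elimination sketched above and check it against the endpoint value $F_\phi(\pi-\phi)$.
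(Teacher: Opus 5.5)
Your architecture matches the paper's. You confine the trajectory for $\tau\ge1$ to an $m$-independent compact subset of $\mathcal C_\phi$ on which $q$ is bounded below. You use the $M$-free dissipation $\Gamma=(\partial_qE_M)^2+(\partial_PE_M)^2+q^{-2}(\partial_\theta E_M)^2$, which is your $\Phi_1E_M$, and show that its only zero is $(1,0,0)$. You then patch compactness with \cref{lem:tube}. Your eventual-entry argument for $\rho$ is equivalent to the paper's global bound $E_M\ge c_I\rho^2$ on that set.

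However, the stationary-point step does not go through as written, because your angular equation has the wrong sign. Since $g'=-\omega$,
\[
 \partial_\theta E_M=q\bigl(P\omega(\phi-\theta)+\operatorname{sgn}(\theta)\,\omega(|\theta|)\bigr)=:qS,
\]
so stationarity reads $P\omega(\alpha)=-\operatorname{sgn}(\theta)\,\omega(|\theta|)$. With your sign, the branch $\theta<0$ looks trivially excluded. The branch $\theta=x>0$ would then require ruling out $\omega(x)=P\omega(\phi-x)$ jointly with coefficient stationarity. That condition involves $g(\phi-x)$ and $\omega(\phi-x)$, so it is not $F_\phi(x)=0$: $F_\phi$ is built on $\alpha=\phi+x$, and \cref{lem:uniform-arc-bounds} gives no inequality for the other branch. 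Hence the reduction ``with $x=\theta$ or $-\theta$'' cannot work on both branches.

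With the correct sign the roles are reversed, and the argument closes exactly as in the paper.
For $\theta>0$, you have $S\ge\omega(\theta)>0$, since $P\ge0$ and $\omega\ge0$ on $[0,\pi]$; no kernel inequality is needed.
For $\theta=0$, $S=P\omega(\phi)$ forces $P=0$, and then $\partial_qE_M=0$ gives $q=1$. No separate KKT-style case at $P=0$ is needed, because $P'=-M\partial_PE_M$ means every rest point of the flow has $\partial_PE_M=0$.
For $\theta=-x<0$, solving $\partial_qE_M=\partial_PE_M=0$ gives $P\bigl(\tfrac14-g(\phi+x)^2\bigr)=\tfrac12g(\phi)-g(x)g(\phi+x)$, whence
\[
 F_\phi(x)=\bigl(\tfrac14-g(\phi+x)^2\bigr)\bigl(\omega(x)-P\omega(\phi+x)\bigr)
 =-\bigl(\tfrac14-g(\phi+x)^2\bigr)S .
\]
Thus $F_\phi>0$ on $(0,\pi-\phi]$ gives $S<0$, including at the endpoint $\theta=\phi-\pi$. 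This is the only place $F_\phi$ is used.

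With this correction, the rest of your proposal goes through. $E_M$ is positive away from the target, because a zero-loss state is stationary. $\Gamma/E_M$ is continuous on the compact set jointly in $\phi\in I_\phi$. The constants are independent of $m$ because $M\ge1$.
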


\begin{proof}
By \cref{lem:projection}, all trajectories for $\tau\ge1$ lie in
the joint compact set
\[
 \mathcal K_I=\{(\phi,q,P,\theta):\phi\in I_\phi,
     \ (q,P,\theta)\in\mathcal C_\phi,\ q\ge\underline q_I\},
 \quad \underline q_I=\min_{\phi\in I_\phi}
       \frac{\phi\sin\phi}{\pi}(1-e^{-1/2})>0.
\]
Write
\[
 S=P\omega(\phi-\theta)+\operatorname{sgn}(\theta)\omega(|\theta|),
 \qquad \Gamma=(\partial_qE_M)^2+(\partial_PE_M)^2+S^2.
\]
Since $M\ge1$, \eqref{eq:weighted-dissipation} gives $E_M'\le-\Gamma$.
The only zero of $\Gamma$ is $(1,0,0)$:
for $\theta>0$, $S>0$.  At $\theta=0$, $S=0$ gives $P=0$ and then
$\partial_qE_M=0$ gives $q=1$.
For $\theta=-x<0$, coefficient stationarity requires
\[
 P=\frac{\tfrac12g(\phi)-g(\phi+x)g(x)}
         {\tfrac14-g(\phi+x)^2}.
\]
The strict sign $F_\phi(x)>0$ from \cref{lem:uniform-arc-bounds}
implies $P\omega(\phi+x)<\omega(x)$, so $S\ne0$, including at
$x=\pi-\phi$.

Near the target, \cref{lem:tube} gives $\Gamma\ge cE_M$ and
$E_M\ge c\rho^2$.  Outside this neighborhood, $\Gamma$ has a
positive minimum on $\mathcal K_I$ and $E_M$ is bounded above.
Also, a zero-loss state is stationary, so the target is the only zero
of $E_M$.  Its positive minimum outside the neighborhood and the
upper bound on $\rho$ give the second inequality below.
Combining the local and exterior bounds yields, throughout $\mathcal K_I$,
\begin{equation}
\label{eq:global-dissipation-ratio}
 \Gamma\ge c_I E_M,\qquad E_M\ge c_I\rho^2.
\end{equation}
Integrating $E_M'\le-c_I E_M$ and using the uniformly bounded initial
loss proves \eqref{eq:uniform-fast-decay} and \eqref{eq:uniform-capture-time}.
\end{proof}

\subsection{Continuation through a loss barrier}
The local estimates apply only while the trajectory remains in their neighborhood. The following lemma makes that continuation explicit: loss decrease prevents exit in the controlled coordinates, and an integrated velocity bound prevents exit in the remaining coordinates.

\begin{lemma}
\label{lem:loss-barrier}
Let a trajectory have local coordinates $(\xi,\zeta)$ in
$\mathcal U=\{\|\xi\|<r,\ \zeta\in\operatorname{int}\mathcal S\}$,
where $\mathcal S$ is compact.  Assume the vector field is regular on
the closure and that, for constants $c,\lambda,A>0$, while the trajectory remains there,
\[
 L\ge c\|\xi\|^2,\qquad -\dot L\ge\lambda DL,\qquad
 \|\dot\zeta\|\le A L/D.
\]
Put $\ell_\partial=\inf_{\|\xi\|=r,\,\zeta\in\mathcal S}L$ and
$d_0=\operatorname{dist}(\zeta(t_0),\partial\mathcal S)>0$.
If the initial state is in $\mathcal U$ and
\begin{equation}
\label{eq:common-continuation-margin}
 L(t_0)<\ell_\partial,\qquad
 \frac{A L(t_0)}{\lambda D^2}<d_0,
\end{equation}
then the trajectory stays in $\mathcal U$ for all $t\ge t_0$, with
$L(t)\le L(t_0)e^{-\lambda D(t-t_0)}$, $\xi(t)\to0$, and
$\int_{t_0}^\infty\|\dot\zeta\|\,\dd t\le A L(t_0)/(\lambda D^2)$.
\end{lemma}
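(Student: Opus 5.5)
The plan is a continuity (first-exit) argument. Let $t_1:=\sup\{t\ge t_0:\ \text{trajectory in }\mathcal U\text{ on }[t_0,t]\}$; since $\mathcal U$ is open and the initial state lies in it, $t_1>t_0$. On $[t_0,t_1)$ the differential inequality $-\dot L\ge\lambda DL$ gives, by Gronwall, $L(t)\le L(t_0)e^{-\lambda D(t-t_0)}$. In particular $L$ is nonincreasing, so $L(t)\le L(t_0)<\ell_\partial$ throughout.

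Next we integrate the velocity bound over the same interval:
\[
 \int_{t_0}^{t}\|\dot\zeta\|\,\dd s\le\frac AD\int_{t_0}^{t}L(t_0)e^{-\lambda D(s-t_0)}\dd s\le\frac{AL(t_0)}{\lambda D^2}<d_0 .
\]
Hence $\operatorname{dist}(\zeta(t),\zeta(t_0))$ stays strictly below $d_0$ by a fixed margin. The point $\zeta(t)$ therefore remains in a compact subset of $\operatorname{int}\mathcal S$, namely the closed ball of radius $AL(t_0)/(\lambda D^2)<d_0$ about $\zeta(t_0)$.

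Now suppose $t_1<\infty$. Regularity of the vector field on the closure of $\mathcal U$ lets the solution extend to $t_1$ with the state in $\overline{\mathcal U}$. By continuity the exit must occur through $\partial\mathcal U$, so either $\|\xi(t_1)\|=r$ or $\zeta(t_1)\in\partial\mathcal S$.
\begin{itemize}
 \item The second case is excluded by the displacement bound, which passes to the limit with a strict margin.
 \item In the first case, $\zeta(t_1)\in\mathcal S$ and $\|\xi(t_1)\|=r$, so $L(t_1)\ge\ell_\partial$. But continuity of $L$ gives $L(t_1)\le L(t_0)<\ell_\partial$, a contradiction.
\end{itemize}
Thus $t_1=\infty$, and the loss bound and the integrated velocity bound hold for all $t\ge t_0$. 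Finally, $c\|\xi\|^2\le L\to0$ gives $\xi(t)\to0$.

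The only delicate step is the boundary case: we must ensure the differential inequalities remain valid up to the exit time and that $L$ attains $\ell_\partial$ there. This requires the infimum defining $\ell_\partial$ to be taken over $\zeta\in\mathcal S$, which is the compact set containing $\zeta(t_1)$. We handle it by working with closures and continuity of $L$ on $\overline{\mathcal U}$, which follows from the assumed regularity. Everything else is a direct Gronwall integration.
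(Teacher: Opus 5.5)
Your proposal is correct and is essentially the paper's own first-exit argument. Gronwall keeps $L\le L(t_0)<\ell_\partial$, which excludes the face $\|\xi\|=r$; the integrated velocity bound keeps the $\zeta$-displacement below $AL(t_0)/(\lambda D^2)<d_0$, which excludes $\partial\mathcal S$; regularity on the closure rules out other finite-time obstructions; and coercivity gives $\xi(t)\to0$ (you also never need $L$ to attain $\ell_\partial$, since $L(t_1)\ge\ell_\partial$ follows from the definition of the infimum once $\zeta(t_1)\in\mathcal S$).
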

\begin{proof}
Up to the first exit time, the loss is at most $L(t_0)$ and the total
$\zeta$ motion is at most $A L(t_0)/(\lambda D^2)$.
The two strict inequalities exclude the transverse and remaining
boundary faces, respectively.  Regularity excludes any other finite
obstruction.  Thus the first exit is infinite.  Integration and coercivity give
the asserted estimates.
\end{proof}

\section{Finite imbalance, perturbations, and gradient descent}
\label{app:robust-discrete}

Let $d\ge2$ and fix a compact angle interval $I_\phi\subset(0,\pi/3]$.
We retain the Gaussian teacher $f_\star=[u^\top x]_+$ and designate unit
$1$.  All constants below may depend on $I_\phi$; generic positive constants $c,C$ may change between estimates.  Unqualified function norms and inner products refer to $L^2(\gamma_d)$, and $X\sim\gamma_d$ denotes a Gaussian input.  Sums indexed by $j$ run over $j=2,\ldots,m$ unless stated otherwise.
Appendix~B shows that the limiting trajectory reaches the selected representation. Here we control how a finite imbalance, a perturbation of the initial state, or a discrete update changes that trajectory. The redundant amplitudes enter the selected unit's equation through their sum. Separating this sum from the differences between amplitudes avoids treating all interactions as independent sources of error.

\subsection{A common stability estimate}

Allow $\delta_1>0$ and $\delta_j<0$ for $j\ge2$, and measure initial
departure from the duplicate state by
\begin{equation}
\label{eq:perturbnorm}
 \Delta_0=\max_i\bigl(|q_i(0)-m^{-1}|+\|s_i(0)-s_0\|\bigr)
       +\max_i\left|\frac{|\delta_i(0)|}{D}-1\right|.
\end{equation}
Use the unaggregated linear coordinates
\begin{equation}
\label{eq:full-scaled-coordinates}
 b=a_1/\sqrt D,\quad z=\sqrt D\,w_1,\qquad
 p_j=\sqrt D\,a_j,\quad v_j=w_j/\sqrt D\quad(j\ge2).
\end{equation}
They represent $f=b[z^\top x]_++\sum_{j\ge2}p_j[v_j^\top x]_+$.
With $\mathcal E=\tfrac12\|f-f_\star\|_{L^2(\gamma_d)}^2$ and $\tau=Dt$,
the exact flow is
\begin{equation}
\label{eq:full-scaled-flow}
 b'=-D^{-2}\partial_b\mathcal E,\quad z'=-\nabla_z\mathcal E,\qquad
 p_j'=-\partial_{p_j}\mathcal E,\quad
 v_j'=-D^{-2}\nabla_{v_j}\mathcal E.
\end{equation}
Imbalance perturbations enter only through the initial point of this system.

Put $M=m-1$, $K_0(z)=K(z,s_0)$, and $K_\star(z)=K(z,u)$.
Freezing $b=1$ and $v_j=s_0$ gives the reference fast flow, whose
redundant amplitudes equal $P_\infty/M$.  Write its solution as
$x_\infty=(z_\infty,P_\infty/M,\ldots,P_\infty/M)$, where
$z_\infty=q_\infty s_{\theta_\infty}$ and
$(q_\infty,P_\infty,\theta_\infty)$ solves \eqref{eq:weighted-fast}.  Its total amplitude couples
to the selected vector, while differences between redundant amplitudes
remain constant.  The following estimate separates these two effects.

Choose $r_\star<\min_{I_\phi}\phi/8$ small enough for \cref{lem:tube}.
Fix $0<r_{\rm cap}<\min\{r_\star/4,1/4\}$ so that the loss at
$\rho\le2r_{\rm cap}$ is below the boundary loss at $\rho=r_\star/2$,
and put $\mathcal V_{\rm cap}=\{\rho<r_{\rm cap}\}$.
Using \cref{thm:fast-global}, set
\begin{equation}
\label{eq:Tstar}
 T_\star=T(r_{\rm cap}),\qquad T_{\max}=T_\star+2.
\end{equation}
Let $\tau_{\rm cap}^\infty$ be the reference trajectory's first entry time into
$\overline{\mathcal V_{\rm cap}}$.  Then $\tau_{\rm cap}^\infty\le T_\star$.  The projection lower bound also gives
\begin{equation}
\label{eq:integrated-q-margin}
 \int_0^{T_{\max}}\frac{\dd\tau}{q_\infty(\tau)}
 \le C_I(1+\log m).
\end{equation}

\begin{lemma}
\label{lem:aggregate-stability}
Let $x=(z,p_2,\ldots,p_m)$ be an absolutely continuous solution of
\begin{equation}
\label{eq:forced-active-system}
 z'=-z/2-P\nabla K_0(z)+\nabla K_\star(z)+r_z,\qquad
 p_j'=-(P/2+K_0(z)-g(\phi))+r_j,\quad P=\sum_{j\ge2}p_j.
\end{equation}
Here $r_z$ and $r_j$ are additive forcing terms.  Set $e_z=\|z-z_\infty\|$, $e_P=P-P_\infty$, and
$c_j=p_j-P/M$.  Suppose
\[
 \|r_z\|+\max_j|r_j|\le C_0\epsilon,\qquad
 e_z(0)+|e_P(0)|/M+\max_j|c_j(0)|\le C_0\epsilon.
\]
Up to any stopping time $T\le T_{\max}$ on which
$e_z<q_\infty/2$, there are $C>0$ and $\beta>1/2$, independent of $m$,
such that
\begin{align}
 \sup_{\tau\le T}e_z(\tau)&\le Cm^\beta\epsilon,
 &\sup_{\tau\le T}\max_j|c_j(\tau)|&\le C\epsilon,
 \label{eq:split-stability}\\
 |e_P(\tau)|&\le CM\epsilon e^{-M\tau/2}+Cm^\beta\epsilon,
 &\sup_{\tau\le T}\|x-x_\infty\|&\le Cm^\beta\epsilon.
 \label{eq:aggregate-stability}
\end{align}
The constants depend only on $I_\phi$ and $C_0$.
\end{lemma}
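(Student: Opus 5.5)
The proof splits the state into three error variables and closes a Gronwall-type argument for each in the order $c_j$, $e_P$, $e_z$. The $e_P$ bound will contain a term proportional to $\sup e_z$, and the $e_z$ bound a term proportional to $\sup|e_P|$ weighted by $1/q_\infty$; we then resolve this circularity.

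\emph{Differences of redundant amplitudes.} The drift $-(P/2+K_0(z)-g(\phi))$ in the $p_j$ equation is the same for every $j$. Hence $c_j'=r_j-\frac1M\sum_i r_i$. Integrating over $[0,T_{\max}]$ gives $|c_j(\tau)|\le C_0\epsilon+2C_0\epsilon T_{\max}\le C\epsilon$, which is the second bound in \eqref{eq:split-stability}.

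\emph{Aggregate amplitude.} Summing the $p_j$ equations gives $P'=-M(P/2+K_0(z)-g(\phi))+\sum_j r_j$. The reference flow satisfies the same equation with $z_\infty$ and no forcing. Subtracting,
\[
 e_P'=-\tfrac M2e_P-M\bigl(K_0(z)-K_0(z_\infty)\bigr)+\textstyle\sum_jr_j .
\]
Since $\|\nabla K_0\|$ is uniformly bounded by \eqref{eq:cartesian-kernel-gradient}, variation of constants gives
\[
 |e_P(\tau)|\le |e_P(0)|e^{-M\tau/2}+C\sup_{\sigma\le\tau}e_z(\sigma)+C\epsilon .
\]
The factor $M$ in the source cancels against the decay rate $M/2$. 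Using $|e_P(0)|\le C_0M\epsilon$ produces the term $CM\epsilon e^{-M\tau/2}$ in \eqref{eq:aggregate-stability}.

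\emph{Selected vector.} Subtract the reference equation for $z_\infty$:
\[
 e_z'\le \bigl(\tfrac12+\mathrm{Lip}\bigr)e_z+|e_P|\,\|\nabla K_0(z)\|+C\epsilon .
\]
Here $\mathrm{Lip}$ is the Lipschitz constant of $\nabla K_\star-P\nabla K_0$ on the segment between $z$ and $z_\infty$. By \eqref{eq:kernel-hessian} this constant is at most $C(1+P)/\min\|z\|$. On the stopped interval $e_z<q_\infty/2$, so $\|z\|\ge q_\infty/2$ and $\mathrm{Lip}\le C/q_\infty$.

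This is the main obstacle: Gronwall gives the amplification factor $\exp\bigl(C\int_0^{T}d\tau/q_\infty\bigr)$. By \eqref{integrated-q-margin} this is at most $\exp(CC_I(1+\log m))=Cm^{\beta_0}$ with $\beta_0=CC_I$. The source contributes $\int|e_P|\le CM\epsilon\int e^{-M\tau/2}d\tau+C\,T\sup e_z+C\epsilon$, and the first integral is $O(\epsilon)$. The term $C\sup e_z$ feeds back into the Gronwall exponent through a bounded linear coefficient, which only enlarges the constant.

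To avoid the circularity, substitute the $e_P$ bound directly into the $e_z$ inequality as $e_z'\le (C/q_\infty+C)e_z+CM\epsilon e^{-M\tau/2}+C\epsilon$, with $\sup$ replaced by the current value via a standard integral-form Gronwall. This gives $e_z\le Cm^{\beta}\epsilon$ for some $\beta\ge\beta_0$, and we take $\beta>1/2$, enlarging if necessary. Feeding this back gives $|e_P|\le CM\epsilon e^{-M\tau/2}+Cm^\beta\epsilon$.

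\emph{Full state.} The last inequality follows from $\|x-x_\infty\|\le e_z+\bigl(\sum_j(p_j-P_\infty/M)^2\bigr)^{1/2}$ together with $p_j-P_\infty/M=c_j+e_P/M$. This yields $\|x-x_\infty\|\le e_z+\sqrt M\,C\epsilon+|e_P|/\sqrt M$. The first contribution is bounded by $Cm^\beta\epsilon$. The second is bounded because $\beta>1/2$. For the third, the exponential transient satisfies $\sqrt M\epsilon e^{-M\tau/2}\le Cm^{1/2}\epsilon$, and the remaining part is at most $Cm^\beta\epsilon$.

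The delicate point to check is that the logarithmic integrability \eqref{integrated-q-margin} of $1/q_\infty$ is what keeps the amplification polynomial in $m$. If the Hessian bound carries an extra factor of $P$, we instead use $P\le1$ on $\mathcal C_\phi$, which holds by \cref{lem:projection}.
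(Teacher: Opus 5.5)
Your proof is correct and shares the paper's skeleton. Both use the zero-sum equation $c_j'=r_j-\bar r$ and variation of constants for $e_P$, with the source factor $M$ absorbed by the decay rate $M/2$. Both get a Gronwall amplification controlled by $\int_0^{T_{\max}}q_\infty^{-1}\,\dd\tau\le C_I(1+\log m)$. The final assembly also matches: your triangle inequality replaces the paper's orthogonal decomposition $\|x-x_\infty\|^2=e_z^2+e_P^2/M+\sum_jc_j^2$ and gives the same bound.

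The real difference is how you close the $e_z$--$e_P$ coupling. The paper does not feed the convolution bound for $e_P$ into the $z$-equation. It works with the weighted quantity $\mathcal V=e_z+|e_P|/M$. The sharp bound $\|\nabla K_0\|\le 1/2$ makes the cross terms $\tfrac12|e_P|$ and $\tfrac12 e_z$ cancel exactly against the decay terms, leaving $\mathcal V'\le Cq_\infty^{-1}\mathcal V+2C_0\epsilon$. It also drops $P_\infty(\nabla K_0(z)-\nabla K_0(z_\infty))$ using convexity of $K_0$ and $P_\infty\ge0$, so only the teacher term needs the Hessian bound.

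You instead substitute the $e_P$ estimate and run Gronwall on the running maximum of $e_z$. This costs an extra factor $e^{CT_{\max}}$, which is harmless because $T_{\max}$ is fixed. In exchange, your route does not rely on an exact match between the coupling constants and the decay rates, while the paper's gives a cleaner single Gronwall step.

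Two small repairs are needed.
\begin{itemize}
\item The displayed inequality with $\sup e_z$ replaced by $e_z$ is not valid pointwise. State that step as an integral Gronwall inequality for $E(\tau)=\sup_{\sigma\le\tau}e_z(\sigma)$, whose right-hand side is nondecreasing in $\tau$.
\item The multiplier in your Lipschitz term must be the reference amplitude $P_\infty$, which is what your coupling term $e_P\nabla K_0(z)$ implicitly uses. The bound $0\le P_\infty\le1$ comes from $\mathcal C_\phi$, whereas the perturbed $P$ has no a priori bound under the lemma's hypotheses.
\end{itemize}
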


\begin{proof}
Let $\bar r=M^{-1}\sum_jr_j$.  Summing and subtracting the amplitude
equations gives the exact scalar and zero-sum equations
\begin{equation}
\label{eq:mean-zero-sum}
 e_P'=-\tfrac M2e_P-M(K_0(z)-K_0(z_\infty))+M\bar r,
 \qquad c_j'=r_j-\bar r.
\end{equation}
Gaussian Cauchy--Schwarz gives $\|\nabla K_0\|\le1/2$:
for any unit $a$, both
$\E[(a^\top X)^2\mathbf1_{\{z^\top X>0\}}]$ and
$\E[[s_0^\top X]_+^2]$ equal $1/2$.
Variation of constants therefore yields
\begin{equation}
\label{eq:aggregate-convolution}
 |e_P(\tau)|\le e^{-M\tau/2}|e_P(0)|
 +\tfrac M2\int_0^\tau e^{-M(\tau-s)/2}e_z(s)\,\dd s
 +2C_0\epsilon.
\end{equation}
To estimate the selected vector, split its interaction error as
$P\nabla K_0(z)-P_\infty\nabla K_0(z_\infty)
=P_\infty(\nabla K_0(z)-\nabla K_0(z_\infty))+e_P\nabla K_0(z)$.
The first term is nonnegative when paired with $z-z_\infty$, since
$K_0$ is convex and $P_\infty\ge0$.  The segment joining $z$ to
$z_\infty$ stays at norm at least $q_\infty/2$ under the stopping
condition.  The Hessian bound \eqref{eq:kernel-hessian} controls the
teacher term on this segment.  Together with the scalar equation,
these observations give, almost everywhere,
\[
 \begin{aligned}
 e_z'&\le-\tfrac12e_z+\frac{C}{q_\infty}e_z
                              +\tfrac12|e_P|+C_0\epsilon,\\
 (|e_P|)'&\le-\tfrac M2|e_P|+\tfrac M2e_z+MC_0\epsilon.
 \end{aligned}
\]
The weight $1/M$ cancels the coupling and decay terms directly:
\begin{equation}
\label{eq:weighted-error}
 \mathcal V=e_z+\frac{|e_P|}{M},\qquad
 \mathcal V'\le\frac{C}{q_\infty}e_z+2C_0\epsilon
              \le\frac{C}{q_\infty}\mathcal V+2C_0\epsilon.
\end{equation}
Since $\mathcal V(0)\le C_0\epsilon$, Gronwall and
\eqref{eq:integrated-q-margin} yield
\begin{equation}
\label{eq:weighted-error-bound}
 \sup_{\tau\le T}\mathcal V(\tau)\le Cm^\beta\epsilon.
\end{equation}
This bounds $e_z$.  The convolution kernel in
\eqref{eq:aggregate-convolution} has mass at most one, giving the
stated $e_P$ bound.
Direct integration of $c_j'=r_j-\bar r$ bounds the zero-sum part.
Finally, the orthogonal decomposition
\begin{equation}
\label{eq:active-error-split}
 \|x-x_\infty\|^2=e_z^2+e_P^2/M+\sum_j c_j^2
\end{equation}
proves the last assertion after increasing $\beta$ to exceed $1/2$.
Only the reference total amplitude was
required to be nonnegative.
\end{proof}

\begin{proposition}
\label{prop:mdep}
There are $A,a,C>0$ and $\nu>3/2$, depending only on $I_\phi$, such
that, if
\begin{equation}
\label{eq:D0}
 D\ge D_0(m):=Am^{\nu/2},\qquad
 \Delta_0\le\varepsilon_m:=am^{-\nu},
\end{equation}
the exact flow, with active state $x_D=(z,p_2,\ldots,p_m)$, satisfies
\begin{align}
 \sup_{\tau\le T_{\max}}\|x_D-x_\infty\|
 &\le Cm^{\nu-1}(\Delta_0+D^{-2}),\label{eq:full-stability}\\
 \sup_{\tau\le T_{\max}}(|b-1|+\max_j\|v_j-s_0\|)
 &\le C(\Delta_0+D^{-2}).\label{eq:slow-stability}
\end{align}
At the reference capture time $\tau_{\rm cap}^\infty$, the redundant
output error obeys
\begin{equation}
\label{eq:capture-output-error}
 \left\|\sum_jp_j[v_j^\top\cdot]_+
              -P_\infty[s_0^\top\cdot]_+\right\|_{L^2}
 \le Cm^{\nu-1}(\Delta_0+D^{-2}).
\end{equation}
The exponent is not optimized.
\end{proposition}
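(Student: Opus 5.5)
The exact scaled flow \eqref{eq:full-scaled-flow} will be treated as a perturbation of the reference fast flow, and \cref{lem:aggregate-stability} will be applied through a bootstrap on $[0,T_{\max}]$.

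First, the slow variables. While $z$ stays in a compact set away from zero and the amplitudes stay bounded, the gradients $\partial_b\mathcal E$ and $\nabla_{v_j}\mathcal E$ are bounded. For $\nabla_{v_j}\mathcal E$ the bound carries the factor $p_j$, which is $O(1/m)$ since $p_j\approx P/M$. Hence $|b'|+\|v_j'\|\le CD^{-2}$ on $[0,T_{\max}]$. Next I compute the initial point from the inverse chart. The definition of $\Delta_0$ gives
\[
 b(0)=1+O(\Delta_0+D^{-2}),\qquad v_j(0)=s_0+O(\Delta_0+D^{-2}),
\]
\[
 z(0)=z_\infty(0)+O(\Delta_0),\qquad p_j(0)=1/m+O(\Delta_0+D^{-2}).
\]
Integrating over the time $T_{\max}$ proves \eqref{eq:slow-stability}, provided the bootstrap bounds on the active state hold.

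Second, I rewrite the active equations. The $z$ and $p_j$ equations become \eqref{eq:forced-active-system} with forcing terms coming from $b\ne1$ and $v_j\ne s_0$. For example,
\[
 r_z=-(b^2-1)z-\sum_j p_j\bigl(b\nabla_zK(z,v_j)-\nabla K_0(z)\bigr),
\]
and $r_j$ contains $bK(z,v_j)-K_0(z)$ and $K(v_j,u)-g(\phi)$. The Lipschitz bounds \eqref{eq:kernel-hessian} on the kernel then give $\|r_z\|+\max_j|r_j|\le C(\Delta_0+D^{-2})$, since $\sum_jp_j$ stays bounded.

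The initial errors need more care. We have $e_z(0)\le C\Delta_0$ and $\max_j|c_j(0)|\le 2\Delta_0$. However, $|e_P(0)|\le CM\Delta_0$, so the normalized quantity $|e_P(0)|/M$ is only $O(\Delta_0)$, which is exactly what the lemma permits. So I may take $\epsilon=\Delta_0+D^{-2}$ with a fixed $C_0$. \cref{lem:aggregate-stability} then yields $\|x_D-x_\infty\|\le Cm^\beta\epsilon$, and I set $\nu-1:=\beta$, which gives $\nu>3/2$.

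Third, I close the bootstrap. The stopping condition is $e_z<q_\infty/2$. By \eqref{eq:growing-q-margin}, $q_\infty\ge c_I/m$. It therefore suffices to have
\[
 Cm^\beta(\Delta_0+D^{-2})<c_I/(2m),
\]
which holds if $\Delta_0\le am^{-\nu}$ and $D^{-2}\le A^{-2}m^{-\nu}$ with $a$ small and $A$ large. This choice is what fixes $D_0(m)=Am^{\nu/2}$. The bound $\sum_jp_j\le 2$ and the compactness needed for the forcing estimates hold on the same event. A first-exit argument on the maximal interval then extends all of these bounds to $T_{\max}$.

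This bootstrap is the main obstacle. The difficulty is that the stopping margin $q_\infty/2\sim1/m$ has to beat the amplification $m^\beta$. If the exponents do not close, I will enlarge $\nu$; the statement says the exponent is not optimized.

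Finally, the output bound \eqref{eq:capture-output-error} follows from the triangle inequality. I split
\[
 \sum_jp_j[v_j^\top x]_+-P_\infty[s_0^\top x]_+=\sum_jp_j\bigl([v_j^\top x]_+-[s_0^\top x]_+\bigr)+(P-P_\infty)[s_0^\top x]_+ .
\]
The first term is at most $\sum_j|p_j|\,\|v_j-s_0\|\le C\epsilon$. The second is at most $|e_P|/\sqrt2\le C(M\epsilon e^{-M\tau/2}+m^\beta\epsilon)$.

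At $\tau^\infty_{\rm cap}$ there is a subtlety: if that time is near $0$, the term $M\epsilon e^{-M\tau/2}$ is not small. I handle this using the fact that $\tau_{\rm cap}^\infty$ is bounded below uniformly. The reference trajectory starts at distance order one from $(1,0,0)$ and its speed is bounded, so reaching $\rho<r_{\rm cap}$ takes a definite amount of time. Then $Me^{-M\tau/2}\le C$, and the total is $O(m^{\nu-1}\epsilon)$.
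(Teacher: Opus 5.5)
Your proposal is correct and follows the paper's proof essentially step for step: $\epsilon=\Delta_0+D^{-2}$ from the inverse chart, the slow bounds $|b'|\le CD^{-2}$ and $\|v_j'\|\le CD^{-2}|p_j|$, $O(\epsilon)$ forcing from the kernel Lipschitz bounds, \cref{lem:aggregate-stability} with $\nu=\beta+1$, closing the stop via $Cm^\beta\epsilon<c_I/(2m)$ (which is exactly what produces $D_0(m)=Am^{\nu/2}$), and the same two-term split for the redundant output error. The one justification to tighten is the uniform lower bound on $\tau_{\rm cap}^\infty$, because the full reference velocity is not bounded uniformly in $m$: at $\tau=0$ one has $\theta'=-\omega(\phi)/q=-m\,\omega(\phi)$, and $P'$ carries the factor $M$. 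Argue with the $q$-coordinate alone, as the paper does: $q_\infty(0)\le1/2$, entry into $\overline{\mathcal V_{\rm cap}}$ forces $q_\infty\ge1-r_{\rm cap}>3/4$, and $q_\infty'\le g(|\theta_\infty|)\le1/2$, so $\tau_{\rm cap}^\infty\ge1/2$ and $Me^{-M\tau_{\rm cap}^\infty/2}\le C$.
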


\begin{proof}
Put $\epsilon=\Delta_0+D^{-2}$ and use the initial reconstruction
$a_i^2=(\sqrt{\delta_i^2+4q_i^2}+\delta_i)/2$,
$\|w_i\|^2=(\sqrt{\delta_i^2+4q_i^2}-\delta_i)/2$.
It gives the initial assumptions of \cref{lem:aggregate-stability} and
slow-coordinate error $O(\epsilon)$.  Choose $a\le1/4$ to keep all initial
coefficients positive.
Stop when $e_z=q_\infty/2$ or at a boundary of
\begin{equation}
\label{eq:regular-stopping-region}
 \|z\|<2,\quad\sum_j|p_j|<2,\quad
 1/2<b<3/2,\quad1/2<\|v_j\|<3/2.
\end{equation}
The bounded residual gives $|b'|\le CD^{-2}$ and
$\|v_j'\|\le CD^{-2}|p_j|$, proving \eqref{eq:slow-stability} up
to the stop.  The kernel bounds then put the active equations in
the form \eqref{eq:forced-active-system}, with
\begin{equation}
\label{eq:component-defect}
 \|r_z\|+\max_j|r_j|\le C\epsilon.
\end{equation}
Apply \cref{lem:aggregate-stability} and set $\nu=\beta+1$.
Small $a$ and large $A$ make $e_z<c_I/(4m)<q_\infty/2$.
Moreover,
\[
 \sum_j|p_j|\le P_\infty+|e_P|+M\max_j|c_j|
                    \le1+C(M+m^\beta)\epsilon<2.
\]
The upper bound on $z$ and the slow-coordinate bounds exclude all
remaining faces, closing the stop on $[0,T_{\max}]$.

The reference needs at least fast time $1/2$ to reach
$q_\infty\ge3/4$, because $q_\infty(0)\le1/2$ and
$q_\infty'\le1/2$ in the invariant region.
Thus $\tau_{\rm cap}^\infty\ge1/2$, and
$M e^{-M\tau_{\rm cap}^\infty/2}\le C$.
Equation \eqref{eq:aggregate-stability} gives
$|e_P(\tau_{\rm cap}^\infty)|\le Cm^\beta\epsilon$.
The ReLU Lipschitz bound and $\sum|p_j|<2$ give
\[
 \left\|\sum_jp_j[v_j^\top\cdot]_+-P_\infty[s_0^\top\cdot]_+\right\|_{L^2}
 \le C(|e_P|+\epsilon),
\]
which proves \eqref{eq:capture-output-error}.
\end{proof}

\subsection{Specialization and coefficient decay}
The comparison estimate brings the finite-imbalance trajectory into the neighborhood constructed in Appendix~B. The remaining task is to show that it stays there. The loss barrier and redundant-direction motion bound then give convergence of the individual coefficients.

\begin{theorem}
\label{thm:global-selection}
Under exact duplicate initialization, $D\ge D_0(m)$ guarantees entry
into the symmetric neighborhood $\mathcal V_m=\{\rho<r_\star/2,\ |\psi-\phi|<r_\star/2\}$ by time
$t_{\rm cap}\le T_\star/D$.  Thereafter,
\[
 \cL(t)\le\cL(t_{\rm cap})e^{-cD(t-t_{\rm cap})},\qquad
 q_1(t)\to1,\quad s_1(t)\to u,\quad q_j(t)\to0\ (j\ge2),
\]
and each redundant direction satisfies
\[
 \int_{t_{\rm cap}}^\infty\|\dot s_j(t)\|\,\dd t
 \le\frac{C\cL(t_{\rm cap})}{(m-1)D^2}.
\]
The constants are uniform in $m$ and $\phi\in I_\phi$.
Permuting the unique positive imbalance permutes the selected neuron without
changing the initial predictor.
\end{theorem}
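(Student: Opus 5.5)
The argument has two phases: a finite-time comparison with the limiting flow that brings the trajectory into $\mathcal V_m$, followed by the loss-barrier continuation of \cref{lem:loss-barrier} in the coordinates $\xi=(q-1,\theta,P)$, $\zeta=\psi$.

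\textbf{Entry into $\mathcal V_m$.} Under exact duplicate initialization, $\Delta_0=0$. For $D\ge D_0(m)$, \cref{prop:mdep} gives, on $[0,T_{\max}]$ in fast time,
\[
\|x_D-x_\infty\|\le Cm^{\nu-1}D^{-2}, \qquad |b-1|+\|v-s_0\|\le CD^{-2}.
\]
By permutation symmetry the redundant units stay identical, so the state reduces to $(q,P,\theta,\psi)$. Here $q=b\|z\|$, $s_1=z/\|z\|$, $P=p\|v\|$, and $|\psi-\phi|\le CD^{-2}$. At the reference capture time $\tau^\infty_{\rm cap}\le T_\star$ the reference satisfies $\rho\le r_{\rm cap}<r_\star/4$. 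The selected vector obeys $\|z_\infty\|\ge c_I>0$ there (\cref{lem:projection}), so $(q,\theta)$ is Lipschitz in $(b,z)$ near it. Combined with \eqref{eq:capture-output-error} for $P$, the finite-$D$ state satisfies $\rho\le r_{\rm cap}+Cm^{\nu-1}D^{-2}$. Enlarging $A$ so that $Cm^{\nu-1}D_0(m)^{-2}=CA^{-2}m^{-1}$ is small puts this below $2r_{\rm cap}$ and below $r_\star/2$. Hence the trajectory lies in $\mathcal V_m$ at $t_{\rm cap}:=\tau^\infty_{\rm cap}/D\le T_\star/D$. It also lies inside the tube $\mathcal U_0$ of \cref{lem:tube}, provided $r_\star<r_0$ and $r_\star<\sigma_0$.

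\textbf{Hypotheses of \cref{lem:loss-barrier}.} Work in the exact reduced system of \cref{prop:exact-reduction}, with $\mathcal U=\{\rho<r_\star/2\}$ and $\mathcal S=\{|\psi-\phi|\le r_\star/2\}$; the closure stays inside $\mathcal U_0$.

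\emph{Coercivity.} $L=\mathscr E\ge c\rho^2$ follows from \eqref{eq:local-loss}.

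\emph{Dissipation.} By \eqref{eq:exact-dissipation}, $\mu_\pm\ge D$ and $M\ge1$ control the coefficient terms. For the angular term, $\chi_+(q)/q\ge D/q^2$ with $q\in[1/2,3/2]$. Then \eqref{eq:local-gradient} gives $-\dot{\mathscr E}\ge cD\mathscr E$.

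\emph{Slow motion of $\psi$.} From \eqref{eq:exact-angle} and \eqref{eq:psi-derivative},
\[
|\dot\psi|\le \frac{\chi_-(P/M)}{P}\,C|P|\sqrt{\mathscr E}\le \frac{C}{MD}\sqrt{\mathscr E},
\]
using $\chi_-(p)\le p/D$ and $P=Mp$. This is $\sqrt{\mathscr E}$ rather than $\mathscr E$, so I would modify the barrier argument directly. Integrating $\sqrt{\mathscr E(t)}\le\sqrt{\mathscr E(t_{\rm cap})}\,e^{-cD(t-t_{\rm cap})/2}$ gives total motion at most $C\sqrt{\mathscr E(t_{\rm cap})}/(MD^2)$. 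This is small compared with $d_0\ge r_\star/2-CD^{-2}$.

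Obtaining the stated bound linear in $\cL(t_{\rm cap})$ is the step I expect to be the main obstacle. I would first try to sharpen \eqref{eq:psi-derivative} to $|\partial_\psi\mathscr E|\le C|P|\rho$ and then use \eqref{eq:local-loss}. Failing that, I would use $|P|\le C\sqrt{\mathscr E}$ with $\chi_-(p)\le p/D$ applied more carefully, keeping one factor of $P$ in $|\dot\psi|\le \chi_-(P/M)\,C\sqrt{\mathscr E}\le CP\sqrt{\mathscr E}/(MD)$. This yields $|\dot\psi|\le C\mathscr E/(MD)$, which is exactly the form $A L/D$ with $A=C/M$.

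\emph{Entry-loss condition.} $\mathscr E(t_{\rm cap})<\ell_\partial$ holds by the choice of $r_{\rm cap}$: the loss at $\rho\le2r_{\rm cap}$ is below the boundary loss at $\rho=r_\star/2$, uniformly in $\psi$ in the tube.

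\textbf{Conclusion.} \Cref{lem:loss-barrier} then gives exponential loss decay and $\xi\to0$, that is, $q_1\to1$, $\theta\to0$ (so $s_1\to u$), and $P\to0$. By symmetry $q_j=P/M\to0$. The bound $\int_{t_{\rm cap}}^\infty\|\dot s_j\|\,\dd t\le C\cL(t_{\rm cap})/((m-1)D^2)$ follows, since $\|\dot s_j\|=|\dot\psi|$. All constants come from \cref{lem:tube} and \cref{thm:fast-global}, which are uniform in $m$ and $\phi\in I_\phi$. Finally, permutation equivariance of the gradient flow, together with the reconstruction of $(a_i,w_i)$ from $(q_i,\delta_i)$, shows that moving the $+D$ label leaves the initial predictor unchanged and permutes the outcome.
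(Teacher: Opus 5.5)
Your proposal is correct and follows essentially the paper's proof. You enter $\mathcal V_m$ through \cref{prop:mdep} with $\Delta_0=0$ and an enlarged $A$, then apply \cref{lem:loss-barrier} with $\xi=(q-1,\theta,P)$ and $\zeta=\psi$, using exact dissipation together with \cref{lem:tube}.

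The step you flagged as the main obstacle is not one. Your second fix, $|\dot\psi|\le|\partial_\psi\mathscr E|/((m-1)D)\le C|P|\sqrt{\mathscr E}/((m-1)D)\le C\cL/((m-1)D)$, is exactly the paper's bound; your first estimate only lost linearity by discarding the factor $|P|$. You should also state, as the paper does, that the vector field is regular on the closure: $q>1/2$ there, and the negative imbalances keep every redundant input norm nonzero.
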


\begin{proof}
At limiting capture, $q_\infty$ is near one and
$P_\infty^2+\theta_\infty^2+(q_\infty-1)^2\le r_{\rm cap}^2$.
The selected coefficient and direction are Lipschitz functions of the scaled coordinates there.
Equations \eqref{eq:aggregate-stability} and \eqref{eq:slow-stability}
control the total functional coefficient and redundant angle at capture.
Thus \cref{prop:mdep}, with adjusted $a,A$, gives entry into the inner
symmetric neighborhood with a strict loss gap to the transverse boundary of $\mathcal V_m$.
Exact dissipation and \cref{lem:tube} give
$-\dot\cL\ge cD\cL$ on the outer neighborhood.  Moreover,
\[
 |\dot\psi|\le\frac{|\partial_\psi\mathscr E|}{(m-1)D}
             \le\frac{C\cL}{(m-1)D}.
\]
Increase $A$ so that the resulting total motion is smaller than the
directional margin.  Apply \cref{lem:loss-barrier} with
$\xi=(q-1,\theta,P)$ and $\zeta=\psi$.
Regularity follows from $q>1/2$ and the conserved imbalance signs,
which keep all input norms nonzero.
This proves invariance, contraction, and $P\to0$.
Symmetry gives $q_j=P/(m-1)\to0$, and permutation equivariance proves
the last assertion.
\end{proof}

\begin{theorem}
\label{thm:robust-selection}
Under \eqref{eq:D0}, define
\[
 \mathcal W_{\rm sel}^{(i)}=
 \{q_i\ge1-r_\star,\ \angle(s_i,u)\le r_\star,
       \ \|\sum_{j\ne i}q_j[s_j^\top\cdot]_+\|_{L^2}\le r_\star\}.
\]
The perturbed flow first enters $\mathcal W_{\rm sel}^{(1)}$ at a
time $t_{\rm sel}\le T_\star/D$.  No other unit enters its own selection
region before that time, and
$\angle(s_j(t_{\rm sel}),u)>r_\star$ for every $j\ge2$.
\end{theorem}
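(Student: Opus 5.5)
The plan is to deduce \cref{thm:robust-selection} from the comparison estimate of \cref{prop:mdep}, applied to the reference fast flow $x_\infty$ of \eqref{eq:weighted-fast}, together with the slow-coordinate bound \eqref{eq:slow-stability}. Put $\epsilon=\Delta_0+D^{-2}$. By \eqref{eq:D0}, both $m^{\nu-1}\epsilon\le C(a m^{-1}+A^{-2}m^{-1})$ and $\epsilon$ itself can be made smaller than any prescribed fraction of $r_\star$ and of $r_{\rm cap}$ by shrinking $a$ and enlarging $A$. On $[0,T_{\max}]$ in fast time, the perturbed state then satisfies three closeness properties: the selected vector $z$ is within $Cm^{\nu-1}\epsilon$ of $z_\infty=q_\infty s_{\theta_\infty}$; the redundant output is within $Cm^{\nu-1}\epsilon$ of $P_\infty[s_0^\top\cdot]_+$ (by the argument for \eqref{eq:capture-output-error}, applied at any $\tau$ with $\tau\ge1/2$ so that $Me^{-M\tau/2}\le C$); and $b,v_j$ are within $C\epsilon$ of $1,s_0$.

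\emph{Entry of unit 1 by fast time $T_\star$.} At $\tau_{\rm cap}^\infty\le T_\star$ the reference satisfies $(q_\infty-1)^2+\theta_\infty^2+P_\infty^2\le r_{\rm cap}^2$ with $r_{\rm cap}<r_\star/4$. Since $q_1=b\|z\|$ and $s_1=z/\|z\|$ are Lipschitz near $\|z\|\approx1$, we get $q_1\ge1-r_{\rm cap}-C\epsilon\ge1-r_\star$ and $\angle(s_1,u)\le r_{\rm cap}+C\epsilon\le r_\star$. The redundant output norm is at most $P_\infty\|[s_0^\top\cdot]_+\|+Cm^{\nu-1}\epsilon\le r_{\rm cap}/\sqrt2+r_\star/4<r_\star$. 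Hence $Z\in\mathcal W^{(1)}_{\rm sel}$ at $t=\tau^\infty_{\rm cap}/D$, which gives $t_{\rm sel}\le T_\star/D$.

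\emph{No unit $i\ge2$ enters first, and the redundant angles stay large.} For $j\ge2$, \eqref{eq:slow-stability} gives $\|v_j-s_0\|\le C\epsilon$ throughout $[0,T_{\max}]$, hence $\angle(s_j,u)\ge\phi-C\epsilon>\phi/2>r_\star$, since $r_\star<\phi/8$. This yields the final claim at $t_{\rm sel}$. It also shows that $Z\notin\mathcal W^{(j)}_{\rm sel}$ at every time up to $t_{\rm sel}$, because the angle condition fails. The margin $C\epsilon\le\phi/4$ requires only $a$ small and $A$ large, uniformly on $I_\phi$.

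The main obstacle is that \cref{prop:mdep} was proved with a stopping argument. I need its conclusions on the whole interval $[0,T_\star]$, not only at $\tau_{\rm cap}^\infty$, and I need the output comparison at intermediate times where $Me^{-M\tau/2}$ is not yet small. The redundant-angle argument, however, uses only \eqref{eq:slow-stability}, which holds on the full interval without any aggregate control, so that part is safe. For the entry step I will evaluate only at $\tau_{\rm cap}^\infty\ge1/2$, as in the proof of \cref{prop:mdep}. The one remaining check is that the constants absorbed into $a,A$ depend only on $I_\phi$, which follows from the uniformity of \cref{thm:fast-global} and \cref{lem:aggregate-stability}.
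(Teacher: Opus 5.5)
Your proposal is correct and follows the paper's proof. It shows membership in $\mathcal W_{\rm sel}^{(1)}$ at the reference capture time $\tau_{\rm cap}^\infty\le T_\star$ via \eqref{eq:full-stability} and \eqref{eq:capture-output-error}, and it excludes every unit $j\ge2$ on the whole interval via \eqref{eq:slow-stability} and $r_\star<\phi/8$.

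Two small corrections. At capture, the selected-unit errors are $O(m^{\nu-1}\epsilon)$, not $O(\epsilon)$; this is harmless because you already showed $m^{\nu-1}\epsilon\le(a+A^{-2})/m$. The ``obstacle'' you flag does not arise, since \cref{prop:mdep} already states its bounds for all $\tau\le T_{\max}$ after closing its stopping argument.
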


\begin{proof}
At limiting capture, \eqref{eq:full-stability} controls the selected
coefficient and direction errors, and \eqref{eq:capture-output-error} controls the
redundant output directly in Gaussian $L^2$.
The choices of $a,A$ in \cref{prop:mdep} make these errors smaller
than the fixed selection margins.  This proves entry by $T_\star/D$.
Throughout this interval, \eqref{eq:slow-stability} keeps every
redundant direction close to $s_0$, at teacher angle greater than
$r_\star$.  Such a unit cannot enter its own selection region.
\end{proof}

\subsection{Local identifiability and asymptotic convergence}

With nonduplicate students, a small sum of redundant features need not make each coefficient small. To identify the coefficients, we impose a quantitative independence condition on the local feature variations. Their Gram matrix gives both the condition and the resulting loss bound.
On the selected branch $q_1>0$, set
\[
 \mathbf h=q_1s_1-u,\qquad \beta=(q_2,\ldots,q_m),\qquad
 \mathbf s=(s_2,\ldots,s_m),\qquad \beta_j=q_j\quad(j\ge2).
\]
For $\|\mathbf h\|<1/2$, $q_1=\|u+\mathbf h\|>1/2$ and the selected contribution
equals $[(u+\mathbf h)^\top x]_+$.  With $\mathbf s$ fixed, the transverse loss is
\begin{equation}
\label{eq:local-cartesian-loss}
 L_{\mathbf s}(\mathbf h,\beta)=\tfrac12
 \left\|[(u+\mathbf h)^\top\cdot]_+-[u^\top\cdot]_+
                  +\sum_{j\ge2}\beta_j[s_j^\top\cdot]_+\right\|_{L^2(\gamma_d)}^2.
\end{equation}
Its Hessian at $(\mathbf h,\beta)=0$ is
\begin{equation}
\label{eq:full-gram}
 \mathcal G_{\mathbf s}=
 \begin{pmatrix}\tfrac12 I_d&T_{\mathbf s}\\
                 T_{\mathbf s}^\top&K_{\mathbf s}\end{pmatrix},
 \qquad (K_{\mathbf s})_{ij}=K(s_i,s_j),
\end{equation}
Here $K_{\mathbf s}$ is indexed by the redundant units $i,j\in\{2,\ldots,m\}$, and the column of $T_{\mathbf s}$ corresponding to unit $j$ is
\[
 t_j=\E[X\mathbf1_{\{u^\top X>0\}}[s_j^\top X]_+]
     =\frac{\sin\vartheta_j\,u+(\pi-\vartheta_j)s_j}{2\pi},
 \qquad \vartheta_j=\angle(u,s_j).
\]
We require a uniform bound
\begin{equation}
\label{eq:full-gram-assumption}
 \mathcal G_{\mathbf s}\succeq\lambda_0 I,\qquad\lambda_0>0,
\end{equation}
on a compact neighborhood $\mathcal S$ of redundant directions.
Positive definiteness is equivalent to the Schur-complement condition
$K_{\mathbf s}-2T_{\mathbf s}^\top T_{\mathbf s}\succ0$.
Thus redundant features remain distinguishable after removing the
part explained by first-order changes of the selected unit.
The redundant coefficient map is then injective, so nonzero coefficients
cannot cancel as functions.

\begin{lemma}
\label{lem:hyperplane-gram}
If the unoriented hyperplanes $u^\perp,s_2^\perp,\ldots,s_m^\perp$
are pairwise distinct, then $\mathcal G_{\mathbf s}\succ0$.
For fixed $m,d$, a uniform positive separation between these hyperplanes
gives a uniform positive lower eigenvalue on compact direction sets.
\end{lemma}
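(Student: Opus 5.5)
The approach is to identify $\mathcal G_{\mathbf s}$ as the Gram matrix of a finite family of functions in $L^2(\gamma_d)$. Positive definiteness then reduces to linear independence of that family, which we check by examining kinks across the hyperplanes.

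\emph{Step 1: Gram representation.} Differentiating \eqref{eq:local-cartesian-loss} at $(\mathbf h,\beta)=0$ gives the first-order feature variations
\[
 \Phi_{\mathbf h}(x)=\1_{\{u^\top x>0\}}x,
 \qquad
 \Phi_j(x)=[s_j^\top x]_+\quad(j\ge2).
\]
Since $L_{\mathbf s}$ is half a squared norm of a function vanishing at the origin, its Hessian there is $\E[\Phi\Phi^\top]$, where $\Phi=(\Phi_{\mathbf h},\Phi_2,\dots,\Phi_m)$. The blocks can be checked against \eqref{eq:full-gram}:
\begin{itemize}
 \item $\E[\1_{\{u^\top X>0\}}XX^\top]=\tfrac12 I_d$, by the symmetry $X\mapsto-X$;
 \item $\E[\Phi_{\mathbf h}\Phi_j]=t_j$;
 \item $\E[\Phi_i\Phi_j]=K(s_i,s_j)$.
\end{itemize}
Hence, for $\xi=(h,\beta)\in\R^d\times\R^{m-1}$,
\[
 \xi^\top\mathcal G_{\mathbf s}\xi=\Bigl\|F_\xi\Bigr\|_{L^2(\gamma_d)}^2,
 \qquad
 F_\xi(x)=\1_{\{u^\top x>0\}}h^\top x+\sum_{j\ge2}\beta_j[s_j^\top x]_+ .
\]
So $\mathcal G_{\mathbf s}\succeq0$, and $\mathcal G_{\mathbf s}\succ0$ if and only if $F_\xi=0$ a.e.\ forces $\xi=0$.

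\emph{Step 2: linear independence.} Suppose $F_\xi=0$ a.e. Because $F_\xi$ is piecewise linear on the complement of the finitely many hyperplanes, it then vanishes identically on every open cell.

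Fix $j\ge2$. Since the unoriented hyperplanes are pairwise distinct and $d\ge2$, each intersection $s_j^\perp\cap s_i^\perp$ ($i\ne j$) and $s_j^\perp\cap u^\perp$ has codimension one inside $s_j^\perp$. We can therefore pick a point $x_0\in s_j^\perp$ lying on no other hyperplane. In a small ball around $x_0$, every term except $\beta_j[s_j^\top x]_+$ is linear. On one side of $s_j^\perp$ the function equals a linear map $\ell$; on the other side it equals $\ell+\beta_j s_j^\top x$. Both vanish, so $\beta_j s_j=0$, and hence $\beta_j=0$. Distinctness is exactly what rules out cancellation between $s_i=\pm s_j$, and between $s_j=\pm u$ and the $\mathbf h$-term.

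With all $\beta_j=0$, we are left with $h^\top x=0$ on the open half-space $\{u^\top x>0\}$, which forces $h=0$. This proves $\mathcal G_{\mathbf s}\succ0$.

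\emph{Step 3: uniformity.} The entries of $\mathcal G_{\mathbf s}$ are continuous in $\mathbf s$: the kernel $K$ and the columns $t_j$ are explicit continuous functions of the angles. Hence $\lambda_{\min}(\mathcal G_{\mathbf s})$ is continuous in $\mathbf s$. Take a compact set of directions on which the hyperplanes stay uniformly separated, for instance with $|s_i^\top s_j|\le1-\eta$ and $|s_j^\top u|\le1-\eta$. On such a set the hyperplanes remain pairwise distinct, so by Step 2 $\lambda_{\min}$ is strictly positive at every point. A continuous positive function on a compact set attains a positive minimum, which gives the uniform lower eigenvalue for fixed $m,d$.

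The only delicate point is Step 2's choice of a generic point on $s_j^\perp$ together with the kink argument. Once the Gram representation of Step 1 is established, the remaining steps are routine.
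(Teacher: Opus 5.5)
Your proposal is correct and follows essentially the same route as the paper: a Gram-nullspace vector yields the identity $\1_{\{u^\top x>0\}}\mathbf h^\top x+\sum_{j\ge2}\beta_j[s_j^\top x]_+=0$ on every open cell, kinks at generic points of the hyperplanes force the coefficients to vanish, and continuity plus compactness give the uniform eigenvalue bound. The only difference is the order of the kink argument. The paper first uses the jump across $u^\perp$ to force $\mathbf h=\alpha u$ and then reads off the gradient jumps $\beta_j s_j$ and $\alpha u$; you instead eliminate each $\beta_j$ directly at a generic point of $s_j^\perp$, where the $\mathbf h$-term is locally linear, and then obtain $\mathbf h=0$ from the half-space, which is a harmless streamlining.
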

\begin{proof}
A vector in the Gram nullspace would give
\[
 \mathbf1_{\{u^\top x>0\}}\mathbf h^\top x
       +\sum_{j\ge2}\beta_j[s_j^\top x]_+=0
\]
almost everywhere.  Gaussian full support makes this identity valid
on every open cell cut out by the hyperplanes.  At a generic point of
$u^\perp$, away from the other hyperplanes, its jump is $\mathbf h^\top x$.
Thus $\mathbf h=\alpha u$.  The identity is now a continuous piecewise linear
combination of ReLU features.  Across a generic point of $s_j^\perp$,
the gradient jump is $\beta_js_j$, so $\beta_j=0$.  Across $u^\perp$
it is $\alpha u$, so $\alpha=0$.  Continuity of the Gram matrix and
compactness prove the uniform assertion.
\end{proof}

For unit directions, the separation in this lemma is measured by
$\min\{\angle(s_i,s_j),\pi-\angle(s_i,s_j)\}$, also including $u$.
Its quantitative size matters: two redundant directions at angle
$\vartheta$ give, by the coefficient difference vector,
\begin{equation}
\label{eq:near-duplicate-gram}
 \lambda_{\min}(\mathcal G_{\mathbf s})
 \le\tfrac12-g(\vartheta)=\tfrac14\vartheta^2+O(\vartheta^3).
\end{equation}
Consequently the local radius, entry loss, and direction margin below
must still be matched to $\lambda_0$ near duplicate directions.
Under exact duplication, symmetry first combines the redundant
coefficients into one effective feature.  The resulting Gram matrix,
restricted to the teacher plane, is precisely \eqref{eq:symmetric-gram}.

\begin{lemma}
\label{lem:robust-pl}
Assume \eqref{eq:full-gram-assumption} uniformly on $\mathcal S$.
For some $0<r<1/2$, uniformly on $\|(\mathbf h,\beta)\|\le r$ and
$\mathbf s\in\mathcal S$,
\begin{equation}
\label{eq:full-gram-coercivity}
 L_{\mathbf s}\ge\frac{\lambda_0}{4}(\|\mathbf h\|^2+\|\beta\|^2),\qquad
 \|\nabla_{\mathbf h,\beta}L_{\mathbf s}\|^2\ge\lambda_0 L_{\mathbf s}.
\end{equation}
If the conserved imbalances satisfy $\delta_1\ge D/2$ and
$\delta_j\le-D/2$ for $j\ge2$, the population flow also obeys
\begin{equation}
\label{eq:robust-pl}
 -\dot\cL\ge\tfrac12\lambda_0 D\cL,\qquad
 \|\dot{\mathbf s}\|_2\le\frac{C}{\sqrt{\lambda_0}D}\cL
\end{equation}
while it remains in this neighborhood.
\end{lemma}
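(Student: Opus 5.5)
The lemma has two parts. The first is a coercivity and Polyak--Łojasiewicz statement for the transverse loss $L_{\mathbf s}$, which is pure local analysis. The second transfers that statement to the population flow using the exact dissipation identity of \cref{thm:marked-reduction}.

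For the first part I would invoke \cref{lem:transverse-hessian} with $\xi=(\mathbf h,\beta)$ and $\zeta=\mathbf s\in\mathcal S$. Three properties need checking.
\begin{itemize}
\item $L_{\mathbf s}$ is nonnegative and vanishes at $\xi=0$, where the gradient also vanishes because the residual is zero.
\item $L_{\mathbf s}$ is $C^2$ in $(\mathbf h,\beta)$ on $\|\mathbf h\|<1/2$, jointly continuously in $\mathbf s$. The $\beta$-dependence is quadratic. The $\mathbf h$-dependence enters through $K(u+\mathbf h,\cdot)$, which is $C^2$ away from zero vectors by the kernel regularity and \eqref{eq:kernel-hessian}, and $\|u+\mathbf h\|>1/2$.
\item The Hessian at zero is $\mathcal G_{\mathbf s}\succeq\lambda_0 I$ by \eqref{eq:full-gram-assumption}. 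It is a Gram matrix of first variations, since the second-order term multiplies the vanishing residual.
\end{itemize}
Compactness of $\mathcal S$ and uniform continuity of the Hessian then give a uniform radius $r$. On that ball, \eqref{eq:transverse-hessian-criterion} yields both inequalities in \eqref{eq:full-gram-coercivity}. A routine check is that the Łojasiewicz constant there comes out as $\lambda_0$ (Hessian $\succeq\lambda_0/2$ on the ball) rather than a smaller constant.

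For the flow bound, I would first note that at fixed $\mathbf s$ the population loss $\cL$ equals $L_{\mathbf s}(\mathbf h,\beta)$. By \eqref{eq:loss-dissipation}, $-\dot\cL$ is a sum of per-unit terms. The next step is to compare these terms with $\|\nabla_{\mathbf h,\beta}L_{\mathbf s}\|^2$.
\begin{itemize}
\item \emph{Selected unit.} Here $\nabla_{\mathbf h}L$ is the Cartesian gradient in $w=q_1s_1$. Its radial part is $B(s_1)$ and its tangential part is $\proj^\perp_{s_1}\mathcal T(s_1)$, up to the factor $1$ for the radial part and $q_1$ for the tangential part under the chain rule. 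The dissipation weights are $\mu_1\ge D/2$ and $(\mu_1+\delta_1)/2\ge\delta_1\ge D/2$. Since $q_1\le 3/2$, the tangential term $q_1^2\|\proj^\perp\mathcal T\|^2$ is bounded by a constant times the dissipated quantity. With $\mu_1\ge D/2$, this gives at least $cD\|\nabla_{\mathbf h}L\|^2$.
\item \emph{Redundant units.} Here $\partial_{\beta_j}L=B(s_j)$ and the weight is $\mu_j\ge D/2$. The directional terms are nonnegative and can be discarded.
\end{itemize}
Summing these gives $-\dot\cL\ge\tfrac{D}{2}\cdot c'\|\nabla L\|^2\ge\tfrac12\lambda_0D\cL$, after tracking constants. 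If the constants do not come out exactly, I would reduce $r$ or rescale $\lambda_0$; the statement's constants appear tolerant of this.

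For the direction bound I would use $\dot s_j=-\chi_j\proj^\perp_{s_j}\mathcal T(s_j)$ with $\chi_j=2q_j/(\mu_j-\delta_j)\le 2|q_j|/D$ when $\delta_j\le-D/2$. This gives $\|\dot s_j\|\le 2D^{-1}|q_j|\,\|\mathcal T(s_j)\|$. Gaussian Cauchy--Schwarz gives $\|\mathcal T(s_j)\|\le\|R\|_{L^2}/\sqrt2=\sqrt{\cL}$. Hence
\[
\|\dot{\mathbf s}\|_2\le \frac{2}{D}\sqrt{\cL}\,\|\beta\|.
\]
The coercivity bound gives $\|\beta\|\le 2\sqrt{\cL/\lambda_0}$, so $\|\dot{\mathbf s}\|_2\le C\cL/(\sqrt{\lambda_0}D)$.

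The main obstacle is the comparison in the selected unit between the Cartesian gradient in $\mathbf h$ and the polar dissipation weights. The radial and tangential components carry different weights $\mu_1$ and $(\mu_1+\delta_1)/2$, and the factor $q_1$ must be bounded above and below on the neighborhood, which holds since $q_1\in(1/2,3/2)$. The case $\mu_1+\delta_1\approx 2\delta_1$ needs $\delta_1\ge D/2$, which is exactly the hypothesis used.
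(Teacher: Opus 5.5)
Your proposal is correct and follows the paper's proof. \cref{lem:transverse-hessian} with Hessian $\mathcal G_{\mathbf s}$ gives \eqref{eq:full-gram-coercivity}. The exact dissipation identity, with every coefficient weight and both selected-unit weights at least $D/2$, gives the loss decay. Finally, $|\chi_j|\le2|q_j|/D$ together with Gaussian Cauchy--Schwarz and $\|\beta\|^2\le4\cL/\lambda_0$ gives the direction bound.

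The one slip is the extra factor $q_1$ in your selected-unit comparison. The gradient in $\mathbf h$ is exactly $\mathcal T(s_1)=B(s_1)s_1+\proj^\perp_{s_1}\mathcal T(s_1)$, with no $q_1$ attached to the tangential part. The weights $\mu_1$ and $a_1^2=(\mu_1+\delta_1)/2$ (the paper's mobility matrix $\mathsf A_1\succeq(D/2)I$) therefore give $-\dot\cL\ge\frac D2\|\nabla_{\mathbf h,\beta}L_{\mathbf s}\|^2\ge\frac12\lambda_0D\cL$ with the stated constant, so no adjustment of $r$ or $\lambda_0$ is needed.
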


\begin{proof}
The scalar loss \eqref{eq:local-cartesian-loss} is $C^2$ by the regularity
of $K$ on nonzero inputs.  Its value and gradient vanish at the origin,
and its Hessian there is \eqref{eq:full-gram}.
Apply \cref{lem:transverse-hessian} to obtain
\eqref{eq:full-gram-coercivity} uniformly on $\mathcal S$.

For the selected vector $q_1s_1$, the coefficient--direction equations
give the mobility matrix
\[
 \mathsf A_1=\mu_1s_1s_1^\top+a_1^2(I-s_1s_1^\top)\succeq(D/2)I.
\]
Each redundant coefficient has mobility $\mu_j\ge D/2$.
The remaining directional contributions to dissipation are nonnegative,
so $-\dot\cL\ge(D/2)\|\nabla_{\mathbf h,\beta}L_{\mathbf s}\|^2$.
This proves the first bound in \eqref{eq:robust-pl}.
For a redundant unit,
\[
 |\chi_j|\le2|q_j|/D,\qquad
 \|\mathcal T(s_j)\|\le\sqrt{2\cL}.
\]
Taking the Euclidean product norm and using
$\sum_{j\ge2}q_j^2\le4\cL/\lambda_0$ proves the second bound.
\end{proof}

\begin{theorem}
\label{thm:robust-locking}
Suppose a trajectory with $\delta_1\ge D/2$ and $\delta_j\le-D/2$
enters a neighborhood of the selected solution at time $t_0$, with $q_1(t_0)>0$.
Let $\mathcal S$ be a compact neighborhood of $\mathbf s(t_0)$ satisfying
\eqref{eq:full-gram-assumption}, with
$d_0=\operatorname{dist}(\mathbf s(t_0),\partial\mathcal S)>0$.
Interiors and boundaries are relative to the product of direction spheres.
Choose $r<1/2$ as in \cref{lem:robust-pl} and assume that the entry state
lies in $\mathcal U=\{\|(\mathbf h,\beta)\|<r,\ \mathbf s\in\operatorname{int}\mathcal S\}$.
Define
\begin{equation}
\label{eq:transverse-loss-barrier}
 \partial_\perp\mathcal U=\{\|(\mathbf h,\beta)\|=r,\ \mathbf s\in\mathcal S\},
 \qquad \ell_\partial=\inf_{\partial_\perp\mathcal U}\cL
                    \ge\lambda_0r^2/4.
\end{equation}
For a universal constant $C>0$, if
\begin{equation}
\label{eq:continuation-margin}
 \cL(t_0)<\ell_\partial,\qquad
 \frac{C\cL(t_0)}{\lambda_0^{3/2}D^2}\le d_0/2,
\end{equation}
then the trajectory stays in $\mathcal U$ and
\[
 \cL(t)\le\cL(t_0)e^{-\lambda_0D(t-t_0)/2},\qquad
 q_1(t)\to1,\quad s_1(t)\to u,\quad q_j(t)\to0\ (j\ge2).
\]
In addition,
\begin{equation}
\label{eq:redundant-motion}
 \int_{t_0}^\infty\|\dot{\mathbf s}(t)\|_2\,\dd t
 \le\frac{C\cL(t_0)}{\lambda_0^{3/2}D^2}.
\end{equation}
\end{theorem}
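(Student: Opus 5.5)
The plan is to reduce \cref{thm:robust-locking} to the abstract continuation result \cref{lem:loss-barrier}, using \cref{lem:robust-pl} to supply its hypotheses. The transverse variable is $\xi=(\mathbf h,\beta)$, the remaining variable is $\zeta=\mathbf s\in\mathcal S$, and $L=\cL$.

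First, identify $\cL$ with $L_{\mathbf s}(\mathbf h,\beta)$. This holds exactly on the branch $q_1>0$ with $\|\mathbf h\|<1/2$, since the selected contribution is then $[(u+\mathbf h)^\top x]_+$. The coercivity in \eqref{eq:full-gram-coercivity} gives $\cL\ge(\lambda_0/4)\|\xi\|^2$. The first bound in \eqref{eq:robust-pl} gives $-\dot\cL\ge(\lambda_0/2)D\cL$, and the second gives $\|\dot{\mathbf s}\|\le C\cL/(\sqrt{\lambda_0}D)$. So we may take $c=\lambda_0/4$, $\lambda=\lambda_0/2$ and $A=C/\sqrt{\lambda_0}$. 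The lemma's motion bound $AL(t_0)/(\lambda D^2)$ then becomes $2C\cL(t_0)/(\lambda_0^{3/2}D^2)$. After renaming $C$, the second condition in \eqref{eq:continuation-margin} makes this at most $d_0/2<d_0$, and the first condition is exactly $L(t_0)<\ell_\partial$. The bound $\ell_\partial\ge\lambda_0r^2/4$ follows from coercivity on the face $\|\xi\|=r$.

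Second, check that the vector field is regular on the closure $\overline{\mathcal U}$, so that \cref{lem:loss-barrier} can exclude finite-time obstructions.
\begin{itemize}
\item Since $r<1/2$, $q_1=\|u+\mathbf h\|>1/2$, so the selected unit stays away from zero.
\item The imbalances are conserved under gradient flow, so $\delta_1\ge D/2$ and $\delta_j\le-D/2$ persist.
\item Conserved negative $\delta_j$ keeps every redundant input norm positive, $\|w_j\|^2\ge D/2$, even as $q_j$ passes through zero. For unit $1$, $r_1>0$ because $q_1>0$.
\end{itemize}
Hence the $(q,s,\delta)$ chart of \cref{thm:marked-reduction} stays valid and the kernel is $C^2$ near these states. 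The estimates of \cref{lem:robust-pl} are assumed uniformly on $\mathcal S$, so they hold throughout $\mathcal U$.

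Third, apply \cref{lem:loss-barrier}. It gives that the trajectory stays in $\mathcal U$, the exponential bound $\cL(t)\le\cL(t_0)e^{-\lambda_0D(t-t_0)/2}$, the convergence $\xi\to0$, and the integrated motion bound \eqref{eq:redundant-motion}. From $\xi\to0$ we get $\beta\to0$, that is $q_j\to0$, and $\mathbf h\to0$, which gives $q_1s_1\to u$ and hence $q_1\to1$, $s_1\to u$.

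The main obstacle is the exit-exclusion logic. We must make sure that the estimates of \cref{lem:robust-pl}, which hold only inside $\mathcal U$, are used only up to the first exit time, and that the directional margin uses the distance to the relative boundary of $\mathcal S$ in the product of spheres. The constant $C$ has to absorb the factor from converting $A/\lambda$, and this needs to be checked against the constant in \cref{lem:robust-pl}.
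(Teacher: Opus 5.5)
Your proposal is correct and follows the paper's proof: apply \cref{lem:loss-barrier} with $\xi=(\mathbf h,\beta)$ and $\zeta=\mathbf s$, take its hypotheses from \cref{lem:robust-pl}, and get regularity on the closure from $\|u+\mathbf h\|>1/2$ together with the conserved negative imbalances. Your explicit bookkeeping ($c=\lambda_0/4$, $\lambda=\lambda_0/2$, $A=C/\sqrt{\lambda_0}$, which gives the factor $2C/\lambda_0^{3/2}$ absorbed into the universal constant) fills in what the paper leaves implicit.
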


\begin{proof}
Apply \cref{lem:loss-barrier} with $\xi=(\mathbf h,\beta)$ and
$\zeta=\mathbf s$, using \cref{lem:robust-pl}.
The loss gap excludes transverse exit and the stopped motion bound
excludes directional exit.  The coordinate field is regular on the
closure: $\|u+\mathbf h\|>1/2$, and the negative imbalance keeps every redundant
input norm positive, including when its coefficient is zero.
Coercivity then gives $(\mathbf h,\beta)\to0$, which is precisely the stated
specialization and coefficient-wise pruning.
\end{proof}

\subsection{Discrete full-batch gradient descent}
For gradient descent, the imbalance is no longer conserved. We use a linear scaling of the original parameters so that each discrete update remains an exact Euler step. This permits a trajectory comparison before selection and a direct bound on the accumulated imbalance drift afterward.

Consider population gradient descent in the original parameters,
\begin{equation}
\label{eq:gd}
 \Theta_{n+1}=\Theta_n-\eta\nabla_\Theta\cL(\Theta_n),\qquad \eta=h/D.
\end{equation}
Write $\cL_n=\cL(\Theta_n)$ and $\delta_{i,n}=a_{i,n}^2-\|w_{i,n}\|^2$.  Duplicate symmetry is preserved at every step.  Because the coordinate
change \eqref{eq:scaled-cartesian} is linear, this update is exactly
\begin{equation}
\label{eq:scaled-euler}
 \Xi_{n+1}=\Xi_n-h\mathsf M_D\nabla\mathcal E(\Xi_n).
\end{equation}
Thus the same Cartesian vector field governs both gradient flow and
gradient descent.
For comparison with the fast limit, write
\begin{equation}
\label{eq:pre-capture-state}
 Y_n=(q_n,P_n,\theta_n,\psi_n),\qquad
 Y_\infty(\tau)=(q_\infty(\tau),P_\infty(\tau),\theta_\infty(\tau),\phi).
\end{equation}

\begin{lemma}
\label{lem:discrete-shadow}
Let $T_c=T_\star+1$ and let $\nu>3/2$ be as in \cref{prop:mdep}.
There are $A,a>0$, depending only on $I_\phi$, such that
\begin{equation}
\label{eq:polynomial-step}
 D\ge Am^{\nu/2},\qquad 0<h\le a m^{-(\nu+1)}
\end{equation}
imply
\begin{equation}
\label{eq:discrete-shadow}
 \max_{nh\le T_c}\|Y_n-Y_\infty(nh)\|\le C_m(h+D^{-2}).
\end{equation}
All these steps, including their full Euler segments, stay in a compact
regular Cartesian neighborhood.  On these steps,
\begin{equation}
\label{eq:pre-descent}
 \cL_{n+1}\le\cL_n-\frac\eta2\|\nabla_\Theta\cL(\Theta_n)\|^2.
\end{equation}
\end{lemma}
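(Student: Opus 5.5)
The starting point is that \eqref{eq:scaled-euler} is exactly the explicit Euler scheme with step $h$ for the scaled gradient flow \eqref{eq:scaled-flow}. The proof therefore has two parts: a Gronwall estimate comparing the iterates with the exact finite-$D$ flow, and \cref{prop:mdep} (with $\Delta_0=0$) comparing that flow with the fast limit. I would argue by a stopping argument. Let $n_\ast$ be the first index at which the Euler segment from $\Xi_n$ to $\Xi_{n+1}$ leaves a compact regular Cartesian neighborhood $\mathcal N$. I take $\mathcal N$ to be a slight enlargement of the region \eqref{eq:regular-stopping-region}, intersected with $\{\|z\|\ge c_I/(8m)\}$, so that \eqref{eq:growing-q-margin} keeps $z$ away from zero. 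On $\mathcal N$ the kernel $K$ is $C^2$ by \eqref{eq:kernel-hessian}. Hence $\nabla\mathcal E$ is Lipschitz with constant $L_m\le C m$, the factor $m$ coming from $\|\nabla^2_{zz}K\|\le C/\|z\|$. The mobility satisfies $\|\mathsf M_D\|\le M$.

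\emph{Step 1 (local truncation).} Let $\Xi(\tau)$ be the exact scaled flow with the same initial point. On each step the one-step defect is at most $h^2\|\mathsf M_D\|^2 L_m\sup\|\nabla\mathcal E\|\le C m^3h^2$. Summing with discrete Gronwall over $n\le T_c/h$ gives $\|\Xi_n-\Xi(nh)\|\le C m^3 h\, e^{C m L_m T_c}$.

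The exponential factor $e^{Cm^2}$ is too large for the polynomial step condition, so this crude Gronwall must be replaced. Instead I would rerun the proof of \cref{lem:aggregate-stability} for the discrete error. The Euler defect is an additive forcing: it contributes to $r_z$ and $r_j$ a term of size $\epsilon=C m^{\kappa}h$, after noting that the $M$-stiff component $e_P$ is contracted by $1-hM/2$. This contraction requires $hM\le1$, which follows from \eqref{eq:polynomial-step}. The weighted functional $\mathcal V=e_z+|e_P|/M$ then satisfies the discrete analogue of \eqref{eq:weighted-error}, with the summed $1/q_\infty$ factor controlled by \eqref{eq:integrated-q-margin}. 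This yields $\sup\|\Xi_n-\Xi_\infty(nh)\|\le Cm^{\nu}(h+D^{-2})$. The condition $h\le am^{-(\nu+1)}$ keeps this error below $c_I/(4m)$, which closes the stopping argument: $n_\ast>T_c/h$.

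\emph{Step 2 (conversion to $Y$).} The functional coordinates $(q,P,\theta,\psi)$ are Lipschitz functions of $\Xi$ on $\mathcal N$ by \eqref{eq:scaled-visible}. Therefore \eqref{eq:discrete-shadow} follows from Step 1 together with \cref{prop:mdep}, with $C_m$ absorbing the powers of $m$.

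\emph{Step 3 (descent).} In the original parameters, the Hessian of $\cL$ along each full segment is bounded by $C_m D$. The $D$ comes from the factors $\sqrt D$ in the original weights, which undo the scaling in \eqref{eq:scaled-cartesian}; concretely, $\nabla^2_\Theta$ in original coordinates equals the scaled Hessian conjugated by $\mathrm{diag}(D^{\mp1/2})$. By Taylor's theorem,
\[
\cL_{n+1}\le\cL_n-\eta\|\nabla\cL\|^2+\tfrac12\eta^2C_mD\|\nabla\cL\|^2 ,
\]
and since $\eta C_mD=C_m h\le1$ after shrinking $a$, this gives \eqref{eq:pre-descent}.

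\emph{Main obstacle.} The hardest part is obtaining polynomial rather than exponential dependence on $m$. This requires carrying the aggregate/zero-sum splitting of \cref{lem:aggregate-stability} through the discrete scheme, including the stability of the stiff $P$-component under explicit Euler.
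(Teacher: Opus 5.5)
Your plan is essentially the paper's proof. The Euler defect enters as additive forcing of size $C(D^{-2}+mh)$ in the aggregate/zero-sum argument of \cref{lem:aggregate-stability}, the iterates are compared directly with the fast limit under the stopping region of \cref{prop:mdep}, and descent follows from Taylor's theorem on full segments; your original-coordinate bound $C_mD$ is the paper's scaled Hessian bound after the linear change of variables. The only technical difference is that the paper interpolates the iterates linearly, so the interpolant itself solves \eqref{eq:forced-active-system} and the lemma applies verbatim, with no discrete rerun and no separate Euler-stability check for the stiff $e_P$ mode.

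One bookkeeping fix: the stop closes only with the sharper bound $e_z\le\mathcal V\le Cm^{\nu-1}(D^{-2}+mh)$, not with your $Cm^{\nu}(h+D^{-2})$. Under $D\ge Am^{\nu/2}$ the term $m^{\nu}D^{-2}$ is only $O(A^{-2})$, which does not lie below $c_I/(4m)$.
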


\begin{proof}
Interpolate the exact update \eqref{eq:scaled-euler} linearly and
unaggregate its redundant amplitudes.  On each interval $[nh,(n+1)h]$,
the interpolant has the constant derivative of the finite-$D$ field
at its left endpoint.  Stop at the boundaries used in
\cref{prop:mdep}, including $e_z=q_\infty/2$.
Up to the stop, the residual is bounded, so each active component
moves by at most $Ch$ per step.  The slow derivatives still satisfy
$|b'|\le CD^{-2}$ and $\|v_j'\|\le CD^{-2}|p_{j,n}|$.
Their total changes are $O(D^{-2})$ on $[0,T_{\max}]$.
Also, the total amplitude moves by at most $Cmh$ within a step,
and $\|z\|\ge c_I/(2m)$.  The kernel Hessian bound
\eqref{eq:kernel-hessian} therefore gives the componentwise defect
\begin{equation}
\label{eq:euler-component-defect}
 \|r_z\|+\max_j|r_j|\le C(D^{-2}+mh)
\end{equation}
from \eqref{eq:forced-active-system}, almost everywhere along the
interpolant.  The same bound applies to a partial stopped segment:
its endpoints and joining lines remain in the enlarged regular neighborhood
when $mh$ is small compared with the $1/m$ radial margin.

Apply \cref{lem:aggregate-stability} with
$\epsilon=D^{-2}+mh$.  Condition \eqref{eq:polynomial-step} makes
$\epsilon\le c m^{-\nu}$ for a sufficiently small $c$, so the
stopping argument in \cref{prop:mdep} closes for complete Euler
segments as well.  In particular, $h\le1$ ensures that segments
starting before $T_c$ end before $T_{\max}$.
The change to coefficient and direction coordinates is Lipschitz on the
resulting regular neighborhood for fixed $m$, proving \eqref{eq:discrete-shadow}.
The aggregate estimate also gives the direct capture-output bound
\eqref{eq:capture-output-error}, with $\Delta_0+D^{-2}$ replaced
by $D^{-2}+mh$.

In the collapsed coordinates, the only growing kernel Hessian factor
on this neighborhood is $1/\|z\|$, so $\|\nabla^2\mathcal E\|\le C_I m$.
Put $Q_n=\langle\nabla\mathcal E(\Xi_n),\mathsf M_D\nabla\mathcal E(\Xi_n)\rangle$, where the brackets are the Euclidean inner product in scaled parameter space.
Since $\|\mathsf M_D\|\le m$ for $D\ge1$,
$\|\mathsf M_D\nabla\mathcal E\|^2\le mQ_n$.
Taylor's theorem on the complete segment gives a decrease of at least
$hQ_n/2$ when $h\le c_I/m^2$, which follows from
\eqref{eq:polynomial-step} after reducing $a$.
The chain rule, with redundant multiplicity included, gives
\begin{equation}
\label{eq:scaled-raw-dissipation}
 DQ_n=\|\nabla_\Theta\cL(\Theta_n)\|^2,
\end{equation}
and hence \eqref{eq:pre-descent}.
\end{proof}

\subsubsection{Local descent and direction motion}
Choose nested neighborhoods of the selected zero-loss manifold,
\begin{equation}
\label{eq:discrete-neighborhoods}
 \mathcal V_{\rm in}\Subset\mathcal V_{\rm mid}\Subset\mathcal V_{\rm out}.
\end{equation}
Here $\Subset$ means that the closure of each set lies in the interior of the next.  In scaled coordinates these are product neighborhoods in the transverse
variables $(q-1,P,\theta)$ and the remaining variables
$(b,\|v\|,\psi)$, centered at $(0,0,0)$ and $(1,1,\phi)$, respectively.
All three have $b,\|z\|,\|v\|$ bounded away from zero.
Choose the transverse radii $r_{\rm in}<r_{\rm mid}<r_{\rm out}$ so that
\begin{equation}
\label{eq:discrete-loss-gap}
 \sup_{\overline{\mathcal V_{\rm in}}}\mathcal E
 \le C r_{\rm in}^2<c r_{\rm mid}^2
 \le\inf_{\partial_\perp\mathcal V_{\rm mid}}\mathcal E.
\end{equation}
Here $\partial_\perp$ denotes the boundary in the transverse variables, and the last infimum allows all remaining coordinates in the closure of
their middle neighborhood.  The limiting path enters strictly inside
$\mathcal V_{\rm in}$ by time $T_\star$ after choosing the smaller entry
set in \cref{prop:mdep}.  All radii remain independent of $m,D$.

\begin{lemma}
\label{lem:scaled-local}
On $\mathcal V_{\rm out}$, uniformly for $m\ge2$ and $D\ge1$,
\begin{equation}
\label{eq:scaled-local-bounds}
 \|\nabla^2\mathcal E\|\le K_I,\qquad
 Q:=\langle\nabla\mathcal E,\mathsf M_D\nabla\mathcal E\rangle
 \ge c_I\mathcal E.
\end{equation}
Furthermore,
\begin{equation}
\label{eq:scaled-slow-bounds}
 |\partial_b\mathcal E|\le C_I\sqrt{\mathcal E},\qquad
 \|\nabla_v\mathcal E\|\le C_I|p|\sqrt{\mathcal E}
 \le C_I\mathcal E.
\end{equation}
\end{lemma}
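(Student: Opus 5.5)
The three bounds in \eqref{eq:scaled-local-bounds}--\eqref{eq:scaled-slow-bounds} use three separate ingredients: compactness with regularity of the kernel, the Gram-matrix coercivity of \cref{lem:tube}, and Cauchy--Schwarz on the slow gradients. The reduction that makes all of them uniform in $m$ and $D$ is the collapsed scaled coordinates $\Xi=(b,z,p,v)$ of \eqref{eq:scaled-cartesian}. In these coordinates the energy \eqref{eq:scaled-energy} contains neither $D$ nor $m$, so $m$ and $D$ enter only through the mobility $\mathsf M_D$.

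\emph{Hessian bound.} On $\mathcal V_{\rm out}$ the quantities $b$, $\|z\|$, $\|v\|$ are bounded away from zero and above, and $p$ is bounded since $P$ is small and $\|v\|$ is near one. The energy expands as
\[
\mathcal E=\tfrac12\bigl[b^2K(z,z)+p^2K(v,v)+K(u,u)\bigr]+bp\,K(z,v)-b\,K(z,u)-p\,K(v,u).
\]
Each term is a bounded polynomial weight times a kernel evaluation. By \eqref{eq:kernel-hessian} and homogeneity, $K$ and its first and second derivatives are bounded on this compact regular set. Hence $\|\nabla^2\mathcal E\|\le K_I$, and the radii were chosen independently of $m$ and $D$.

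\emph{Lower bound on $Q$.} Since $\mathsf M_D$ is diagonal with blocks $D^{-2}$, $I_d$, $M\ge1$, and $(MD^2)^{-1}I_d$, I would simply drop the two slow blocks:
\[
Q\ge\|\nabla_z\mathcal E\|^2+M(\partial_p\mathcal E)^2\ge\|\nabla_z\mathcal E\|^2+(\partial_p\mathcal E)^2.
\]
With the slow variables $(b,v)$ frozen, the map $(z,p)\mapsto\mathcal E$ is the transverse loss in the variables $\xi=(b z-u,\;p\|v\|)$, which up to a linear change of variables is the Cartesian version of $(q-1,\theta,P)$. This is the step I expect to be the main obstacle. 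The Hessian in these variables at the zero-loss point is the Gram matrix \eqref{eq:symmetric-gram} at $\psi$, after the linear substitution $z\mapsto bz$, $p\mapsto p\|v\|$. That substitution has bounded, uniformly invertible Jacobian because $b$ and $\|v\|$ stay near one.

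There is a subtlety: zero loss requires $bz=u$, so the zero set is a manifold parametrized by $(b,v)$, not a single point. I would handle this by applying \cref{lem:transverse-hessian} with $\zeta=(b,\|v\|,\psi)$ as the compact parameter and $\xi$ as the transverse variable. For each fixed $\zeta$, the loss vanishes at $z=u/b$, $p=0$. The Hessian there is uniformly positive, since $\lambda_{\min}(G_\psi)=\tfrac12-\sqrt{g^2+\omega^2}>0$ by $H(\psi)>0$ for $\psi$ near $I_\phi$. A component of $v$ perpendicular to the teacher plane changes only the angle between $v$ and $u$, so $\psi$ may be taken to be that angle. \Cref{lem:transverse-hessian} then gives $\|\nabla_{(z,p)}\mathcal E\|^2\ge c\,\mathcal E$ on a uniform neighborhood. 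Shrinking $\mathcal V_{\rm out}$ if needed, but keeping it independent of $m$ and $D$, yields $Q\ge c_I\mathcal E$.

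\emph{Slow-gradient bounds.} Let $R=f_\Xi-f_\star$. Then
\[
\partial_b\mathcal E=\langle R,[z^\top\cdot]_+\rangle,\qquad \nabla_v\mathcal E=p\,\langle R,\mathbf 1_{\{v^\top x>0\}}x\rangle.
\]
By Cauchy--Schwarz, $\|R\|=\sqrt{2\mathcal E}$, and $\|[z^\top\cdot]_+\|$ and the Gaussian second moment are bounded. This gives $|\partial_b\mathcal E|\le C\sqrt{\mathcal E}$ and $\|\nabla_v\mathcal E\|\le C|p|\sqrt{\mathcal E}$. The final inequality $|p|\sqrt{\mathcal E}\le C\mathcal E$ follows from the coercive lower bound $\mathcal E\ge c\|\xi\|^2\ge c'p^2$ in \cref{lem:transverse-hessian}, which uses $\|v\|\ge c$.
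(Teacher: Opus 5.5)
Your proposal is correct and follows the paper's proof. The Hessian bound comes from kernel regularity on the compact regular set. The bound $Q\ge c_I\mathcal E$ comes from discarding the two slow blocks of $\mathsf M_D$, using $M\ge1$, and applying the local Gram-matrix coercivity. The slow bounds come from Cauchy--Schwarz together with $|p|\le C\sqrt{\mathcal E}$.

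The only difference is presentational. The paper rewrites $\|\nabla_z\mathcal E\|^2+M(\partial_p\mathcal E)^2$ by the chain rule as $b^2(\partial_q\mathscr E)^2+\|z\|^{-2}(\partial_\theta\mathscr E)^2+M\|v\|^2(\partial_P\mathscr E)^2$ and cites \cref{lem:tube}, whereas you re-derive that lemma's content directly in the Cartesian variables via \cref{lem:transverse-hessian}. If $z$ is allowed off the teacher plane, the transverse Hessian is $G_\psi$ plus extra eigenvalues $1/2$, which does not affect the argument.
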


\begin{proof}
The collapsed loss and the chosen neighborhood do not depend on $m$.
The Gaussian Cartesian kernel has bounded second derivatives when its
arguments are nonzero and bounded, giving the Hessian estimate.
For the gradient bound, $q=b\|z\|$ and $P=p\|v\|$ give
\[
 \|\nabla_z\mathcal E\|^2+M(\partial_p\mathcal E)^2
 =b^2(\partial_q\mathscr E)^2
  +\|z\|^{-2}(\partial_\theta\mathscr E)^2
  +M\|v\|^2(\partial_P\mathscr E)^2\ge c_I\mathcal E
\]
by \cref{lem:tube}.  These terms are contained in $Q$.
If $R=f_\Xi-f_\star$, then
\[
 \partial_b\mathcal E=\langle R,[z^\top x]_+\rangle,\qquad
 \nabla_v\mathcal E=p\,\E[R\mathbf1_{\{v^\top X>0\}}X].
\]
Cauchy--Schwarz and $|p|\le C_I\sqrt{\mathcal E}$, from local coercivity
and the lower bound on $\|v\|$, prove \eqref{eq:scaled-slow-bounds}.
\end{proof}

\begin{lemma}
\label{lem:discrete-continuation}
Suppose an iterate enters $\mathcal V_{\rm in}$ at index $n_{\rm cap}$
and the preceding steps satisfy \eqref{eq:pre-descent}.
For $D\ge A_I$ and $0<h\le a_I/m$, with fixed $A_I,a_I>0$,
all later iterates stay in
$\mathcal V_{\rm mid}$, and
\begin{equation}
\label{eq:discrete-contract}
 \cL_{n+1}\le(1-c_Ih)\cL_n\qquad(n\ge n_{\rm cap}).
\end{equation}
The common redundant direction $s_{-,n}=v_n/\|v_n\|$ satisfies
\begin{equation}
\label{eq:discrete-redundant-path}
 \sum_{n\ge n_{\rm cap}}\|s_{-,n+1}-s_{-,n}\|
 \le\frac{C_I}{D^2}\cL_{n_{\rm cap}}.
\end{equation}
For every original neuron,
\begin{equation}
\label{eq:gauge-drift}
 \sum_{n\ge0}|\delta_{i,n+1}-\delta_{i,n}|
 \le2\eta\cL_0=\frac{2h}{D}\cL_0.
\end{equation}
\end{lemma}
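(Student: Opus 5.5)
The plan is to run the discrete analogue of \cref{lem:loss-barrier}, working throughout in the scaled coordinates $\Xi$. There, gradient descent is the exact Euler step \eqref{eq:scaled-euler}. The argument has three parts: prove one-step descent while the iterate stays in $\mathcal V_{\rm mid}$, use the loss gap \eqref{eq:discrete-loss-gap} to rule out a transverse exit, and use the summed slow motion to rule out exit through the remaining coordinates $(b,\|v\|,\psi)$.

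\textbf{One-step contraction.} Let $n\ge n_{\rm cap}$ and suppose $\Xi_n\in\mathcal V_{\rm mid}$. I first check that the whole segment $\Xi_n-\sigma h\mathsf M_D\nabla\mathcal E(\Xi_n)$, $\sigma\in[0,1]$, lies in $\mathcal V_{\rm out}$. The active components move by at most $h\|\nabla\mathcal E\|\le Ch$; this is where the redundant amplitude block contributes the factor $M$, so the move is at most $Cmh$. The slow components move by $O(h/D^2)$. Hence $h\le a_I/m$ keeps the segment inside $\mathcal V_{\rm out}$, because the radii are independent of $m$. Taylor's theorem with $\|\nabla^2\mathcal E\|\le K_I$ from \cref{lem:scaled-local} then gives
\[
 \mathcal E_{n+1}\le\mathcal E_n-hQ_n+\tfrac{K_I}{2}h^2\|\mathsf M_D\nabla\mathcal E\|^2 .
\]
The bound $\|\mathsf M_D\|\le m$ yields $\|\mathsf M_D\nabla\mathcal E\|^2\le mQ_n$, so for $h\le 1/(K_Im)$ we get $\mathcal E_{n+1}\le\mathcal E_n-hQ_n/2$. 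Combining this with $Q_n\ge c_I\mathcal E_n$ gives \eqref{eq:discrete-contract}, since $\cL=\mathcal E$.

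\textbf{Closing the continuation by induction.} The inductive hypothesis is that $\Xi_{n_{\rm cap}},\dots,\Xi_n\in\mathcal V_{\rm mid}$.
\begin{enumerate}
\item Losses are nonincreasing, so $\mathcal E_{n+1}\le\mathcal E_{n_{\rm cap}}\le\sup_{\mathcal V_{\rm in}}\mathcal E<\inf_{\partial_\perp\mathcal V_{\rm mid}}\mathcal E$.
\item Because the step segment stays in $\mathcal V_{\rm out}$, the transverse coordinates of $\Xi_{n+1}$ cannot lie on or beyond the transverse boundary while the slow coordinates remain in the closure of their middle neighborhood. A segment crossing the boundary would meet $\partial_\perp\mathcal V_{\rm mid}$. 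To exclude this I apply the Taylor decrease on the partial segment up to the crossing point, where the loss still does not exceed $\mathcal E_n$.
\item For the slow coordinates, \eqref{eq:scaled-slow-bounds} gives $|b_{n+1}-b_n|\le hD^{-2}C_I\sqrt{\mathcal E_n}$ and $\|v_{n+1}-v_n\|\le hM^{-1}D^{-2}C_I\mathcal E_n$.
\item Geometric decay gives $\sum_n h\mathcal E_n\le\mathcal E_{n_{\rm cap}}/c_I$ and $\sum_n h\sqrt{\mathcal E_n}\le C\sqrt{\mathcal E_{n_{\rm cap}}}/c_I$, using $1-\sqrt{1-c_Ih}\ge c_Ih/2$.
\item The total slow motion is therefore $O(D^{-2})$, and it stays below the fixed margin between $\mathcal V_{\rm in}$ and $\mathcal V_{\rm mid}$ once $D\ge A_I$.
\end{enumerate}
Since $\|v\|$ is bounded below, $\|s_{-,n+1}-s_{-,n}\|\le C\|v_{n+1}-v_n\|$. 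Summing then gives \eqref{eq:discrete-redundant-path}, with the $M^{-1}$ absorbed into $C_I$.

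\textbf{Imbalance drift.} For neuron $i$ I use the exact one-step identity
\[
 \delta_{i,n+1}-\delta_{i,n}=\eta^2\bigl(|\partial_{a_i}\cL|^2-\|\nabla_{w_i}\cL\|^2\bigr).
\]
The first-order terms $-2\eta(a\,\partial_a\cL-w^\top\nabla_w\cL)$ cancel by homogeneity, as in \eqref{eq:deltadot-proof}. Hence
\[
 |\delta_{i,n+1}-\delta_{i,n}|\le\eta^2\|\nabla_\Theta\cL(\Theta_n)\|^2 .
\]
Every step satisfies $\cL_{n+1}\le\cL_n-\tfrac\eta2\|\nabla_\Theta\cL\|^2$: before capture this is the hypothesis \eqref{eq:pre-descent}, and after capture it is the Taylor step above converted through \eqref{eq:scaled-raw-dissipation}. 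Telescoping gives $\sum_n\eta^2\|\nabla\cL\|^2\le2\eta\cL_0$, which is \eqref{eq:gauge-drift}.

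\textbf{Main obstacle.} I expect the hardest step to be controlling the full Euler segment rather than just its endpoints. The segment must stay in $\mathcal V_{\rm out}$ with the $m$-dependent step size, and the partial-segment argument must exclude a crossing of $\partial_\perp\mathcal V_{\rm mid}$ within a single step. Both need the $Cmh$ displacement bound to be small against radii that are independent of $m$.
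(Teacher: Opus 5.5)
Your proposal is correct and follows essentially the same route as the paper. It keeps the full Euler segment inside $\mathcal V_{\rm out}$ via $h\lesssim 1/m$ and applies the Taylor descent $\mathcal E\le\cL_n-\tfrac12 thQ_n$ with $\|\mathsf M_D\|\le m$; the loss gap excludes a transverse exit even at intermediate points of a step; the summed $O(D^{-2})$ slow motion excludes exit through $(b,\|v\|,\psi)$ and, through Lipschitz normalization, gives the redundant path bound; and the exact homogeneity identity for $\delta_{i,n+1}-\delta_{i,n}$ is telescoped against the per-step descent. Your induction over iterates is simply the discrete form of the paper's first-exit argument.
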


\begin{proof}
Work up to the first exit of a complete Euler segment from
$\mathcal V_{\rm mid}$.  An iterate in its closure has a fixed positive
distance from the complement of $\mathcal V_{\rm out}$.
The vector field has norm at most $C_I m$ there, so $h\le c_I/m$
places its entire segment in $\mathcal V_{\rm out}$.
For $0\le t\le1$, Taylor's theorem and \cref{lem:scaled-local} give
\begin{align}
 \mathcal E(\Xi_n-th\mathsf M_D\nabla\mathcal E(\Xi_n))
 &\le\cL_n-thQ_n+\tfrac12K_I t^2h^2mQ_n\nonumber\\
 &\le\cL_n-\tfrac12thQ_n
 \qquad(hK_Im\le1).
 \label{eq:segment-descent}
\end{align}
At $t=1$ this proves \eqref{eq:discrete-contract}, with a smaller $c_I$.
At every intermediate $t$ the loss is at most $\cL_n$, so the strict
gap \eqref{eq:discrete-loss-gap} excludes a transverse first exit.

The other coordinates have small total motion directly from
\eqref{eq:scaled-euler} and \eqref{eq:scaled-slow-bounds}:
\begin{equation}
\label{eq:direct-slow-step}
 |b_{n+1}-b_n|\le\frac{C_Ih}{D^2}\sqrt{\cL_n},\qquad
 \|v_{n+1}-v_n\|\le\frac{C_Ih}{MD^2}\cL_n.
\end{equation}
Summing the geometric loss bound, up to the stopped segment and including
any fraction of its last step, yields
\[
 \sum|b_{n+1}-b_n|\le C_ID^{-2}\sqrt{\cL_{n_{\rm cap}}},\qquad
 \sum\|v_{n+1}-v_n\|\le\frac{C_I}{MD^2}\cL_{n_{\rm cap}}.
\]
For large $D$, these are smaller than the fixed margins from the inner
neighborhood to the $b$, radial, and directional boundaries of the middle
neighborhood.  Thus no first exit is possible.  Normalization of $v$ is
Lipschitz on this regular set, proving \eqref{eq:discrete-redundant-path}.

Finally, homogeneity gives
$a_i\partial_{a_i}\cL=w_i^\top\nabla_{w_i}\cL$ and hence the exact identity
\begin{equation}
\label{eq:gauge-step}
 \delta_{i,n+1}-\delta_{i,n}
 =\eta^2\bigl[(\partial_{a_i}\cL_n)^2-
                    \|\nabla_{w_i}\cL_n\|^2\bigr].
\end{equation}
The pre-selection descent and \eqref{eq:segment-descent}, together with
\eqref{eq:scaled-raw-dissipation}, give
$\cL_{n+1}\le\cL_n-\eta\|\nabla_\Theta\cL_n\|^2/2$ at every step.
Taking absolute values in \eqref{eq:gauge-step} and telescoping the loss
therefore proves \eqref{eq:gauge-drift}.
\end{proof}

\begin{theorem}
\label{thm:discrete-capture}
For $m\ge2$ and $\phi\in I_\phi$, duplicate-initialized gradient
descent with $\eta=h/D$ satisfying \eqref{eq:polynomial-step}
(with adjusted $A,a$)
obeys \eqref{eq:discrete-shadow} and enters $\mathcal V_{\rm in}$ within
$\lceil T_c/h\rceil$ iterations.  All subsequent iterates remain in
$\mathcal V_{\rm mid}$ and satisfy \eqref{eq:discrete-contract},
\eqref{eq:discrete-redundant-path}, and \eqref{eq:gauge-drift}.
Moreover, $q_n\to1$, $P_n\to0$, and $\theta_n\to0$:
the designated unit specializes and every redundant coefficient vanishes.
\end{theorem}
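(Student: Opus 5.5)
The proof assembles the three preceding lemmas: \cref{lem:discrete-shadow} handles the pre-selection phase, \cref{lem:discrete-continuation} handles the post-selection phase, and an entry argument joins them. First fix the step-size and imbalance thresholds. Take $A,a$ as in \cref{lem:discrete-shadow}, reduce $a$ so that $am^{-(\nu+1)}\le a_I/m$, and enlarge $A$ so that $Am^{\nu/2}\ge A_I$. Then \eqref{eq:polynomial-step} implies the hypotheses of \cref{lem:discrete-continuation} as well. With these choices, \cref{lem:discrete-shadow} gives the shadowing bound \eqref{eq:discrete-shadow} for all $nh\le T_c$. It also gives per-step descent \eqref{eq:pre-descent} and confinement of every full Euler segment to a compact regular Cartesian set.

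The entry step comes next. By \cref{thm:fast-global} and the choice of $T_\star$, the limiting path $Y_\infty$ enters strictly inside $\mathcal V_{\rm in}$ by fast time $T_\star$, with a margin independent of $m,D$. The redundant angle stays at $\phi$, and $b$ and $\|v\|$ stay near $1$ by the slow bounds. Take the first grid point $nh\ge\tau^\infty_{\rm in}$; it satisfies $nh\le T_\star+h\le T_c$ because $h\le1$. The shadowing error $C_m(h+D^{-2})$ is below that margin once $a,A$ are adjusted further. This is the sense of ``with adjusted $A,a$''. The slow-coordinate errors are $O(D^{-2})$ and therefore also fit inside the margin. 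So some iterate with index at most $\lceil T_c/h\rceil$ lies in $\mathcal V_{\rm in}$, and the first such index $n_{\rm sel}$ obeys the same bound.

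Apply \cref{lem:discrete-continuation} at $n_{\rm cap}=n_{\rm sel}$. Its hypothesis, descent on the preceding steps, is exactly \eqref{eq:pre-descent}. The lemma yields confinement to $\mathcal V_{\rm mid}$, the contraction \eqref{eq:discrete-contract}, the direction-path bound \eqref{eq:discrete-redundant-path}, and the imbalance drift bound \eqref{eq:gauge-drift}. Since $\cL_n\le(1-c_Ih)^{n-n_{\rm sel}}\cL_{n_{\rm sel}}\to0$, the loss along the iterates tends to zero. Inside $\mathcal V_{\rm mid}$ the redundant angle $\psi$ stays in a compact subinterval of $(0,\pi)$ near $\phi$. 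There \cref{lem:tube}, through the positive Gram matrix $G_\psi$, gives $\mathscr E\ge c\rho^2$ uniformly in $\psi$. Hence $\rho_n=\|(q_n-1,\theta_n,P_n)\|\to0$. By symmetry, $q_{j,n}=P_n/(m-1)\to0$ for each redundant unit.

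The main obstacle is the entry step. Shadowing only controls the collapsed state $Y_n$, while membership in $\mathcal V_{\rm in}$ is stated in scaled coordinates that include $b$ and $\|v\|$, and the radii must be chosen independently of $m,D$ while the shadowing constant $C_m$ depends on $m$. To handle this, I would first fix the nested radii from \eqref{eq:discrete-loss-gap} using only $I_\phi$. Only then would I let the $m$-dependent thresholds $h_0(m)$ and $\widetilde D_0(m)$ absorb $C_m$, using the Lipschitz change of coordinates on the regular set established in \cref{lem:discrete-shadow}.
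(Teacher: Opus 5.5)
Your architecture is the same as the paper's: shadowing up to $T_c$ via \cref{lem:discrete-shadow}, entry into $\mathcal V_{\rm in}$, hand-off to \cref{lem:discrete-continuation}, then local coercivity and $q_{j,n}=P_n/M$. The gap is in the entry step, and it concerns the uniformity in $m$ that the statement asserts. In \cref{thm:discrete-capture} the hypothesis is \eqref{eq:polynomial-step}, i.e.\ $D\ge Am^{\nu/2}$ and $0<h\le am^{-(\nu+1)}$, with $A,a$ depending only on $I_\phi$. Here ``adjusted'' means replaced by other $I_\phi$-dependent constants, not by $m$-dependent ones. You instead evaluate the uniform-in-time bound \eqref{eq:discrete-shadow} at entry and, as you say yourself, let $m$-dependent thresholds absorb $C_m$. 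Nothing quantifies $C_m$ in $m$. It is a supremum over all of $[0,T_c]$, including the early phase where $\|z\|$ is only of order $1/m$, so converting $z$ to the angle $\theta$ costs a factor of order $m$. In that phase the amplitude error $|e_P|$ is also only bounded by about $CM\epsilon$. Your argument therefore gives the conclusion for thresholds $h_0(m),\widetilde D_0(m)$ of unspecified growth. That suffices for \cref{thm:main-discrete}, but not for the claim made here under \eqref{eq:polynomial-step} with $I_\phi$-only constants.

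The missing step is to measure the error only at the reference entry time, where both defects disappear. Apply \cref{lem:aggregate-stability} directly with $\epsilon=D^{-2}+mh$, as in the proof of \cref{lem:discrete-shadow}. The reference entry time is at least $1/2$, so the transient $CM\epsilon e^{-M\tau/2}$ is $O(\epsilon)$ and $e_z+|e_P|\le Cm^{\nu-1}\epsilon$. Also $q_\infty\ge 3/4$ there, so the map $z\mapsto(q,\theta)$ is Lipschitz with an $m$-independent constant. The selected-unit and total-amplitude errors at that time are therefore at most $Cm^{\nu-1}(D^{-2}+mh)\le C(A^{-2}+a)m^{-1}$. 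Passing to the next iterate adds $O(mh)\le O(am^{-\nu})$, because $P$ has fast-time velocity $O(M)$. Meanwhile $b,\|v\|,\psi$ move only $O(D^{-2})$. Taking $A$ large and $a$ small, depending only on $I_\phi$, then puts the iterate inside $\mathcal V_{\rm in}$ with its fixed margin. The rest of your proposal agrees with the paper's proof: choosing $a\le a_I$ and $A\ge A_I$, invoking \cref{lem:discrete-continuation} with \eqref{eq:pre-descent} as the pre-entry descent, and concluding via \cref{lem:tube} and exchange symmetry.
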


\begin{proof}
At reference entry, the interpolation estimates in
\cref{lem:discrete-shadow} bound the selected and total-amplitude
errors by $Cm^{\nu-1}(D^{-2}+mh)$.  Rounding this time up to the
next iterate adds at most $Cmh$, since the active component velocities
are bounded.  Equation \eqref{eq:polynomial-step} makes these errors
smaller than the fixed entry margins, uniformly in $m$, giving entry
before $T_c$.  The same condition implies $D\ge A_I$ and $h\le a_I/m$.
Apply \cref{lem:discrete-continuation}.  Geometric loss decay and local
coercivity imply $(q_n,P_n,\theta_n)\to(1,0,0)$.
Exact exchange symmetry gives $q_{j,n}=P_n/M\to0$ for every redundant unit.
All constants are uniform over the fixed compact interval $I_\phi$.
\end{proof}

\section{Experimental and numerical validation}\label{app:validation}
We use the four-coordinate states $Y_\infty=(q_\infty,P_\infty,\theta_\infty,\phi)$ for the limit and $Y_D=(q_D,P_D,\theta_D,\psi_D)$ for the exact symmetric flow.  We compare these with training in the original parameters $(a_i,w_i)$.  Here $q$ and $\theta$ describe the selected neuron, while $P$ and $\psi$ describe the total coefficient and common direction of the redundant neurons.  Original-parameter training uses either the population loss or the empirical loss on independent samples $x_n\sim\mathcal N(0,I_2)$ with labels $y_n=[u^\top x_n]_+$.

Write $s_{\rm sel}$ and $s_{\rm red}$ for the selected and common redundant directions, and use fast time $\tau=Dt$.  In this appendix, $t_{\rm cap}$ denotes the first time the numerical selection criterion \eqref{eq:capture} is met, and $T>0$ is a fixed fast-time horizon.  For every fixed multiplicity, the theory predicts
\begin{equation}
\label{eq:validation-predictions}
\begin{aligned}
 t_{\rm cap}&=\Theta_m(D^{-1}),\\
 \int_{t_{\rm cap}}^\infty\!\|\dot s_{\rm red}(t)\|\,\dd t&=O_m(D^{-2}),\\
 \sup_{0\le\tau\le T}\|Y_D(\tau)-Y_\infty(\tau)\|_\infty&=O_{m,T}(D^{-2}).
\end{aligned}
\end{equation}

Here $\|\cdot\|_\infty$ is the maximum absolute coordinate.  Table~\ref{tab:single-protocol} lists the single-teacher configurations; the duplicate population and empirical studies use $\phi=\pi/4$.  Population direction lengths are integrated to the stated endpoint to approximate the infinite-time integral in \eqref{eq:validation-predictions}.  For each empirical width and seed, samples are nested across $N$.  Paired-bootstrap intervals resample the seed rows jointly across sample sizes, using 10,000 replicates.

\begin{table}[htbp]
\centering
\caption{Single-teacher experimental settings.  The population and empirical studies use duplicate initialization.}
\label{tab:single-protocol}
\small
\begin{tabular}{@{}p{0.17\linewidth}p{0.75\linewidth}@{}}
\toprule
Study & Configuration\\
\midrule
Single neuron & $q_\star=1$, $q_0=0.6$, $c_0=0$, $\varepsilon=0.05$;
$D\in\{2,4,8,16,32,64,128\}$\\[3pt]
Population & $m\in\{2,4,8,16,32\}$, $D\in\{8,16,32,64,128\}$;
endpoint $\tau=80$\\[3pt]
Empirical & $m\in\{2,4,8,16\}$, $D=32$,
$N\in\{512,1024,2048,4096,8192\}$;
20 coupled seeds, endpoint $\tau=60$\\
\bottomrule
\end{tabular}
\end{table}

For the exact trajectories $(q_D^\pm,c_D^\pm)$ in physical time, the single-unit errors in \cref{fig:diagnostics}(a) are
\begin{equation}
\label{eq:single-unit-errors}
\begin{aligned}
 e_+(D)&=\sup_{0\le\tau\le4}
 \bigl\|(q_D^+(\tau/D),c_D^+(\tau/D))-(Q^+(\tau),C^+(\tau))\bigr\|_2,\\
 e_-(D)&=\sup_{0\le\sigma\le2}
 \bigl|c_D^-(D\sigma)-C^-(\sigma)\bigr|.
\end{aligned}
\end{equation}
Here $(Q^+,C^+)$ and $C^-$ are the fast and slow limiting solutions of \eqref{eq:fast-plus-limit} and \eqref{eq:slow-minus-limit}.  Both intervals include zero.  The slow error measures alignment only.  Its coefficient approximation applies after the initial layer, on $\sigma\ge\sigma_0>0$, as in Proposition~\ref{prop:slow-minus}.  At zero the slow coefficient is $2\kappa(0)=1/\pi$, whereas $q_0=0.6$.  Supremum errors are approximated on 1,001 equally spaced time points, using $D\in\{4,8,16,32,64,128\}$.  The population and finite-sample sweeps use the selection criterion below, where $q_{\rm sel}=q$, $P_{\rm red}=P$, and $\theta_{\rm sel}=\theta$:
\begin{equation}
q_{\mathrm{sel}}\ge0.90,
\qquad
|P_{\mathrm{red}}|\le0.10,
\qquad
|\theta_{\mathrm{sel}}|\le0.10.
\label{eq:capture}
\end{equation}

The coefficient-weighted trajectory metric is defined in \eqref{eq:functionalstate}--\eqref{eq:sampleerror}.

\subsection{Agreement across dynamical descriptions}
\Cref{fig:trajectories} compares the four descriptions at $(m,D,N)=(8,32,8192)$.  Panel (c) plots the selected angle $|\theta|$ and the magnitude $|P\sin\psi|$ of the redundant coefficient-weighted direction perpendicular to the teacher.  Panel (d) uses the two terms in \eqref{eq:main-dissipation}, divided by $D$; for empirical training, their expectations are sample averages.  All four descriptions select the $+D$ unit and show the redundant coefficient decaying as the loss falls. The direction metrics distinguish this coefficient decay from alignment of the redundant feature.  The large-$D$ limit follows the coefficient-weighted coordinates with an $O(D^{-2})$ correction for redundant-direction motion.  The reduced and original-parameter systems agree to numerical precision across the population grid.  The full empirical system likewise agrees with its symmetry-reduced form, and permutation symmetry keeps the duplicate units identical.

\subsection{Scaling with imbalance magnitude and multiplicity}
Panels (a)--(b) of \cref{fig:scaling}, panel (b) of \cref{fig:diagnostics}, and \cref{tab:population} test the three predictions in \cref{eq:validation-predictions}.  The fitted exponents remain stable from $m=2$ through $m=32$.

\begin{table}[htbp]
\centering
\caption{Population scaling laws. All regressions use $D\in\{8,16,32,64,128\}$.  Here $\ell_{\rm red}$ denotes the post-selection path length of the common redundant direction, ``limit slope'' refers to the last error in \eqref{eq:validation-predictions}, and $R^2$ is the coefficient of determination of the log--log regression.}
\label{tab:population}
\small
\begin{tabular}{@{}rrrrrrr@{}}
\toprule
$m$ & $t_{\rm cap}$ slope & $R^2$ & $\ell_{\rm red}$ slope & $R^2$ & limit slope & $R^2$\\
\midrule
2  & -1.0019 & $>0.9999$ & -1.9991 & $>0.9999$ & -1.9981 & $>0.9999$\\
4  & -0.9982 & $>0.9999$ & -1.9965 & $>0.9999$ & -1.9983 & $>0.9999$\\
8  & -0.9967 & $>0.9999$ & -1.9945 & $>0.9999$ & -1.9926 & $>0.9999$\\
16 & -0.9951 & $>0.9999$ & -1.9964 & $>0.9999$ & -1.9941 & $>0.9999$\\
32 & -0.9964 & $>0.9999$ & -1.9907 & $>0.9999$ & -1.9946 & $>0.9999$\\
\bottomrule
\end{tabular}
\vspace{0.5em}

\begin{tabular}{@{}rrr@{}}
\toprule
$m$ & $Dt_{\rm cap}$ & $D^2\ell_{\rm red}$\\
\midrule
2  & 30.24 & $1.59\times10^{-2}$\\
4  & 24.16 & $3.78\times10^{-3}$\\
8  & 21.84 & $1.44\times10^{-3}$\\
16 & 20.82 & $6.40\times10^{-4}$\\
32 & 20.34 & $3.03\times10^{-4}$\\
\bottomrule
\end{tabular}
\end{table}

Both rescaled quantities remain bounded over the tested multiplicities, consistent with the fixed-$m$ predictions.

\subsection{Finite-sample trajectory scaling}
Table~\ref{tab:samples} shows an approximately linear relation between log trajectory error and log sample size at each width.  Every empirical run satisfies the selection criterion \eqref{eq:capture}.

\begin{table}[htbp]
\centering
\caption{Finite-sample full-trajectory scaling at $D=32$ using 20 coupled seeds. The error is defined in \eqref{eq:sampleerror}.}
\label{tab:samples}
\small
\begin{tabular}{@{}rrrrr@{}}
\toprule
$m$ & slope & $R^2$ & mean $\sqrt N\,\err_N$ & selection rate\\
\midrule
2  & -0.4908 & 0.9970 & 0.2316 & 20/20 at every $N$\\
4  & -0.4482 & 0.9589 & 0.2724 & 20/20 at every $N$\\
8  & -0.4718 & 0.9753 & 0.2356 & 20/20 at every $N$\\
16 & -0.4942 & 0.9858 & 0.2552 & 20/20 at every $N$\\
\bottomrule
\end{tabular}
\end{table}

The slopes are consistent with $N^{-1/2}$ over the tested fixed-dimensional configurations.  The weighting in \eqref{eq:functionalstate} keeps the metric well defined as the redundant coefficient approaches zero.

\subsection{Robustness and discrete training}
The angle, perturbation, multiplicity, and discrete-training studies use the selection criterion
\begin{equation}
 q_1>0.90,
 \qquad
 s_1^\top u>\cos(0.10),
 \qquad
 \|f_{\rm red}\|_{L^2}<0.10.
 \label{eq:capture-robustness}
\end{equation}
Table~\ref{tab:robust-protocol} lists the perturbation and discrete configurations.  For the near-duplicate study, starting from $q_i=1/m$, we draw a vector $z$ with independent standard-Gaussian entries, center and rescale it so that $\max_i|z_i|=1$, and set $q_i=(1/m)(1+\epsilon z_i)$.  Initial-angle perturbations and imbalance multipliers are sampled independently and uniformly from $[-\epsilon,\epsilon]$ and $[1-\epsilon,1+\epsilon]$, respectively, with the signs of $\delta_i$ preserved.  Each trajectory is integrated to $\tau=50$.

The discrete comparison records the maximum componentwise error in $(q_i,q_i s_i)_{i=1}^m$ between discrete and continuous original-parameter trajectories.

\begin{table}[htbp]
\centering
\caption{Perturbation and discrete-training configurations.  The first four studies end at $\tau=50$, and the direction-motion study ends at $\tau=80$.}
\label{tab:robust-protocol}
\small
\begin{tabular}{@{}p{0.19\linewidth}p{0.73\linewidth}@{}}
\toprule
Study & Configuration\\
\midrule
Initial angle & $D=32$, $m\in\{2,8,32\}$; eight equally spaced angles in $[0.77,0.90]$\\[3pt]
Width & $D=64$, $\phi=\pi/4$, $m\in\{2,4,8,16,32,64,128,256\}$\\[3pt]
Near duplicates & $(m,D,\phi)=(8,32,\pi/4)$;
$\epsilon\in\{10^{-4},10^{-3},10^{-2},0.03,0.05\}$, ten seeds each\\[3pt]
Discrete error & $(m,D,\phi)=(8,64,\pi/4)$;
$h\in\{0.025,0.05,0.1,0.2,0.4\}$\\[3pt]
Direction motion & $m=8$, $\phi=\pi/4$, $D\in\{16,24,32,48,64,96,128\}$;
$h\in\{0.2,0.1,0.05\}$, $\eta=h/D$\\
\bottomrule
\end{tabular}
\end{table}

\subsection{Post-selection direction motion under gradient descent}
The discrete-time analysis predicts $D^{-2}$ scaling of the redundant-direction path length.  For the configurations in \cref{tab:robust-protocol}, we sum the direction increments after the selection criterion \eqref{eq:capture-robustness} is first met.  Table~\ref{tab:discrete-transport} reports the fitted exponents and rescaled path lengths.

\begin{table}[htbp]
\centering
\caption{Post-selection path length of the common redundant direction under original-parameter gradient descent.  The exponent is fitted against $D$.}
\label{tab:discrete-transport}
\small
\begin{tabular}{@{}rrrr@{}}
\toprule
$h$ & fitted $D$ exponent & $R^2$ & mean $D^2\sum_n\|s_{n+1}-s_n\|$\\
\midrule
0.20 & -1.9953 & $>0.9999$ & $1.9029\times10^{-3}$\\
0.10 & -1.9952 & $>0.9999$ & $1.9294\times10^{-3}$\\
0.05 & -1.9975 & $>0.9999$ & $1.9305\times10^{-3}$\\
\bottomrule
\end{tabular}
\end{table}
For each fixed $h$, the rescaled path is nearly constant across the tested values of $D$, consistent with the predicted $D^{-2}$ scaling.

\begin{figure}[tbp]
\centering
\includegraphics[width=0.98\linewidth]{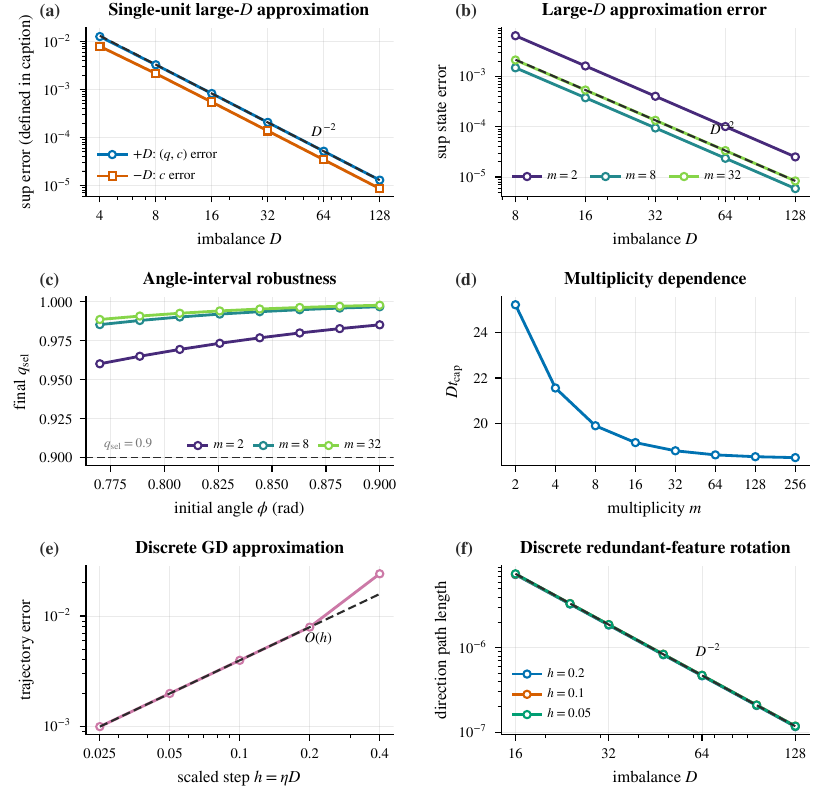}
\caption{Scaling and robustness results. (a) Single-unit errors \eqref{eq:single-unit-errors}: $e_+$ compares $(q,c)$ on $\tau\in[0,4]$, and $e_-$ compares only $c$ on $\sigma\in[0,2]$, both including zero. (b) Convergence of the exact reduced flow to its large-$D$ limit. (c) Selection across the tested angles and multiplicities. (d) Rescaled selection time across multiplicities. (e) Discrete-to-continuous trajectory error as a function of $h=\eta D$. (f) Post-selection motion of the redundant direction. Dashed lines show the predicted orders.}
\label{fig:diagnostics}
\end{figure}

\subsection{Two-teacher competition with nonduplicate students}
\label{sec:two-teacher}
Let $e_1,\ldots,e_{10}$ be the standard basis of $\R^{10}$.  We use $x\sim\mathcal N(0,I_{10})$ and
\[
 f_\star(x)=[u_1^\top x]_++0.8[u_2^\top x]_+,
 \qquad
 u_{1,2}=\cos(0.7)e_1\mp\sin(0.7)e_2.
\]
Eight students have initial directions
\[
 s_i(0)=\frac{\cos\theta_i e_1+\sin\theta_i e_2+0.12z_i}
 {\|\cos\theta_i e_1+\sin\theta_i e_2+0.12z_i\|},
 \qquad z_i\sim\mathcal N(0,I_8)\text{ in }\operatorname{span}(e_3,\ldots,e_{10}),
\]
where
$(\theta_1,\ldots,\theta_8)=(-0.45,-0.15,0.15,0.45,-1.1,1.1,-1.5,1.5)$.
Independently, $q_i(0)$ is uniform on $[0.15,0.25]$.  The smallest pairwise initial student angle over all 20 seeds is $0.349$ radians.  Thus the students are distinct and do not lie in the teacher plane.

Configuration A designates units $(1,4)$, configuration B designates $(2,3)$, and the same-side configuration designates $(1,2)$.  All three share the same initial coefficients and directions for each seed, and each is tested at $D\in\{16,32,64\}$.  The designated pair has imbalance $+D$, and all other neurons have imbalance $-D$.  Reconstruction through \eqref{eq:inverse-chart} changes only the parameter scales and preserves the predictor for every input.  Each neuron retains its own initial coefficient and direction across configurations.

We integrate the original-parameter population flow using the analytic Gaussian ReLU kernel, with DOP853 relative tolerance $10^{-9}$, absolute tolerance $10^{-11}$, and maximum fast-time step $2$.  Each of the 180 runs ($20$ seeds, $3$ imbalance magnitudes, $3$ assignments) ends at $\tau=Dt=4000$.  For the designated pair $(i_1,i_2)$ and teacher coefficients $(b_1,b_2)=(1,0.8)$, the endpoint success criterion is
\begin{equation}
\label{eq:two-teacher-success}
 \max_{k=1,2}\angle(s_{i_k},u_k)<0.05,
 \quad
 \max_{k=1,2}|q_{i_k}-b_k|<0.05,
 \quad
 \max_{j\notin\{i_1,i_2\}}|q_j|<0.01,
 \qquad(b_1,b_2)=(1,0.8).
\end{equation}
We evaluate this joint criterion at the common endpoint for all seeds.

\begin{table}[htbp]
\centering
\caption{Two-teacher endpoint results at $\tau=4000$.  The last column is the largest coefficient magnitude among neurons with negative imbalance over all students and seeds in the condition.}
\label{tab:two-teacher}
\small
\begin{tabular}{@{}rllr@{}}
\toprule
$D$ & neurons with $+D$ & success count & maximum redundant $|q_j|$\\
\midrule
16 & A: $(1,4)$ & 20/20 & $4.29\times10^{-7}$\\
16 & B: $(2,3)$ & 20/20 & $6.14\times10^{-4}$\\
16 & same side: $(1,2)$ & 15/20 & $8.97\times10^{-1}$\\
32 & A: $(1,4)$ & 20/20 & $4.30\times10^{-7}$\\
32 & B: $(2,3)$ & 20/20 & $5.57\times10^{-4}$\\
32 & same side: $(1,2)$ & 17/20 & $6.03\times10^{-1}$\\
64 & A: $(1,4)$ & 20/20 & $4.30\times10^{-7}$\\
64 & B: $(2,3)$ & 20/20 & $5.44\times10^{-4}$\\
64 & same side: $(1,2)$ & 17/20 & $5.64\times10^{-1}$\\
\bottomrule
\end{tabular}
\end{table}

Configurations A and B yield the specified teacher assignments across all seeds, as shown in \cref{fig:two-teacher,tab:two-teacher}.  The same-side configuration produces both assignments to distinct teachers and solutions that distribute the second feature among additional neurons.  In the latter case, both designated neurons can remain near the first teacher.  These outcomes exhibit the interaction between scale imbalance and initial direction in feature competition.

Table~\ref{tab:two-teacher-accuracy} reports alignment and numerical accuracy.  The tighter-tolerance comparison repeats all three configurations at $D=32$ for seeds 0 and 19, using relative and absolute tolerances $10^{-11}$ and $10^{-13}$.  We compare trajectories on a common grid containing zero and 600 logarithmically spaced points from $0.01$ to $4000$.

\begin{table}[htbp]
\centering
\caption{Alignment and numerical checks for the two-teacher experiment.  Values are maxima over the corresponding runs.}
\label{tab:two-teacher-accuracy}
\small
\begin{tabular}{@{}lr@{}}
\toprule
Quantity & Upper bound\\
\midrule
Assigned-teacher angle in A and B (radians) & $1.52\times10^{-4}$\\
Initial coefficient reconstruction error & $5.6\times10^{-17}$\\
Final imbalance drift & $1.2\times10^{-12}$\\
Coefficient and alignment change under tighter tolerances & $2.2\times10^{-8}$\\
\bottomrule
\end{tabular}
\end{table}

\subsection{Equal-scale control and single-student comparison}
The equal-scale control in \cref{fig:same-predictor} uses $m=8$, $D=32$, $q_i(0)=1/8$, $s_i(0)=s_0$, and $\phi=\pi/4$, as in the mixed-imbalance trajectory. All students have $\delta_i=+D$. The initial functional contribution of every student is therefore identical across the two configurations. Exact symmetry permits integration of the original input and output weights of one representative student, with all eight contributions included in the residual. We use DOP853 with relative tolerance $10^{-11}$, absolute tolerance $10^{-13}$, and maximum fast-time step $0.1$, recording the same grid on $0\le\tau\le60$ as the mixed configuration. Tightening both tolerances by a factor of ten changes the recorded parameters by less than $1.1\times10^{-13}$; the maximum imbalance drift is below $4.3\times10^{-14}$.

\Cref{fig:single-paths} displays the single-student comparison underlying \cref{thm:main-gap,cor:main-no-clock}. Both students start at $(q_0,c_0)=(0.6,0)$ with teacher coefficient $q_\star=1$. The phase curves use $D=16$ and stop at alignment $c=0.95$. The time ratio uses this same alignment threshold and the imbalance values in \cref{tab:single-protocol}. The points are the archived hitting-time measurements used to test the quadratic separation.

\end{document}